\crefname{appsec}{Appendix}{Appendices}
\DeclareMathOperator*{\argmin}{argmin}
\DeclareMathOperator{\diag}{diag}
\newcommand{\distconv}{\xrightarrow{D}}
\newcommand{\wassconv}[1][]{\xrightarrow{{\mathcal W}_{#1}}}
\newcommand{\ud}{\mathrm d}
\newcommand{\dx}{\ud x}
\newcommand{\mudx}{\operatorname{\mu}(\dx)}
\DeclareMathOperator{\D}{\mathcal D}
\DeclareMathOperator{\E}{\mathbb E}
\newcommand{\h}{\mathcal H}
\newcommand{\hs}{\mathrm{HS}}
\DeclareMathOperator{\N}{\mathcal N}
\newcommand{\R}{\mathbb R}
\newcommand{\PP}{\mathbb P}
\newcommand{\QQ}{\mathbb Q}
\DeclareMathOperator{\cond}{cond}
\newcommand{\tp}{^\mathsf{T}}
\newcommand{\x}{\mathcal X}
\newcommand{\z}{\mathcal Z}
\newcommand{\lip}{\mathrm{Lip}}
\newcommand{\pushforward}{\scalebox{.85}{\#}}
\DeclareMathOperator{\sigmamin}{\sigma_{min}}
\DeclareMathOperator{\MMD}{MMD}
\DeclareMathOperator{\GCMMD}{GCMMD}
\DeclareMathOperator{\SMMD}{SMMD}
\DeclareMathOperator{\LipMMD}{LipMMD}
\newcommand{\optMMD}[1][\Psi]{\operatorname{\mathcal D_{\mathrm{MMD}}^{#1}}}
\NewDocumentCommand{\optLipMMD}{O{\Psi} O{\lambda}}{\operatorname{\mathcal D_{\mathrm{LipMMD}}^{#1,#2}}}
\NewDocumentCommand{\optGCMMD}{O{\Psi} O{\mu} O{\lambda}}{\operatorname{\mathcal D_{\mathrm{GCMMD}}^{#2,#1,#3}}}
\NewDocumentCommand{\optSMMD}{O{\Psi} O{\mu} O{\lambda}}{\operatorname{\mathcal D_{\mathrm{SMMD}}^{#2,#1,#3}}}
\DeclareMathOperator{\DGAN}{\D_{GAN}}
\DeclareMathOperator{\W}{\mathcal{W}}
\newcommand{\httpsurl}[1]{\href{https://#1}{\nolinkurl{#1}}}
\let\citep\parencite
\let\citet\textcite
\newcommand\given{\@ifstar{\mathrel{}\middle|\mathrel{}}{\mid}}
\DeclareRobustCommand{\abs}{\@ifstar\@abs\@@abs}
\newcommand{\@abs}[1]{\left\lvert #1 \right\rvert}
\newcommand{\@@abs}[1]{\lvert #1 \rvert}
\DeclareRobustCommand{\norm}{\@ifstar\@norm\@@norm}
\newcommand{\@norm}[1]{\left\lVert #1 \right\rVert}
\newcommand{\@@norm}[1]{\lVert #1 \rVert}
\DeclareRobustCommand{\inner}{\@ifstar\@inner\@@inner}
\newcommand{\@inner}[1]{\left\langle #1 \right\rangle}
\newcommand{\@@inner}[1]{\langle #1 \rangle}
\newcommand{\newreptheorem}[2]{%
  \newtheorem*{rep@#1}{\rep@title}%
  \newenvironment{rep#1}[1]{%
    \def\rep@title{#2 \ref*{##1}}\begin{rep@#1}%
  }{%
    \end{rep@#1}%
  }%
}
\newtheorem{lem}{Lemma}
\newtheorem{theorem}{Theorem}
\newtheorem{prop}[lem]{Proposition}
\newtheorem{example}{Example}
\newlist{assumplist}{enumerate}{1}
\setlist[assumplist]{label=(\textbf{\Alph*})}
\Crefname{assumplisti}{Assumption}{Assumptions}
\newlist{assumplist2}{enumerate}{1}
\setlist[assumplist2]{label=(\textbf{\Roman*})}
\Crefname{assumplist2i}{Assumption}{Assumptions}
\newlist{proplist}{enumerate}{1}
\setlist[proplist]{label=({\roman*})}
\Crefname{proplisti}{Property}{Properties}
\title{On gradient regularizers for MMD GANs}
\author{
  Michael Arbel\thanks{These authors contributed equally.\vspace*{-5mm}}\\
  Gatsby Computational Neuroscience Unit\\University College London\\
  \texttt{michael.n.arbel@gmail.com}
  \And
  Danica J. Sutherland\footnotemark[1]\\
  Gatsby Computational Neuroscience Unit\\University College London\\
  \texttt{djs@djsutherland.ml}
  \And
  Miko{\l}aj Bi\'nkowski\\
  \phantom{xxxxx}Department of Mathematics\phantom{xxxxx}\\Imperial College London\\
  \texttt{mikbinkowski@gmail.com}
  \vspace*{-5mm}
  \And
  Arthur Gretton\\
  Gatsby Computational Neuroscience Unit\\University College London\\
  \texttt{arthur.gretton@gmail.com}
  \vspace*{-5mm}
}
\begin{document}
\maketitle

\begin{abstract}
We propose a principled method for gradient-based regularization of the critic of GAN-like models trained by adversarially optimizing the kernel of a Maximum Mean Discrepancy (MMD).
We show that controlling the gradient of the critic is vital to having a sensible loss function,
and devise a method to enforce exact, analytical gradient constraints
at no additional cost compared to existing approximate techniques based on additive regularizers.
The new loss function is provably continuous,
and experiments show that it stabilizes and accelerates training,
giving image generation models that outperform state-of-the art methods
on $160 \times 160$ CelebA and $64 \times 64$ unconditional ImageNet.
\end{abstract}

\section{Introduction}

There has been an explosion of interest in \emph{implicit generative models} (IGMs) over the last few years,
especially after the introduction of generative adversarial networks (GANs) \parencite{gans}.
These models allow approximate samples from a complex high-dimensional target distribution $\PP$,
using a model distribution $\QQ_\theta$, where estimation of likelihoods, exact inference, and so on are not tractable.
GAN-type IGMs have yielded very impressive empirical results,
particularly for image generation,
far beyond the quality of samples seen from most earlier generative models \parencite[e.g.][]{progressive-growing,dcgan,wgan-gp,munit,anime-gans}.

These excellent results, however, have depended on adding a variety of methods of regularization and other tricks to stabilize the notoriously difficult optimization problem of GANs \parencite{improved-gans,dcgan}.
Some of this difficulty is perhaps because
when a GAN is viewed as minimizing a discrepancy $\DGAN(\PP, \QQ_\theta)$,
its gradient
$\nabla_\theta \DGAN(\PP, \QQ_\theta)$
does not provide useful signal to the generator
if the target and model distributions are not absolutely continuous,
as is nearly always the case \parencite{towards-principled-gans}.

An alternative set of losses are the integral probability metrics (IPMs) \citep{Mueller97},
which can give credit to models $\QQ_{\theta}$  ``near'' to the target
distribution $\PP$ \parencites{wgan}{Bottou:2017}[Section 4 of][]{GneRaf07}.
IPMs are defined in terms of a {\em critic function}: a
``well behaved'' function with  large amplitude
where $\PP$ and $\QQ_\theta$ differ most.
The IPM is  the difference in the expected critic under $\PP$ and $\QQ_\theta$,
and is zero when the distributions agree.
The Wasserstein IPMs, whose critics are made smooth via a Lipschitz constraint,
have been particularly successful in IGMs \citep{wgan,wgan-gp,sinkhorn-igm}.
But the Lipschitz constraint must hold uniformly, which can be hard
to enforce. A popular approximation has been to apply a gradient constraint
only in expectation \citep{wgan-gp}:
the critic's gradient norm is constrained to be small
on points chosen uniformly between $\PP$ and $\QQ$.

Another class of IPMs used as IGM losses
are the Maximum Mean Discrepancies (MMDs) \citep{mmd-jmlr}, as in \citep{gmmn,gen-mmd}. Here the critic
function is a member of a reproducing kernel Hilbert space
(except in \cite{coulomb-gan}, who learn a deep approximation to an RKHS critic).
Better performance can be obtained, however, when the MMD kernel is not based
directly on image pixels, but on learned features
of images. %
Wasserstein-inspired gradient regularization
approaches can be used on the MMD critic when learning these features:
\citep{mmd-gan} uses weight clipping \citep{wgan}, and \citep{Binkowski:2018,cramer-gan}
use a gradient penalty \citep{wgan-gp}.

The recent Sobolev GAN \citep{sobolev-gan}
uses a similar constraint on the expected gradient norm,
but phrases it as estimating a Sobolev IPM rather than loosely approximating Wasserstein.
This expectation can be taken over the same distribution as \citep{wgan-gp},
but other measures are also proposed,
such as $\left(\PP+\QQ_{\theta}\right)/2$.
A second recent approach, the spectrally normalized GAN \citep{Miyato:2018},
controls the Lipschitz constant of the critic by enforcing the spectral norms of the weight matrices to be 1.
Gradient penalties also benefit GANs based on $f$-divergences \citep{NowBotRyo16}:
for instance, the spectral normalization technique of \citep{Miyato:2018} can be applied to the critic network of an $f$-GAN.
Alternatively, a gradient penalty can be defined to approximate the
effect of blurring $\PP$ and $\QQ_\theta$ with noise \citep{roth:regularization}, which
addresses the problem of non-overlapping support \parencite{towards-principled-gans}.
This approach has recently been shown to yield locally convergent optimization
in some cases with non-continuous distributions,
where the original GAN does not \parencite{Mescheder:2018}.

In this paper, we introduce a novel regularization
for the MMD GAN critic of \citep{cramer-gan,mmd-gan,Binkowski:2018},
which {\em directly targets generator performance},
rather than
adopting regularization methods intended to approximate Wasserstein distances \cite{wgan,wgan-gp}.
The new MMD regularizer derives from an approach widely used in semi-supervised
learning \parencite[][Section 2]{Bousquet:2004}, where the aim is to
define a classification function $f$ which is positive on $\PP$
(the positive class) and negative on $\QQ_{\theta}$ (negative class),
in the absence of labels on many of the samples.
The decision boundary between the classes is assumed to be in a region
of low density for both $\PP$ and $\QQ_{\theta}$: $f$ should therefore
be flat where $\PP$ and $\QQ_{\theta}$ have support (areas with
constant label), and have a larger slope in regions of low density.
\Textcite{Bousquet:2004} propose as their regularizer on $f$ a sum of the
variance and a density-weighted gradient norm.

We adopt a related penalty on the MMD critic,
with the difference that we only apply the penalty on $\PP$:
thus, the critic is flatter where $\PP$ has high mass,
but does not vanish on the generator samples from $\QQ_{\theta}$ (which we optimize).
In excluding $\QQ_{\theta}$ from the critic function constraint,
we also avoid the concern raised by \citep{Miyato:2018}
that a critic depending on $\QQ_{\theta}$ will change with the current minibatch --
potentially leading to less stable learning.
The resulting discrepancy is no longer an integral probability metric:
it is asymmetric,
and the critic function class depends on the target $\PP$ being approximated.

We first discuss in \cref{sec:igm-losses} how MMD-based losses can be used to learn implicit generative models, and how a naive approach could fail.
This motivates our new discrepancies, introduced in \cref{sec:new_discrepancies}.
\Cref{sec:experiments} demonstrates that these losses outperform state-of-the-art models for image generation.

\section{Learning implicit generative models with MMD-based losses} \label{sec:igm-losses}
An IGM is a model $\QQ_\theta$ which aims to approximate a target distribution $\PP$
over a space $\x \subseteq \R^d$.
We will define $\QQ_\theta$ by a \emph{generator} function $G_\theta : \z \to \x$,
implemented as a deep network with parameters $\theta$,
where $\z$ is a space of latent codes, say $\R^{128}$.
We assume a fixed distribution on $\z$,
say $Z \sim \mathrm{Uniform}\left( [-1, 1]^{128} \right)$,
and call $\QQ_\theta$ the distribution of $G_\theta(Z)$.
We will consider learning
by minimizing a
discrepancy $\D$ between distributions,
with $\D(\PP, \QQ_\theta ) \ge 0$ and $\D(\PP, \PP) = 0$,
which we call our {\em loss}.
We aim to minimize $\D(\PP, \QQ_\theta)$ with
stochastic gradient descent on an estimator of $\D$.%

In the present work,
we will build losses $\D$
based on the Maximum Mean Discrepancy,
\begin{equation}
  \MMD_k(\PP, \QQ)
  = \sup_{f \,:\, \lVert f \rVert_{\h_k} \le 1}
    \E_{X \sim \PP}[ f(X) ] - \E_{Y \sim \QQ}[ f(Y) ]
  \label{eq:mmd}
,\end{equation}
an integral probability metric where the critic class is the unit ball within $\h_k$,
the reproducing kernel Hilbert space with a kernel $k$.
The optimization in \eqref{eq:mmd} admits a simple closed-form optimal critic,
$f^*(t) \propto \E_{X \sim \PP}[ k(X, t) ] - \E_{Y \sim \QQ}[ k(Y, t) ]$.
There is also an unbiased, closed-form estimator of $\MMD_k^2$
with appealing statistical properties \parencite{mmd-jmlr}~--~in particular, its sample complexity is \emph{independent} of the dimension of $\x$,
compared to the exponential dependence \parencite{weed:wasserstein-rates} of the Wasserstein distance
\begin{equation}
  \W(\PP, \QQ)
  = \sup_{f \,:\, \norm{f}_\lip \le 1}
    \E_{X \sim \PP}[ f(X) ] - \E_{Y \sim \QQ}[ f(Y) ]
  \label{eq:wasserstein}
.\end{equation}

The MMD is \emph{continuous in the weak topology}
for any bounded kernel with Lipschitz embeddings \parencite[Theorem 3.2(b)]{opt-est-probabilities},
meaning that if $\PP_n$ converges in distribution to $\PP$, $\PP_n \distconv \PP$, then $\MMD(\PP_n, \PP) \to 0$.
($\W$ is continuous in the slightly stronger Wasserstein topology \citep[Definition 6.9]{Villani:2009};
$\PP_n \wassconv \PP$ implies $\PP_n \distconv \PP$,
and the two notions coincide if $\mathcal X$ is bounded.)
Continuity means the loss can provide better signal to the generator as $\QQ_\theta$ approaches $\PP$,
as opposed to e.g.\ Jensen-Shannon where the loss could be constant until suddenly jumping to 0 \parencite[e.g.][Example 1]{wgan}.
The MMD is also {\em strict}, meaning it is zero iff $\PP=\QQ_\theta$, for \emph{characteristic} kernels \parencite{SriFukLan11}.
The Gaussian kernel yields an MMD both continuous in the weak topology and strict.
Thus in principle, one need not conduct any alternating optimization in an IGM at all,
but merely choose generator parameters $\theta$ to minimize $\MMD_k$.

Despite these appealing properties,
using simple pixel-level kernels leads to poor generator samples \parencite{gen-mmd,gmmn,opt-mmd,Bottou:2017}.
More recent MMD GANs \parencite{mmd-gan,cramer-gan,Binkowski:2018}
achieve better results by using a parameterized \emph{family} of kernels, $\{ k_\psi \}_{\psi \in \Psi}$,
in the Optimized MMD loss
previously studied by \cite{kernel-choice-mmd,opt-est-probabilities}:
\begin{equation}
  \optMMD[\Psi](\PP, \QQ) := \sup_{\psi \in \Psi} \MMD_{k_{\psi}}(\PP, \QQ)
  \label{eq:Optimized_MMD}
.\end{equation}

We primarily consider kernels defined by some fixed kernel $K$
on top of a learned low-dimensional representation $\phi_\psi : \x \to \R^s$,
i.e.\ $k_\psi(x, y) = K(\phi_\psi(x), \phi_\psi(y))$,
denoted $k_\psi = K \circ \phi_\psi$.
In practice,
$K$ is a simple characteristic kernel, e.g.\ Gaussian,
and $\phi_\psi$ is usually a deep network
with output dimension say $s = 16$ \citep{Binkowski:2018} or even $s = 1$ (in our experiments).
If $\phi_\psi$ is powerful enough, this choice is sufficient;
we need not try to ensure each $k_\psi$ is characteristic, as did \cite{mmd-gan}.
\begin{prop} \label{prop:optmmd:strict}
  Suppose $k = K \circ \phi_\psi$,
  with $K$ characteristic
  and $\{\phi_\psi\}$ rich enough that
  for any $\PP \ne \QQ$,
  there is a $\psi \in \Psi$ for which $\phi_\psi \pushforward \PP \ne \phi_\psi \pushforward \QQ$.\footnote{
    $f \pushforward \PP$ denotes the \emph{pushforward} of a distribution:
    if $X \sim \PP$, then $f(X) \sim f \pushforward \PP$.}
  Then %
  if $\PP \ne \QQ$, $\optMMD(\PP, \QQ) > 0$.
\end{prop}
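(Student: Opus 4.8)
The plan is to reduce the statement to the fact that a characteristic kernel separates distinct distributions, after transporting everything through the feature map $\phi_\psi$. First I would invoke the richness hypothesis to fix a single parameter $\psi \in \Psi$ for which the pushforwards $\phi_\psi \pushforward \PP$ and $\phi_\psi \pushforward \QQ$ are distinct distributions on $\R^s$. Since $\optMMD(\PP, \QQ) = \sup_{\psi' \in \Psi} \MMD_{k_{\psi'}}(\PP, \QQ) \ge \MMD_{k_\psi}(\PP, \QQ)$, it then suffices to show that this single term is strictly positive.

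The key step is the identity $\MMD_{K \circ \phi_\psi}(\PP, \QQ) = \MMD_K(\phi_\psi \pushforward \PP, \phi_\psi \pushforward \QQ)$. I would establish it by writing the squared MMD in its standard expectation form, $\MMD_k^2(\PP, \QQ) = \E[k(X, X')] - 2\E[k(X, Y)] + \E[k(Y, Y')]$ with $X, X' \sim \PP$ and $Y, Y' \sim \QQ$ all independent — valid here because the characteristic kernels we use (e.g.\ Gaussian) are bounded, so $k = K \circ \phi_\psi$ is bounded. Substituting $k(x, y) = K(\phi_\psi(x), \phi_\psi(y))$ and applying the change-of-variables formula for pushforwards to the product measures $\PP \otimes \PP$, $\PP \otimes \QQ$, and $\QQ \otimes \QQ$, each of the three expectations depends on $\PP$ and $\QQ$ only through $\phi_\psi \pushforward \PP$ and $\phi_\psi \pushforward \QQ$, and the right-hand side is recognized as $\MMD_K^2(\phi_\psi \pushforward \PP, \phi_\psi \pushforward \QQ)$. (One could instead argue at the RKHS level, noting that $\h_{K \circ \phi_\psi}$ is isometric to the closure of $\{g \circ \phi_\psi : g \in \h_K\}$ so that the supremum in \eqref{eq:mmd} over the unit ball of $\h_{K \circ \phi_\psi}$ matches the corresponding supremum for $K$ on the pushforwards, but the expectation argument is more elementary.)

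Finally, since $K$ is characteristic and $\phi_\psi \pushforward \PP \ne \phi_\psi \pushforward \QQ$, the defining property of a characteristic kernel — injectivity of the mean embedding, equivalently strictness of $\MMD_K$ — gives $\MMD_K(\phi_\psi \pushforward \PP, \phi_\psi \pushforward \QQ) > 0$, hence $\MMD_{k_\psi}(\PP, \QQ) > 0$ and therefore $\optMMD(\PP, \QQ) > 0$. The only place demanding care is the pushforward identity: checking that the expectation expansion of $\MMD^2$ applies (boundedness, or a suitable moment condition, on $K$) and that the change of variables is applied correctly to the three product distributions. Everything else follows immediately from the hypotheses.
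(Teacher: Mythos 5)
Your proof is correct and follows essentially the same route as the paper: fix a $\hat\psi$ witnessing the pushforward separation, use the identity $\MMD_{K\circ\phi_{\hat\psi}}(\PP,\QQ) = \MMD_K(\phi_{\hat\psi}\pushforward\PP, \phi_{\hat\psi}\pushforward\QQ)$, and invoke that $K$ is characteristic. The paper's proof invokes the pushforward identity without comment; you add a careful justification of it, which is a welcome bit of rigor but not a different argument.
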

\vspace*{-3mm}
\begin{proof}
  Let $\hat\psi \in \Psi$ be such that $\phi_{\hat\psi}(\PP) \ne \phi_{\hat\psi}(\QQ)$.
  Then, since $K$ is characteristic,
  \[
    \optMMD(\PP, \QQ)
    = \sup_{\psi \in \Psi} \MMD_K(\phi_\psi \pushforward \PP, \phi_\psi \pushforward \QQ)
    \ge \MMD_K(\phi_{\hat\psi} \pushforward \PP, \phi_{\hat\psi} \pushforward \QQ)
    > 0
  . \qedhere \]
\end{proof}
\vspace*{-3mm}
To estimate $\optMMD$,
one can conduct alternating optimization to estimate a $\hat\psi$
and then update the generator according to $\MMD_{k_{\hat\psi}}$,
similar to the scheme used in GANs and WGANs.
(This form of estimator is justified by an envelope theorem \citep{envelope-thm},
although it is invariably biased \parencite{Binkowski:2018}.)
Unlike $\DGAN$ or $\W$, %
fixing a $\hat\psi$ and optimizing the generator still yields a sensible distance $\MMD_{k_{\hat\psi}}$.

Early attempts at minimizing $\optMMD$ in an IGM, though, were unsuccessful \parencite[footnote 7]{opt-mmd}.
This could be because for some kernel classes,
$\optMMD$ is stronger than Wasserstein or MMD.

\begin{figure}[t]
\centering
        \includegraphics[width=1.\linewidth]{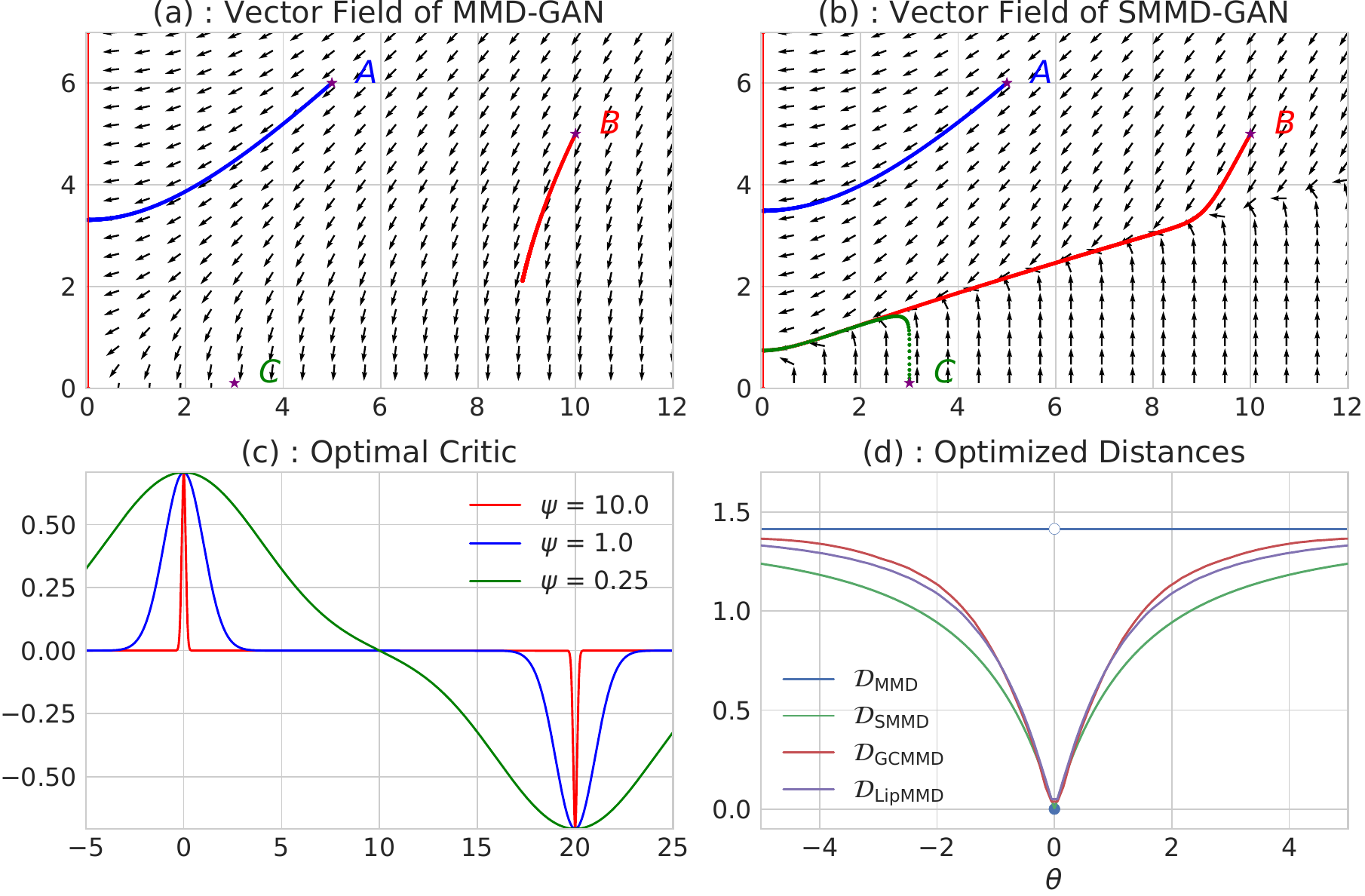}
        \caption{The setting of \cref{example:diracgan}.
         (a, b): parameter-space gradient fields for the MMD and the SMMD (\cref{subsec:Scaled-MMD}); the horizontal axis is $\theta$, and the vertical $1/\psi$.
         (c): optimal MMD critics for $\theta = 20$ with different kernels.
         (d): the MMD and the distances of \cref{sec:new_discrepancies} optimized over $\psi$.}
       \label{fig:mmd_vector_fields}
\end{figure}
\begin{example}[DiracGAN \citep{Mescheder:2018}] \label{example:diracgan}
We wish to model a point mass at the origin of $\R$,
$\PP = \delta_0$,
with any possible point mass, $\QQ_\theta = \delta_\theta$ for $\theta \in \R$.
We use a Gaussian kernel of any bandwidth,
which can be written as $k_\psi = K \circ \phi_\psi$
with $\phi_\psi(x) = \psi x$ for $\psi \in \Psi = \R$
and
$K(a, b) = \exp\left( -\frac12 (a-b)^2 \right)$.
Then
\begin{equation*}
  \MMD_{k_{\psi}}^{2}(\delta_{0}, \delta_{\theta}) = 2 \left( 1 - \exp\left(-\tfrac12 \psi^2 \theta^2 \right) \right),
  \qquad
  \optMMD(\delta_0, \delta_\theta) = \begin{cases} \sqrt{2} & \theta \ne 0 \\ 0 & \theta = 0 \end{cases}
.\end{equation*}
\end{example}
Considering $\optMMD(\delta_0, \delta_{1/n}) = \sqrt 2 \not\to 0$,
even though $\delta_{1/n} \wassconv \delta_0$,
shows that the Optimized MMD
distance is not continuous in the weak or Wasserstein topologies.

This also causes optimization issues.
\Cref{fig:mmd_vector_fields} (a) shows gradient vector fields in parameter space,
$
  v(\theta, \psi) \propto \big(
    -\nabla_\theta \MMD_{k_\psi}^2(\delta_0, \delta_\theta),
     \nabla_\psi   \MMD_{k_\psi}^2(\delta_0, \delta_\theta)
  \big)
$.
Some sequences following $v$ (e.g.\ A) converge to an optimal solution $(0, \psi)$, %
but some (B) move in the wrong direction,
and others (C) are stuck because there is essentially no gradient.
\Cref{fig:mmd_vector_fields} (c, red) shows that %
the optimal $\optMMD$ critic is very sharp near $\PP$ and $\QQ$;
this is less true for cases where the algorithm converged.

We can avoid these issues if we ensure a bounded Lipschitz critic:\footnote{\cite[Theorem 4]{mmd-gan} makes a similar claim to \cref{prop:optmmd:weakness}, but its proof was incorrect: it tries to uniformly bound $\MMD_{k_\psi} \le \W^2$, but the bound used is for a Wasserstein in terms of $\lVert k_\psi(x, \cdot) - k_\psi(y, \cdot) \rVert_{\h_{k_\psi}}$.}
\begin{prop} \label{prop:optmmd:weakness}
  Assume the critics
  $
    f_\psi(x) = (\E_{X \sim \PP} k_\psi(X, x) - \E_{Y \sim \QQ} k_\psi(Y, x)) / \MMD_{k_\psi}(\PP, \QQ)
  $
  are uniformly bounded and have a common Lipschitz constant:
  $\sup_{x \in \x, \psi \in \Psi} \lvert f_\psi(x) \rvert < \infty$ and
  $\sup_{\psi \in \Psi} \lVert f_\psi \rVert_\lip < \infty$.
  In particular, this holds when $k_\psi = K \circ \phi_\psi$ and
  \[
    \sup_{a \in \R^s} K(a, a) < \infty
    ,\quad
    \lVert K(a, \cdot) - K(b, \cdot) \rVert_{\h_K} \le L_K \lVert a - b \rVert_{\R^s}
    ,\quad
    \sup_{\psi \in \Psi} \lVert \phi_\psi \rVert_\lip \le L_\phi < \infty
    \label{eq:stack-kernel-weakness}
  .\]
  Then $\optMMD$ is continuous in the weak topology:
  if $\PP_n \distconv \PP$, then $\optMMD(\PP_n, \PP) \to 0$.
\end{prop}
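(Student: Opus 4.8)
The plan is to dominate $\optMMD$ by a fixed multiple of the bounded--Lipschitz (Dudley) metric $d_{\mathrm{BL}}$, which metrizes weak convergence since $\x \subseteq \R^d$ is separable. Recall that the supremum defining $\MMD_{k_\psi}(\PP,\QQ)$ in \eqref{eq:mmd} is attained at the witness function $f^\ast = (\mu_\PP-\mu_\QQ)/\norm{\mu_\PP-\mu_\QQ}_{\h_{k_\psi}}$, where $\mu_\PP = \E_{X\sim\PP}[k_\psi(X,\cdot)]$ and $\mu_\QQ = \E_{Y\sim\QQ}[k_\psi(Y,\cdot)]$ lie in $\h_{k_\psi}$ because $k_\psi$ is bounded; this $f^\ast$ is exactly the critic $f_\psi$ of the statement, and $\MMD_{k_\psi}(\PP,\QQ) = \E_{X\sim\PP}[f_\psi(X)] - \E_{Y\sim\QQ}[f_\psi(Y)]$ with $\norm{f_\psi}_{\h_{k_\psi}} = 1$. (If $\PP = \QQ$ all terms vanish; otherwise the denominator is nonzero exactly when the numerator is, so $f_\psi$ is well defined on the relevant $\psi$.) Hence
\[
  \optMMD(\PP,\QQ) = \sup_{\psi\in\Psi}\bigl( \E_{X\sim\PP}[f_\psi(X)] - \E_{Y\sim\QQ}[f_\psi(Y)] \bigr).
\]
If $B := \sup_{x,\psi}\abs{f_\psi(x)}$ and $L := \sup_\psi\norm{f_\psi}_\lip$ are finite, then $f_\psi/(B+L)$ has bounded--Lipschitz norm at most $1$ for every $\psi$, so each summand is at most $(B+L)\,d_{\mathrm{BL}}(\PP,\QQ)$, whence $\optMMD(\PP,\QQ)\le (B+L)\,d_{\mathrm{BL}}(\PP,\QQ)$. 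Since $d_{\mathrm{BL}}(\PP_n,\PP)\to 0$ whenever $\PP_n\distconv\PP$, this gives $\optMMD(\PP_n,\PP)\to 0$, provided $B,L$ are finite uniformly over the pairs $(\PP_n,\PP)$ involved; under the ``in particular'' hypotheses this is automatic because there the bounds will not depend on the measures at all.

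It remains to check that the three conditions on $K$ and $\{\phi_\psi\}$ force $B,L<\infty$. For the bound, the reproducing property together with $\norm{f_\psi}_{\h_{k_\psi}} = 1$ and Cauchy--Schwarz give
\[
  \abs{f_\psi(x)} = \bigl\lvert \langle f_\psi, k_\psi(x,\cdot)\rangle_{\h_{k_\psi}} \bigr\rvert \le \sqrt{k_\psi(x,x)} = \sqrt{K(\phi_\psi(x),\phi_\psi(x))} \le \Bigl(\sup_{a\in\R^s}K(a,a)\Bigr)^{1/2} .
\]
For the Lipschitz constant, the same reasoning yields $\abs{f_\psi(x)-f_\psi(y)}\le\norm{k_\psi(x,\cdot)-k_\psi(y,\cdot)}_{\h_{k_\psi}}$, and expanding the squared norm into kernel evaluations gives the identity $\norm{k_\psi(x,\cdot)-k_\psi(y,\cdot)}^2_{\h_{k_\psi}} = k_\psi(x,x)-2k_\psi(x,y)+k_\psi(y,y) = \norm{K(\phi_\psi(x),\cdot)-K(\phi_\psi(y),\cdot)}^2_{\h_K}$. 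The second and third hypotheses then bound this by $L_K\norm{\phi_\psi(x)-\phi_\psi(y)}_{\R^s}\le L_K L_\phi\norm{x-y}$, so $L\le L_K L_\phi$. Feeding $B\le(\sup_a K(a,a))^{1/2}$ and $L\le L_K L_\phi$ into the first paragraph finishes the argument, with the dominating constant depending only on $K$ and $L_\phi$.

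I do not expect a real obstacle: the essence is simply that the normalized optimal MMD critic is a bounded Lipschitz function with norm controlled \emph{uniformly in} $\psi$, so $\optMMD$ is sandwiched under the Dudley metric. The few points needing care are (i) existence of the mean embeddings so that $f_\psi$ really is the maximizer (immediate once $k_\psi$ is bounded); (ii) the degenerate $\PP=\QQ$ case (trivial); and (iii) ensuring the dominating constant $B+L$ is the \emph{same} for all pairs $(\PP_n,\PP)$ -- precisely the role of the uniform-boundedness and common-Lipschitz hypotheses, which the stacked-kernel conditions supply since their bounds involve only $\sup_a K(a,a)$, $L_K$ and $L_\phi$. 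The single external ingredient is the standard fact that the bounded--Lipschitz metric metrizes weak convergence on a separable metric space.
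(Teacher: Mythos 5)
Your proof is correct and takes essentially the same route as the paper's: the paper cites Dudley's Corollary 11.3.4 (that the bounded-Lipschitz metric metrizes weak convergence on a separable metric space) as the abstract ingredient, and then verifies the stacked-kernel hypotheses exactly as you do, via $\lvert f_\psi(x)-f_\psi(y)\rvert\le\lVert k_\psi(x,\cdot)-k_\psi(y,\cdot)\rVert_{\h_{k_\psi}}=\lVert K(\phi_\psi(x),\cdot)-K(\phi_\psi(y),\cdot)\rVert_{\h_K}\le L_K L_\phi\lVert x-y\rVert$. You have merely made explicit the intermediate step (each $f_\psi/(B+L)$ lies in the BL unit ball, hence $\optMMD\le(B+L)\,d_{\mathrm{BL}}$) that the paper leaves implicit in its citation, and you rightly flag that the uniform constants must not secretly depend on the pair $(\PP_n,\PP)$, which the ``in particular'' hypotheses ensure.
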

\begin{proof}
  The main result is \cite[Corollary 11.3.4]{dudley:analysis}.
  To show the claim for $k_\psi = K \circ \phi_\psi$, note that
  $\lvert f_\psi(x) - f_\psi(y) \rvert
  \le \lVert f_\psi \rVert_{\h_{k_\psi}} \lVert k_\psi(x, \cdot) - k_\psi(y, \cdot) \rVert_{\h_{k_\psi}}$,
  which since $\lVert f_\psi \rVert_{\h_{k_\psi}} = 1$ is
  \[
     \lVert K(\phi_\psi(x), \cdot) - K(\phi_\psi(y), \cdot) \rVert_{\h_K}
     \le L_K \lVert \phi_\psi(x) - \phi_\psi(y) \rVert_{\R^s}
     \le L_K L_\phi \lVert x - y \rVert_{\R^d}
  . \qedhere\]
\end{proof}

Indeed, if we put a box constraint on $\psi$ \parencite{mmd-gan}
or regularize the gradient of the critic function \parencite{Binkowski:2018},
the resulting MMD GAN generally matches or outperforms WGAN-based models.
Unfortunately, though,
an additive gradient penalty doesn't substantially change the vector field of \cref{fig:mmd_vector_fields} (a),
as shown in \cref{fig:vector_fields_all} (\cref{appendix:diracgan-full}).
We will propose distances with much better convergence behavior.

\section{New discrepancies for learning implicit generative models}\label{sec:new_discrepancies}

Our aim here is to introduce a discrepancy
that can provide useful gradient information when used as an IGM loss.
Proofs of results in this section are deferred to \cref{appendix:proofs}.

\subsection{Lipschitz Maximum Mean Discrepancy} \label{sec:lipmmd}
\Cref{prop:optmmd:weakness} shows that an MMD-like discrepancy
can be continuous under the weak topology
even when optimizing over kernels,
if we directly restrict the critic functions to be Lipschitz.
We can easily define such a distance, which we call the Lipschitz MMD: for some $\lambda > 0$,
\begin{align}
  \LipMMD_{k,\lambda}(\PP, \QQ) := \sup_{f \in \h_k \,:\, \norm{f}^2_\lip + \lambda \norm{f}^2_{\h_k} \leq1}
     \E_{X \sim \PP}\left[f(X)\right]-\E_{Y \sim \QQ}\left[f(Y)\right]
  \label{eq:LipMMD}
.\end{align}
For a universal kernel $k$, we conjecture that $\lim_{\lambda \to 0} \LipMMD_{k,\lambda}(\PP, \QQ) \to \W(\PP, \QQ)$.
But for any $k$ and $\lambda$,
$\LipMMD$ is upper-bounded by $\W$,
as \eqref{eq:LipMMD} optimizes over a smaller set of functions than \eqref{eq:wasserstein}.
Thus
$ %
  \optLipMMD(\PP, \QQ) := \sup_{\psi\in \Psi} \LipMMD_{k_\psi,\lambda}(\PP, \QQ)
$ %
is also upper-bounded by $\W$,
and hence is continuous in the Wasserstein topology.
It also shows excellent empirical behavior on \cref{example:diracgan}
(\cref{fig:mmd_vector_fields} (d), and \cref{fig:vector_fields_all} in \cref{appendix:diracgan-full}).
But estimating $\LipMMD_{k,\lambda}$, let alone $\D_{\LipMMD}^{\Psi,\lambda}$,
is in general extremely difficult (\cref{sec:est-lipmmd}),
as finding $\norm{f}_\lip$ requires optimization in the input space.
Constraining the \emph{mean} gradient rather than the \emph{maximum},
as we will do next,
is far more tractable.

\subsection{Gradient-Constrained Maximum Mean Discrepancy}
We define the Gradient-Constrained MMD
for $\lambda > 0$
and using some measure $\mu$ as
\begin{align}
  \GCMMD_{\mu,k,\lambda}(\PP,\QQ)
  &:= \sup_{f \in \h_k \,:\, \norm{f}_{S(\mu),k,\lambda} \le 1}
     \E_{X \sim \PP}\left[f(X)\right] - \E_{Y \sim \QQ}\left[f(Y)\right]
  ,\label{eq:SobolevMMD}
\\
  \text{where }
  \lVert f \rVert_{S(\mu),k,\lambda}^{2}
  &:= \lVert f \rVert_{L^{2}(\mu)}^{2}
    + \lVert \nabla f \rVert_{L^{2}(\mu)}^{2}
    + \lambda \lVert f \rVert_{\h_k}^{2}
  \label{eq:sobolev-norm}
.\end{align}
$\norm{\cdot}_{L^{2}(\mu)}^2 = \int \norm{\cdot}^2 \mudx$ denotes the squared $L^2$ norm.
Rather than directly constraining the Lipschitz constant,
the second term $\lVert \nabla f \rVert_{L^{2}(\mu)}^{2}$
encourages the function $f$ to be flat where $\mu$ has mass.
In experiments we use $\mu = \PP$,
flattening the critic near the target sample.
We add the first term following \cite{Bousquet:2004}:
in one dimension and with $\mu$ uniform,
$\lVert \cdot \rVert_{S(\mu),\cdot,0}$
is then an RKHS norm with the kernel
$\kappa(x,y)=\exp (-\| x-y \|)$,
which is also a Sobolev space.
The correspondence to a Sobolev norm is lost in higher dimensions \citep[][Ch. 10]{Wendland05},
but we also found the first term to be beneficial in practice.

We can exploit some properties of $\h_k$
to compute \eqref{eq:SobolevMMD} analytically.
Call the difference in kernel mean embeddings
$\eta := \E_{X \sim \PP}[ k(X, \cdot) ] - \E_{Y \sim \QQ}[ k(Y, \cdot) ] \in \h_k$;
recall $\MMD(\PP, \QQ) = \lVert \eta \rVert_{\h_k}$.
\begin{prop} \label{prop:Finite_rank_approx}
  Let $\hat\mu = \sum_{m=1}^M \delta_{X_m}$. %
  Define $\eta(X) \in \R^M$ with $m$th entry $\eta(X_m)$,
  and $\nabla \eta(X) \in \R^{M d}$ with $(m, i)$th entry\footnote{%
    We use $(m, i)$ to denote $(m-1) d + i$;
    thus $\nabla \eta(X)$ stacks $\nabla \eta(X_1)$, \dots, $\nabla \eta(X_M)$ into one vector.
  } $\partial_i \eta(X_m)$.
  Then under \cref{Moments,Growth,Differentiability,Integrability} in \cref{sec:proofs:distances},
  \begin{align*}
    \GCMMD_{\hat\mu,k,\lambda}^2(\PP, \QQ)
    &= \frac1\lambda \left( \MMD^{2}(\PP, \QQ) - \bar{P}(\eta) \right)
  \\
    \bar{P}(\eta)
    &=
    \begin{bmatrix} \eta(X) \\ \nabla\eta(X) \end{bmatrix}\tp
    \left(
      \begin{bmatrix}
        K & G\tp \\
        G & H
      \end{bmatrix}
      + M \lambda I_{M + M d}
    \right)^{-1}
    \begin{bmatrix} \eta(X) \\ \nabla\eta(X) \end{bmatrix}
  ,\end{align*}
  where $K$ is the kernel matrix $K_{m,m'} = k(X_m, X_{m'})$,
  $G$ is the matrix of left derivatives \footnote{%
    We use $\partial_i k(x, y)$ to denote the partial derivative with respect to $x_i$,
    and $\partial_{i+d} k(x, y)$ that for $y_i$.
  }
  $G_{(m, i), m'} = \partial_i k(X_m, X_{m'})$,
  and $H$ that of derivatives of both arguments $H_{(m, i), (m', j)} = \partial_i \partial_{j+d} k(X_m, X_{m'})$.
\end{prop}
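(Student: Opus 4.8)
The plan is to solve the variational problem in \eqref{eq:SobolevMMD} explicitly when $\mu = \hat\mu$ is a finite point mass. The key observation is that the constraint functional $\lVert f \rVert_{S(\hat\mu),k,\lambda}^2$ depends on $f$ only through finitely many real-valued linear functionals evaluated at the points $X_1, \dots, X_M$: the point evaluations $f(X_m)$ and the partial-derivative evaluations $\partial_i f(X_m)$, together with the RKHS norm $\lVert f \rVert_{\h_k}^2$. By the reproducing property, $f(X_m) = \langle f, k(X_m,\cdot)\rangle_{\h_k}$ and $\partial_i f(X_m) = \langle f, \partial_i k(X_m,\cdot)\rangle_{\h_k}$ (this is where I would invoke the differentiability/integrability assumptions \cref{Moments,Growth,Differentiability,Integrability}, which guarantee $\partial_i k(X_m,\cdot) \in \h_k$ and that differentiation commutes with the inner product). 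Writing $\Phi$ for the finite-dimensional subspace spanned by $\{k(X_m,\cdot)\} \cup \{\partial_i k(X_m,\cdot)\}$, the linear objective $\E_\PP f - \E_\QQ f = \langle f, \eta\rangle_{\h_k}$ and the first two terms of the constraint are unchanged by projecting $f$ onto $\Phi + \spn\{\eta\}$, while $\lVert f\rVert_{\h_k}^2$ can only decrease; so the optimal $f$ lies in that finite-dimensional span, and the problem reduces to a finite-dimensional quadratically constrained linear program.

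Concretely, I would parametrize $f = \sum_m \alpha_m k(X_m,\cdot) + \sum_{m,i}\beta_{(m,i)} \partial_i k(X_m,\cdot) + \gamma \eta$, collect the coefficients into a vector, and express: the vector of evaluations $(f(X), \nabla f(X))$ as $A w$ where $A$ is built from the Gram-type matrices $K, G, H$ (and cross terms with $\eta$); the norm $\lVert f\rVert_{\h_k}^2 = w\tp A_{\h} w$; and the objective $\langle f,\eta\rangle_{\h_k}$ as a linear form in $w$. Then $\lVert f\rVert_{S(\hat\mu),k,\lambda}^2 = \frac1M\lVert A w\rVert^2 + \lambda\, w\tp A_\h w$ — note the $\frac1M$ from $\hat\mu = \sum \delta_{X_m}$ being an unnormalized sum, which is what produces the $M\lambda I$ inside the inverse after clearing denominators. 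Maximizing a linear functional subject to a quadratic-form constraint $\lVert f\rVert_S^2 \le 1$ is standard: the optimal value squared equals the squared dual norm, i.e. $\langle c, \cdot\rangle$ evaluated against the inverse of the quadratic form. Carrying this through, the cross terms between the evaluation block and the $\eta$ coordinate, together with a Schur-complement / block-matrix identity, collapse to exactly $\MMD^2(\PP,\QQ) - \bar P(\eta)$ divided by $\lambda$, with $\bar P(\eta)$ the stated quadratic form in $(\eta(X),\nabla\eta(X))$ against $(\begin{smallmatrix}K & G\tp\\ G & H\end{smallmatrix} + M\lambda I)^{-1}$.

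The main obstacle is bookkeeping rather than conceptual: one must be careful that $\partial_i k(X_m,\cdot)$ genuinely belongs to $\h_k$ and acts as the representer of $\partial_i f(X_m)$ — this is precisely the role of the regularity assumptions, and I would state the needed facts (existence of these representers, and $\langle \partial_i k(x,\cdot), \partial_j k(y,\cdot)\rangle_{\h_k} = \partial_i\partial_{j+d} k(x,y)$, matching the definition of $H$) as a preliminary lemma before the main computation. The second delicate point is justifying the reduction to a finite-dimensional span when $\eta$ itself need not lie in $\Phi$: one handles this by noting $\eta$ enters the objective linearly and only its component along $\Phi^{\perp}$ is affected by the $\lambda\lVert f\rVert_{\h_k}^2$ term, so including $\gamma\eta$ in the ansatz and then eliminating $\gamma$ via the first-order conditions recovers the closed form. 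After that, everything is a finite-dimensional Lagrange-multiplier / Schur-complement computation, which I would not grind through here.
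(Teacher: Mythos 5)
Your proposal takes essentially the same route as the paper: exploit the discreteness of $\hat\mu$ to reduce the maximization over $\h_k$ to a finite-dimensional one, and use the dual-norm fact that a linear functional maximized over the ellipsoid $\langle f, D_{\hat\mu,\lambda} f\rangle_{\h_k}\le 1$ attains $\sqrt{\langle\eta, D_{\hat\mu,\lambda}^{-1}\eta\rangle_{\h_k}}$. The paper packages the dual-norm step as the standalone \cref{prop:dot_prod_expression} and then, rather than a representer projection onto $\Phi+\spn\{\eta\}$ followed by a Schur-complement inversion, obtains the finite parametrization by rearranging $D_{\hat\mu,\lambda}g=\eta$ and solving the resulting $(M+Md)$-dimensional linear system for the coefficients $\alpha_m = g(X_m)$, $\beta_{m,i}=\partial_i g(X_m)$; this is exactly the Woodbury/Schur computation you defer. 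One small slip: your attribution of the $\tfrac1M$ factor to $\hat\mu$ ``being an unnormalized sum'' is backwards --- the $M\lambda I$ arises because the computation in fact treats $\hat\mu$ as the normalized empirical measure (a notational wrinkle the paper itself shares).
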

As long as $\PP$ and $\QQ$ have integrable first moments, and $\mu$ has second moments,
\cref{Moments,Growth,Differentiability,Integrability} are satisfied e.g.\ by a Gaussian or linear kernel on top of a differentiable $\phi_\psi$.
We can thus estimate the GCMMD based on samples from $\PP$, $\QQ$, and $\mu$ by using the empirical mean $\hat\eta$ for $\eta$.

This discrepancy indeed works well in practice:
\cref{appendix:sobolev-expt} shows that optimizing our estimate of
$\optGCMMD = \sup_{\psi \in \Psi} \GCMMD_{\mu,k_\psi,\lambda}$
yields a good generative model on MNIST.
But the linear system of size $M+Md$ is impractical:
even on $28 \times 28$ images
and using a low-rank approximation,
the model took days to converge.
We therefore design a less expensive discrepancy in the next section.

The GCMMD is related to some discrepancies previously used in IGM training.
The Fisher GAN \citep{fisher-gan}
uses only the variance constraint
$\lVert f\rVert_{L^{2}(\mu)}^{2} \le 1$.
The Sobolev GAN \citep{sobolev-gan}
constrains $\lVert\nabla f\rVert_{L^{2}(\mu)}^{2}\le1$,
along with a vanishing boundary condition on $f$
to ensure a well-defined solution (although this was not used in the implementation,
and can cause very unintuitive critic behavior; see \cref{appendix:critic-vector-fields}).
The authors considered several choices of $\mu$,
including the WGAN-GP  measure \citep{wgan-gp} and mixtures $\left(\PP+\QQ_{\theta}\right)/2$.
Rather than enforcing the constraints in closed form as we do, though,
these models used additive regularization. %
We will compare to the Sobolev GAN in experiments.

\subsection{Scaled Maximum Mean Discrepancy \label{subsec:Scaled-MMD}}

We will now derive a lower bound on the Gradient-Constrained MMD
which retains many of its attractive qualities
but can be estimated in time linear in the dimension $d$.
\begin{prop} \label{prop:Sobolev_Upperbound}
  Make \cref{Moments,Growth,Differentiability,Integrability}.
  For any $f \in \h_k$,
  $
    \lVert f \rVert_{S(\mu),k,\lambda}
    \le \sigma_{\mu,k,\lambda}^{-1} \lVert f \rVert_{\h_k}
  $,
  where
  \[
  \sigma_{\mu,k,\lambda} := 1 \Big/ \sqrt{
    \lambda + \int k(x, x) \mu(\ud x)
    + \sum_{i=1}^d \int \frac{\partial^2 k(y, z)}{\partial y_i \partial z_i} \Big\rvert_{(y,z) = (x,x)} \mudx
  }
  .\]
\end{prop}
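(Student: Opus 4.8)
The plan is to show that the RKHS norm dominates the full $S(\mu),k,\lambda$-norm up to the stated constant, by bounding each term of $\norm{f}_{S(\mu),k,\lambda}^2$ pointwise by $\norm{f}_{\h_k}^2$ times a kernel quantity and then integrating against $\mu$. Since $\norm{f}_{S(\mu),k,\lambda}^2 = \norm{f}_{L^{2}(\mu)}^2 + \norm{\nabla f}_{L^{2}(\mu)}^2 + \lambda \norm{f}_{\h_k}^2$ and the last term already has the desired form, it suffices to handle the first two.

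For the $L^2(\mu)$ term, the reproducing property $f(x) = \inner{f, k(x, \cdot)}_{\h_k}$ and Cauchy--Schwarz give $f(x)^2 \le \norm{f}_{\h_k}^2\, k(x,x)$, so integrating, $\norm{f}_{L^{2}(\mu)}^2 \le \norm{f}_{\h_k}^2 \int k(x,x)\,\mudx$. For the gradient term I would use the first-derivative reproducing property: under \cref{Differentiability,Growth}, every $f \in \h_k$ is differentiable, $\partial_i k(x, \cdot) \in \h_k$, and $\partial_i f(x) = \inner{f, \partial_i k(x, \cdot)}_{\h_k}$, with $\norm{\partial_i k(x, \cdot)}_{\h_k}^2 = \partial_i \partial_{i+d} k(x,x)$ --- precisely the mixed second partial $\frac{\partial^2 k(y,z)}{\partial y_i \partial z_i}$ on the diagonal that appears in $\sigma_{\mu,k,\lambda}$. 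Cauchy--Schwarz then yields $(\partial_i f(x))^2 \le \norm{f}_{\h_k}^2\, \partial_i \partial_{i+d} k(x,x)$; summing over $i$ and integrating against $\mu$ gives $\norm{\nabla f}_{L^{2}(\mu)}^2 \le \norm{f}_{\h_k}^2 \sum_{i=1}^d \int \partial_i \partial_{i+d} k(x,x)\,\mudx$.

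Adding the two bounds to $\lambda \norm{f}_{\h_k}^2$ collects exactly the quantity under the square root in the definition of $\sigma_{\mu,k,\lambda}$, i.e.\ $\norm{f}_{S(\mu),k,\lambda}^2 \le \sigma_{\mu,k,\lambda}^{-2} \norm{f}_{\h_k}^2$; taking square roots gives the claim. The inequality chain itself is routine; the real content is invoking the derivative reproducing step, which requires $k$ to be smooth enough that $\partial_i k(x, \cdot)$ lies in $\h_k$ and represents the functional $f \mapsto \partial_i f(x)$ --- this is where \cref{Differentiability} (and the accompanying growth conditions) are used, and it holds for the cases of interest such as $k = K \circ \phi_\psi$ with $K$ Gaussian or linear and $\phi_\psi$ differentiable. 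One also needs the integrals $\int k(x,x)\,\mudx$ and $\int \partial_i \partial_{i+d} k(x,x)\,\mudx$ to be finite, which follows from \cref{Moments,Integrability} on $\mu$ together with the kernel bounds, so that $\sigma_{\mu,k,\lambda}$ is well-defined and strictly positive.
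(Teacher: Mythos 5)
Your proof is correct and gives exactly the bound claimed. The paper's proof is mathematically the same computation, just phrased at a higher level of abstraction: it writes $f(x)^2 = \langle f\otimes f,\,k(x,\cdot)\otimes k(x,\cdot)\rangle_{\hs}$ and $(\partial_i f(x))^2 = \langle f\otimes f,\,\partial_i k(x,\cdot)\otimes\partial_i k(x,\cdot)\rangle_{\hs}$, then applies Cauchy--Schwarz for the Hilbert--Schmidt inner product and uses $\norm{g\otimes h}_\hs = \norm{g}_\h \norm{h}_\h$. Since $\langle f\otimes f,\, g\otimes g\rangle_{\hs} = \langle f,g\rangle_\h^2$, the HS Cauchy--Schwarz on these rank-one operators is literally the square of the ordinary Cauchy--Schwarz in $\h_k$ that you invoke, so the two arguments are term-for-term equivalent. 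Your version is a bit more elementary and self-contained; the paper's version reuses the operator-theoretic machinery it already set up for \cref{prop:dot_prod_expression} (where the operator $D_{\mu,\lambda}$ genuinely earns its keep), but nothing is gained by it here. One small point of care you implicitly rely on and could state explicitly: the derivative reproducing property $\partial_i f(x) = \langle f, \partial_i k(x,\cdot)\rangle_{\h_k}$ with $\norm{\partial_i k(x,\cdot)}_{\h_k}^2 = \partial_i\partial_{i+d}k(x,x)$ requires $k$ to be (at least) twice continuously differentiable, which is precisely \cref{Differentiability}; the integrability needed for $\sigma_{\mu,k,\lambda}$ to be finite is \cref{Integrability}. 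You mention both, so the proof stands.
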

We then define the Scaled Maximum Mean Discrepancy based on this bound of \cref{prop:Sobolev_Upperbound}:
\begin{equation}
  \SMMD_{\mu,k,\lambda}(\PP,\QQ)
  := \sup_{f \,:\, \sigma_{\mu,k,\lambda}^{-1}\Vert f\Vert_{\h}\leq1 }
     \E_{X \sim \PP}\left[ f(X) \right] - \E_{Y \sim \QQ}\left[ f(Y) \right]
   = \sigma_{\mu,k,\lambda} \MMD_k(\PP, \QQ)
  \label{eq:Scaled_MMD}
.\end{equation}
Because the constraint in the optimization of \eqref{eq:Scaled_MMD}
is more restrictive than in that of \eqref{eq:SobolevMMD},
we have that $\SMMD_{\mu,k,\lambda}(\PP, \QQ) \le \GCMMD_{\mu,k,\lambda}(\PP, \QQ)$.
The Sobolev norm $\norm{f}_{S(\mu),\lambda}$,
and a fortiori the gradient norm under $\mu$,
is thus also controlled for the SMMD critic.
We also show in
\cref{appendix:gcmmd-smmd}
that $\SMMD_{\mu,k,\lambda}$ behaves similarly to $\GCMMD_{\mu,k,\lambda}$
on Gaussians.

If $k_\psi = K \circ \phi_\psi$
    and %
    $K(a, b) = g(-\lVert a - b \rVert^{2})$,
    then $\sigma_{k,\mu,\lambda}^{-2} = \lambda + g(0) + 2 \lvert g'(0) \rvert \E_{\mu}\left[ \lVert \nabla\phi_\psi(X) \rVert_F^2 \right]$.
    Or if $K$ is linear, $K(a, b) = a\tp b$,
    then $\sigma_{k,\mu,\lambda}^{-2} = \lambda + \E_{\mu}\left[ \lVert \phi_\psi(X) \rVert^{2} + \lVert \nabla\phi_\psi(X) \rVert_F^2 \right]$.
Estimating these terms based on samples from $\mu$ is straightforward,
giving a natural estimator for the SMMD.

Of course,
if $\mu$ and $k$ are fixed,
the SMMD is simply a constant times the MMD,
and so behaves in essentially the same way as the MMD.
But optimizing the SMMD over a kernel family $\Psi$,
$\optSMMD(\PP,\QQ) := \sup_{\psi\in\Psi} \SMMD_{\mu,k_\psi,\lambda}(\PP, \QQ)$,
gives a distance very different from $\optMMD$ \eqref{eq:Optimized_MMD}.

\Cref{fig:mmd_vector_fields} (b) shows the vector field for the Optimized SMMD loss in \cref{example:diracgan},
using the WGAN-GP measure $\mu = \operatorname{Uniform}(0, \theta)$.
The optimization surface is far more amenable:
in particular the location $C$,
which formerly had an extremely small gradient that made learning effectively impossible,
now converges very quickly by first reducing the critic gradient until some signal is available.
\Cref{fig:mmd_vector_fields} (d) demonstrates that
$\optSMMD$, like $\optGCMMD$ and $\optLipMMD$
but in sharp contrast to $\optMMD$,
is continuous with respect to the location $\theta$ and provides a strong gradient towards 0.

We can establish that $\optSMMD$ is continuous in the Wasserstein topology under some conditions:
\begin{theorem}\label{thm:continuity_opt_SMMD}
Let $k_\psi = K\circ \phi_{\psi}$,
with $\phi_{\psi} : \mathcal X \to \R^s$ a fully-connected $L$-layer network
with Leaky-ReLU$_\alpha$ activations
whose layers do not increase in width,
and $K$ satisfying mild smoothness conditions $Q_K < \infty$
(\cref{decreasing_dimensions,leaky_relu,Lichitz_kernel,Convexe_Hessian} in \cref{appendix:continuity_opt_smmd}).
Let $\Psi^\kappa$ be the set of parameters where each layer's weight matrices have condition number $\cond(W^l) = \norm{W^l} / \sigmamin(W^l) \le \kappa < \infty$.
If $\mu$ has a density (\cref{full_support}),
then
\[
  \optSMMD[\Psi^\kappa](\PP, \QQ) \le \frac{Q_K \kappa^{L/2}}{\sqrt{d_L} \alpha^{L/2}} \W(\PP,\QQ)
.\]
Thus if $\PP_n \wassconv \PP$, then $\optSMMD[\Psi^\kappa](\PP_n, \PP) \to 0$,
even if $\mu$ is chosen to depend on $\PP$ and $\QQ$.
\end{theorem}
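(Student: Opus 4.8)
The plan is to show that for every $\psi\in\Psi^\kappa$ the SMMD is dominated by a fixed multiple of $\W$, with the multiple independent of $\psi$ and of $\mu$. Recall $\SMMD_{\mu,k_\psi,\lambda}(\PP,\QQ)=\sigma_{\mu,k_\psi,\lambda}\MMD_{k_\psi}(\PP,\QQ)=\sup\{\E_{\PP}f-\E_{\QQ}f:\norm{f}_{\mathcal H_{k_\psi}}\le\sigma_{\mu,k_\psi,\lambda}\}$ while $\W(\PP,\QQ)=\sup\{\E_{\PP}f-\E_{\QQ}f:\norm{f}_{\lip}\le1\}$. So it suffices to prove a uniform operator-type bound $\norm{f}_{\lip}\le C(\psi)\norm{f}_{\mathcal H_{k_\psi}}$ for all $f\in\mathcal H_{k_\psi}$, with $\sigma_{\mu,k_\psi,\lambda}\,C(\psi)\le Q_K\kappa^{L/2}/(\sqrt{d_L}\,\alpha^{L/2})$ for every $\psi\in\Psi^\kappa$: rescaling an SMMD-admissible critic then makes it $\W$-admissible, so $\SMMD_{\mu,k_\psi,\lambda}(\PP,\QQ)\le\sigma_{\mu,k_\psi,\lambda}C(\psi)\,\W(\PP,\QQ)$, and taking the supremum over $\psi\in\Psi^\kappa$ gives the displayed inequality. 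The final claim is then immediate, since $\PP_n\wassconv\PP$ means precisely $\W(\PP_n,\PP)\to0$ and the prefactor does not involve $\mu$.

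For the Lipschitz bound I would use the derivative reproducing property in $\mathcal H_{k_\psi}$: $\sup_{\norm{f}_{\mathcal H_{k_\psi}}\le1}\norm{\nabla f(x)}^2$ equals the spectral norm of the $d\times d$ matrix $[\partial_i\partial_{j+d}k_\psi(x,x)]_{ij}$, and since $k_\psi=K\circ\phi_\psi$ the chain rule factors this matrix as $J(x)\tp A_K(\phi_\psi(x))J(x)$, where $J(x)$ is the Jacobian of $\phi_\psi$ at $x$ and $A_K$ is the corresponding matrix built from $K$. The smoothness hypotheses on $K$ bound the spectral norm of $A_K$ uniformly (and keep it uniformly positive definite), which is the role of $Q_K$; hence $\norm{\nabla f(x)}\le Q_K\norm{J(x)}\norm{f}_{\mathcal H_{k_\psi}}$, and since Leaky-ReLU is $1$-Lipschitz, $\sup_x\norm{J(x)}\le\prod_{l=1}^L\norm{W^l}$. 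On the other side, the closed form of $\sigma_{\mu,k_\psi,\lambda}$ (the radial case given after \cref{prop:Sobolev_Upperbound}) yields $\sigma_{\mu,k_\psi,\lambda}^{-2}\ge\mathrm{const}(K)\cdot\E_{\mu}\bigl[\norm{\nabla\phi_\psi(X)}_F^2\bigr]$, so what remains is a matching pointwise lower bound on $\norm{J(x)}_F$.

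This lower bound is the crux. Because the widths do not increase, $\rank J(x)\le d_L$, so $\norm{J(x)}_F^2\ge d_L\,\sigmamin(J(x))^2$; and $\sigmamin(J(x))=\sigmamin(J(x)\tp)$ with $J(x)\tp=(W^1)\tp D^1(x)(W^2)\tp\cdots D^{L-1}(x)(W^L)\tp$, a product of the tall matrices $(W^l)\tp$, injective precisely because the widths do not increase, so that $\norm{(W^l)\tp u}\ge\sigmamin(W^l)\norm{u}$ for all $u$, alternating with diagonal matrices $D^l(x)$ whose entries are Leaky-ReLU$_\alpha$ derivatives, hence in $[\alpha,1]$. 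Composing these estimates yields $\sigmamin(J(x)\tp)\ge\alpha^{L-1}\prod_{l}\sigmamin(W^l)$ at every point of differentiability of $\phi_\psi$, which covers $\mu$-almost every $x$ because $\mu$ has a density. Assembling everything, each layer contributes $\norm{W^l}/\sigmamin(W^l)=\cond(W^l)\le\kappa$, the $D^l(x)$ factors contribute the $\alpha$-powers, $\rank J\le d_L$ contributes $1/\sqrt{d_L}$, the $A_K$ constants fold into $Q_K$, and crucially $\E_\mu[\norm{\nabla\phi_\psi}_F^2]$ appears in both the numerator bound and $\sigma_{\mu,k_\psi,\lambda}^{-2}$ and cancels, so the prefactor is $\mu$-free.

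The main obstacle I anticipate is carrying out this singular-value bookkeeping tightly enough to land on the stated exponents $\kappa^{L/2}$, $\alpha^{L/2}$ rather than a cruder combination, and doing it rigorously: handling the Lebesgue-null (hence $\mu$-null) non-differentiability set of the Leaky-ReLUs, the degenerate case $\sigmamin(W^l)=0$, and verifying that the non-radial part of $K$ is controlled by the Lipschitz-kernel and convex-Hessian assumptions without spoiling the $d_L$ or $\kappa$ dependence. Everything else --- the reduction to a Lipschitz ball, the chain rule, and the product bounds on $\norm{J}$ --- is routine once these estimates are in place.
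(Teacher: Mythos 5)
Your strategy is essentially the paper's, but with a noteworthy detour in one step. Where you propose to establish $\norm{f}_\lip \le C(\psi)\norm{f}_{\h_{k_\psi}}$ by computing $\sup_{\norm{f}_{\h}\le 1}\norm{\nabla f(x)}^2$ as the largest eigenvalue of the Gram matrix $[\partial_i\partial_{j+d}k_\psi(x,x)]_{ij}$ and factoring it through the Jacobian of $\phi_\psi$, the paper never differentiates the critic at all: it proves the coupling inequality $\MMD_{k_\psi}(\PP,\QQ)\le\W_{d_\psi}(\PP,\QQ)$ for the RKHS pseudo-metric $d_\psi(x,y)=\norm{k_\psi(x,\cdot)-k_\psi(y,\cdot)}_{\h_{k_\psi}}$ (its \cref{thm:mmd-w-upperbound}), then bounds $d_\psi(x,y)\le L_K\norm{\phi_\psi}_\lip\norm{x-y}$ directly from \cref{Lichitz_kernel}. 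That finite-difference route is both simpler and safer: your Gram-matrix route has to contend with the non-differentiability of $\phi_\psi$ appearing in the \emph{upper} bound, a sup over all $x$, which is not protected by \cref{full_support} (that assumption only controls the $\mu$-average appearing in $\sigma_{\mu,k_\psi,\lambda}^{-2}$). Note you do not actually need the Gram-matrix machinery even for your ball-inclusion reduction, since $\abs{f(x)-f(y)}\le\norm{f}_\h\, d_\psi(x,y)\le L_K\norm{\phi_\psi}_\lip\norm{f}_\h\norm{x-y}$ already gives $\norm{f}_\lip\le L_K\norm{\phi_\psi}_\lip\norm{f}_\h$ without mentioning $\nabla f$. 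From there on your plan and the paper's coincide: pass to the per-layer-normalized parameters $\Psi^\kappa_1$ so the $\prod_l\norm{W^l}$ factors cancel (the paper's \cref{appendix:prop:pseudo_homogeneity}), lower-bound $\norm{\nabla\phi_{\bar\psi}(X)}_F^2\ge d_L\,\sigmamin(\underbar W^{H^L_X})^2$ Lebesgue-a.e.\ using width monotonicity and \cref{lem:min-sv}, and use \cref{Convexe_Hessian} to get $\sigma_{\mu,k_\psi,\lambda}^{-2}\ge\gamma_K^2\,\E_\mu[\norm{\nabla\phi_\psi}_F^2]$.

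Your worry about the exponents is well founded, and it is in fact a defect in the paper, not in your bookkeeping. Chasing singular values through $J(x)\tp=(W^1)\tp D^1(x)\cdots D^{L-1}(x)(W^L)\tp$ gives $\sigmamin(J(x))\ge\alpha^{L-1}\prod_l\sigmamin(W^l)$, hence for $\bar\psi\in\Psi^\kappa_1$ the Frobenius lower bound is $\norm{\nabla\phi_{\bar\psi}(X)}_F^2\ge d_L\,\alpha^{2(L-1)}/\kappa^{2L}$, and the resulting prefactor is of order $\kappa^{L}/\alpha^{L-1}$, not $\kappa^{L/2}/\alpha^{L/2}$. The paper's \cref{lem:grad_essinf} states $\norm{\nabla\phi_\psi(X)}_F^2\ge d_L\alpha^L/\kappa^L$, but its last line passes from $\sigmamin(\underbar W^{H^L_X})\ge(\alpha/\kappa)^L$ to $\sigmamin(\underbar W^{H^L_X})^2\ge(\alpha/\kappa)^L$, dropping a square. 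So you should not try to force the calculation to land on $L/2$; the honest constant is the cruder one you would obtain, and the qualitative conclusion (continuity in the Wasserstein topology over $\Psi^\kappa$, uniformly in $\mu$) is unaffected.
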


\paragraph{Uniform bounds vs bounds in expectation}
Controlling $\lVert \nabla f_\psi \rVert_{L^2(\mu)}^2 = \E_\mu \lVert \nabla f_\psi(X) \rVert^2$
does not necessarily imply a bound on $\lVert f \rVert_\lip \ge \sup_{x \in \x} \lVert \nabla f_\psi(X) \rVert$,
and so does not in general give continuity via \cref{prop:optmmd:weakness}.
\cref{thm:continuity_opt_SMMD} implies that when the network's weights are well-conditioned,
it is sufficient to only control $\lVert \nabla f_\psi \rVert_{L^2(\mu)}^2$,
which is far easier in practice than controlling $\Vert f\Vert_\lip$.
If we instead tried to directly controlled $\lVert f \rVert_\lip$ with e.g.\ spectral normalization (SN) \citep{Miyato:2018},
we could significantly reduce the expressiveness of the parametric family.
In \cref{example:diracgan},
constraining $\lVert \phi_\psi \rVert_\lip = 1$ limits us to only $\Psi = \{ 1 \}$.
Thus $\optMMD[\{1\}]$ is simply the $\MMD$ with an RBF kernel of bandwidth 1,
which has poor gradients when $\theta$ is far from $0$ (\cref{fig:mmd_vector_fields} (c), blue).
The Cauchy-Schwartz bound of \cref{prop:Sobolev_Upperbound}
allows jointly adjusting the smoothness of $k_\psi$ and the critic $f$,
while SN must control the two independently.
Relatedly, limiting $\lVert \phi \rVert_\lip$ by limiting the Lipschitz norm of each layer
could substantially reduce capacity,
while $\lVert \nabla f_\psi \rVert_{L^2(\mu)}$ need not be decomposed by layer.
Another advantage is that $\mu$ %
provides a data-dependent measure of complexity as in \cite{Bousquet:2004}:
we do not needlessly prevent ourselves from using critics that behave poorly only far from the data.

\paragraph{Spectral parametrization}\label{par:parametrization}
When the generator is near a local optimum,
the critic might identify only one direction on which $\QQ_\theta$ and $\PP$ differ.
If the generator parameterization is such that there is no local way for the generator to correct it,
the critic may begin to single-mindedly focus on this difference,
choosing redundant convolutional filters
and causing the condition number of the weights to diverge.
If this occurs, the generator will be motivated to fix this single direction
while ignoring all other aspects of the distributions,
after which it may become stuck.
We can help avoid this collapse by using a critic parameterization
that encourages diverse filters with higher-rank weight matrices.
\Textcite{Miyato:2018} propose to parameterize the weight matrices as
$W = \gamma \bar{W} / \lVert \bar{W} \rVert_\text{op}$,
where $\Vert \bar{W} \Vert_\text{op}$ is the spectral norm of $\bar{W}$.
This parametrization works particularly well with $\optSMMD$;
\cref{fig:celebA_scores_and_singular_values} (b) shows the singular values of the second layer of a critic's network (and \cref{fig:singular_values-full}, in \cref{appendix:smmd_vs_sn}, shows more layers), while \cref{fig:celebA_scores_and_singular_values} (d) shows the evolution of the condition number during training.
The conditioning of the weight matrix remains stable throughout training with spectral parametrization,
while it worsens through training in the default case.

\section{Experiments} \label{sec:experiments}

We evaluated unsupervised image generation on three datasets:
CIFAR-10 \parencite{cifar10} ($60\,000$ images, $32\times32$),
CelebA \parencite{celeba} ($202\,599$ face images, resized and cropped to $160\times160$ as in \cite{Binkowski:2018}),
and the more challenging ILSVRC2012 (ImageNet) dataset \parencite{Russakovsky:2014} ($1\,281\,167$ images, resized to $64\times64$).
Code for all of these experiments is available at
\httpsurl{github.com/MichaelArbel/Scaled-MMD-GAN}.

\textbf{Losses} %
All models are based on a scalar-output critic network $\phi_\psi : \x \to \R$,
except MMDGAN-GP where $\phi_\psi : \x \to \R^{16}$ as in \cite{Binkowski:2018}.
The WGAN and Sobolev GAN use a critic $f = \phi_\psi$,
while the GAN uses a discriminator $D_\psi(x) = 1 / (1 + \exp(-\phi_\psi(x)))$.
The MMD-based methods use a kernel $k_\psi(x, y) = \exp( - (\phi_\psi(x) - \phi_\psi(y))^2 / 2 )$,
except for MMDGAN-GP which uses a mixture of RQ kernels as in \cite{Binkowski:2018}.
Increasing the output dimension of the critic or using a different kernel didn't substantially change the performance of our proposed method.
We also consider SMMD with a linear top-level kernel, $k(x, y) = \phi_\psi(x) \phi_\psi(y)$;
because this becomes essentially identical to a WGAN (\cref{appendix:wgan-linear-kernel}),
we refer to this method as SWGAN.
SMMD and SWGAN use $\mu = \PP$; Sobolev GAN uses $\mu = (\PP + \QQ) / 2$ as in \cite{sobolev-gan}.
We choose $\lambda$ and an overall scaling to obtain the losses:
\[
  \text{SMMD: }
  \frac{\widehat{\MMD}_{k_\psi}^2(\PP, \QQ_\theta)}{1 + 10 \E_{\hat\PP}\left[ \lVert\nabla \phi_\psi(X) \rVert_F^2 \right]}
  ,
  \text{ SWGAN: }
  \frac{\E_{\hat\PP}\left[ \phi_\psi(X) \right] - \E_{\hat\QQ_\theta}\left[ \phi_\psi(X) \right]}{\sqrt{1 + 10 \left( \E_{\hat\PP}\left[ \lvert \phi_\psi(X) \rvert^2 \right] + \E_{\hat\PP}\left[ \lVert \nabla \phi_\psi(X) \rVert_F^2 \right] \right)}}
.\]

\textbf{Architecture} %
For CIFAR-10, we used the CNN architecture proposed by \cite{Miyato:2018}
with a 7-layer critic and a 4-layer generator.
For CelebA, we used a 5-layer DCGAN discriminator and a 10-layer ResNet generator as in \parencite{Binkowski:2018}.
For ImageNet, we used a 10-layer ResNet for both the generator and discriminator.
In all experiments we used $64$ filters for the smallest convolutional layer,
and double it at each layer (CelebA/ImageNet) or every other layer (CIFAR-10).
The input codes for the generator are drawn from $\mathrm{Uniform}\left( [-1, 1]^{128} \right)$.
We consider two parameterizations for each critic:
a standard one where the parameters can take any real value,
and a spectral parametrization (denoted SN-)
as above \parencite{Miyato:2018}.
Models without explicit gradient control
(SN-GAN, SN-MMDGAN, SN-MMGAN-L2, SN-WGAN)
fix $\gamma = 1$, for spectral normalization;
others learn $\gamma$, using a spectral parameterization.

\textbf{Training}
All models were trained for $150\,000$ generator updates on a single GPU,
except for ImageNet where the model was trained on 3 GPUs simultaneously.
To limit communication overhead %
we averaged the MMD estimate on each GPU,
giving the block MMD estimator \parencite{b-test}.
We always used $64$ samples per GPU from each of $\PP$ and $\QQ$,
and $5$ critic updates per generator step.
We used initial learning rates of $0.0001$ for CIFAR-10 and CelebA,
$0.0002$ for ImageNet,
and decayed these rates using the KID adaptive scheme of \cite{Binkowski:2018}:
every $2\,000$ steps, generator samples are compared to those from $20\,000$ steps ago,
and if the relative KID test \parencite{3sample} fails to show an improvement three consecutive times,
the learning rate is decayed by $0.8$.
We used the Adam optimizer \parencite{adam} with $\beta_1 = 0.5$, $\beta_2 = 0.9$.

\textbf{Evaluation}
To compare the sample quality of different models,
we considered three different scores
based on the Inception network \parencite{inception} trained for ImageNet classification,
all using default parameters in the implementation of \cite{Binkowski:2018}.
The \emph{Inception Score (IS)} \parencite{improved-gans}
is based on the entropy of predicted labels;
higher values are better.
Though standard, this metric has many issues,
particularly on datasets other than ImageNet \parencite{note-on-inception,fid,Binkowski:2018}.
The \emph{FID} \parencite{fid}
instead measures the similarity of samples from the generator and the target
as the Wasserstein-2 distance between Gaussians fit to
their intermediate representations.
It is more sensible than the IS and becoming standard,
but its estimator is strongly biased \parencite{Binkowski:2018}.
The \emph{KID} \parencite{Binkowski:2018}
is similar to FID,
but by using a polynomial-kernel MMD its estimates enjoy better statistical properties
and are easier to compare.
(A similar score was recommended by \cite{empirical-evaluation}.)

 \begin{figure}[p]
  \centering
    \includegraphics[width=1.\linewidth]{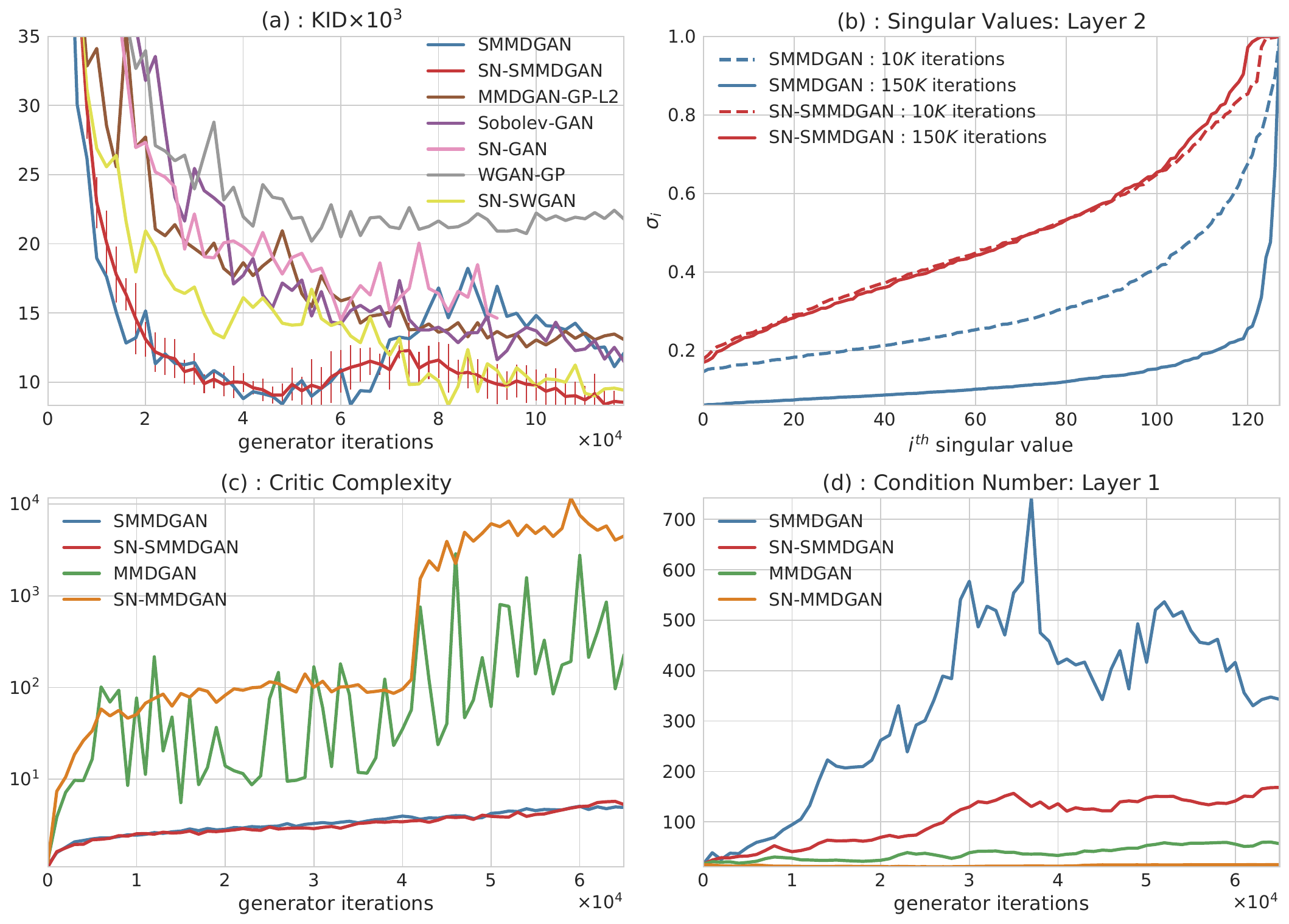}
    \caption{The training process on CelebA.
      (a) KID scores. We report a final score for SN-GAN slightly before its sudden failure mode;
          MMDGAN and SN-MMDGAN were unstable and had scores around 100.
      (b) Singular values of the second layer, both early (dashed) and late (solid) in training.
      (c) $\sigma_{\mu,k,\lambda}^{-2}$ for several MMD-based methods.
      (d) The condition number in the first layer through training.
      SN alone does not control $\sigma_{\mu,k,\lambda}$,
      and SMMD alone does not control the condition number.
    }
    \label{fig:celebA_scores_and_singular_values}
 \end{figure}

\begin{table}[ht]
  \centering
  \caption{Mean (standard deviation) of score estimates, based on $50\,000$ samples from each model.} %
  \label{tab:scores}
  \begin{subtable}[t]{\linewidth}
    \centering
    \caption{CIFAR-10 and CelebA.}
    \label{tab:celebA_cifar10_scores}
    \begin{tabular}{lllllll}
\toprule
{} & \multicolumn{3}{l}{CIFAR-10} & \multicolumn{3}{l}{CelebA} \\
Method &                                    IS &                                    FID &                       KID$\times 10^3$ &                                    IS &                                    FID &                      KID$\times 10^3$ \\
\midrule
WGAN-GP      &             6.9$\mathsmaller{\pm}$0.2 &             31.1$\mathsmaller{\pm}$0.2 &             22.2$\mathsmaller{\pm}$1.1 &             2.7$\mathsmaller{\pm}$0.0 &             29.2$\mathsmaller{\pm}$0.2 &            22.0$\mathsmaller{\pm}$1.0 \\
MMDGAN-GP-L2 &             6.9$\mathsmaller{\pm}$0.1 &             31.4$\mathsmaller{\pm}$0.3 &             23.3$\mathsmaller{\pm}$1.1 &             2.6$\mathsmaller{\pm}$0.0 &             20.5$\mathsmaller{\pm}$0.2 &            13.0$\mathsmaller{\pm}$1.0 \\
Sobolev-GAN  &             7.0$\mathsmaller{\pm}$0.1 &             30.3$\mathsmaller{\pm}$0.3 &             22.3$\mathsmaller{\pm}$1.2 &     \textbf{2.9$\mathsmaller{\pm}$0.0}&             16.4$\mathsmaller{\pm}$0.1 &            10.6$\mathsmaller{\pm}$0.5 \\
SMMDGAN      &             7.0$\mathsmaller{\pm}$0.1 &             31.5$\mathsmaller{\pm}$0.4 &             22.2$\mathsmaller{\pm}$1.1 &             2.7$\mathsmaller{\pm}$0.0 &             18.4$\mathsmaller{\pm}$0.2 &            11.5$\mathsmaller{\pm}$0.8 \\
SN-GAN       &     \textbf{7.2$\mathsmaller{\pm}$0.1}&             26.7$\mathsmaller{\pm}$0.2 &    \textbf{16.1$\mathsmaller{\pm}$0.9} &             2.7$\mathsmaller{\pm}$0.0 &             22.6$\mathsmaller{\pm}$0.1 &            14.6$\mathsmaller{\pm}$1.1 \\
SN-SWGAN     &     \textbf{7.2$\mathsmaller{\pm}$0.1}&             28.5$\mathsmaller{\pm}$0.2 &    \textbf{17.6$\mathsmaller{\pm}$1.1} &             2.8$\mathsmaller{\pm}$0.0 &             14.1$\mathsmaller{\pm}$0.2 &            \phantom{0}7.7$\mathsmaller{\pm}$0.5 \\
SN-SMMDGAN   &     \textbf{7.3$\mathsmaller{\pm}$0.1}&     \textbf{25.0$\mathsmaller{\pm}$0.3}&    \textbf{16.6$\mathsmaller{\pm}$2.0} &             2.8$\mathsmaller{\pm}$0.0 &     \textbf{12.4$\mathsmaller{\pm}$0.2}&  \textbf{\phantom{0}6.1$\mathsmaller{\pm}$0.4} \\
\bottomrule
\end{tabular}

  \end{subtable}
  \begin{subtable}[t]{\linewidth}
    \centering
    \caption{ImageNet.}
    \label{tab:imagenet_scores}
    \begin{tabular}{llll}
\toprule
Method &                                     IS &                                    FID &                       KID$\times 10^3$ \\
\midrule
BGAN       &             10.7$\mathsmaller{\pm}$0.4 &             43.9$\mathsmaller{\pm}$0.3 &             47.0$\mathsmaller{\pm}$1.1 \\
SN-GAN     &             \textbf{11.2$\mathsmaller{\pm}$0.1} &             47.5$\mathsmaller{\pm}$0.1 &             44.4$\mathsmaller{\pm}$2.2 \\
SMMDGAN    &             10.7$\mathsmaller{\pm}$0.2 &             38.4$\mathsmaller{\pm}$0.3 &             39.3$\mathsmaller{\pm}$2.5 \\
SN-SMMDGAN & 10.9$\mathsmaller{\pm}$0.1 &  $\textbf{36.6$\mathsmaller{\pm}$0.2}$ &  $\textbf{34.6$\mathsmaller{\pm}$1.6}$ \\
\bottomrule
\end{tabular}

  \end{subtable}
\end{table}

\begin{figure}[p]
    \centering
    \begin{subfigure}[t]{0.30\textwidth}
        \centering
        \includegraphics[width=\linewidth]{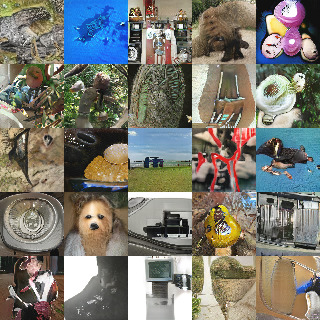}
        \caption{Scaled MMD GAN with SN}
        \label{fig:imagenet_sn_smmd}
    \end{subfigure}
    \hfill
    \begin{subfigure}[t]{0.30\textwidth}
        \centering
        \includegraphics[width=\linewidth]{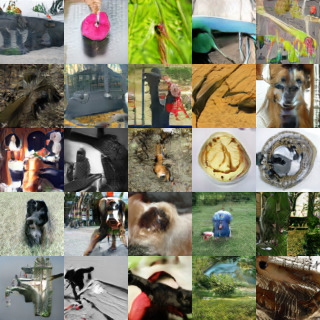}
        \caption{SN-GAN} \label{fig:imagenet_sngan}
    \end{subfigure}
    \hfill
    \begin{subfigure}[t]{0.30\textwidth}
        \centering
        \includegraphics[width=\linewidth]{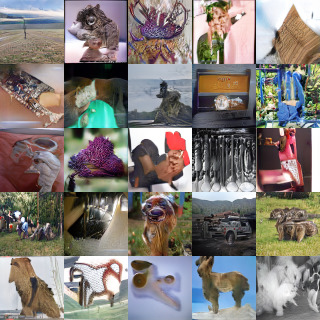}
        \caption{Boundary Seeking GAN} \label{fig:imagenet_bgan}
    \end{subfigure}

    \begin{subfigure}[t]{0.30\textwidth}
        \centering
        \includegraphics[width=\linewidth]{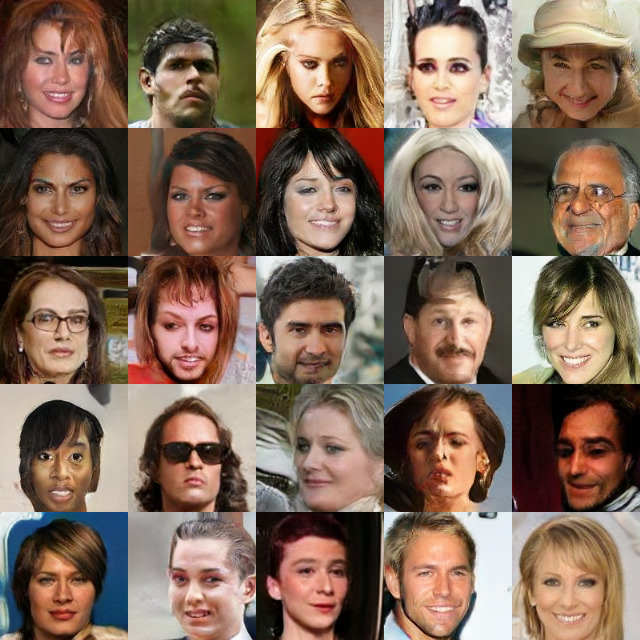}
        \caption{Scaled MMD GAN with SN} \label{fig:celebA_sn_smmd}
    \end{subfigure}
    \hfill
    \begin{subfigure}[t]{0.30\textwidth}
        \centering
        \includegraphics[width=\linewidth]{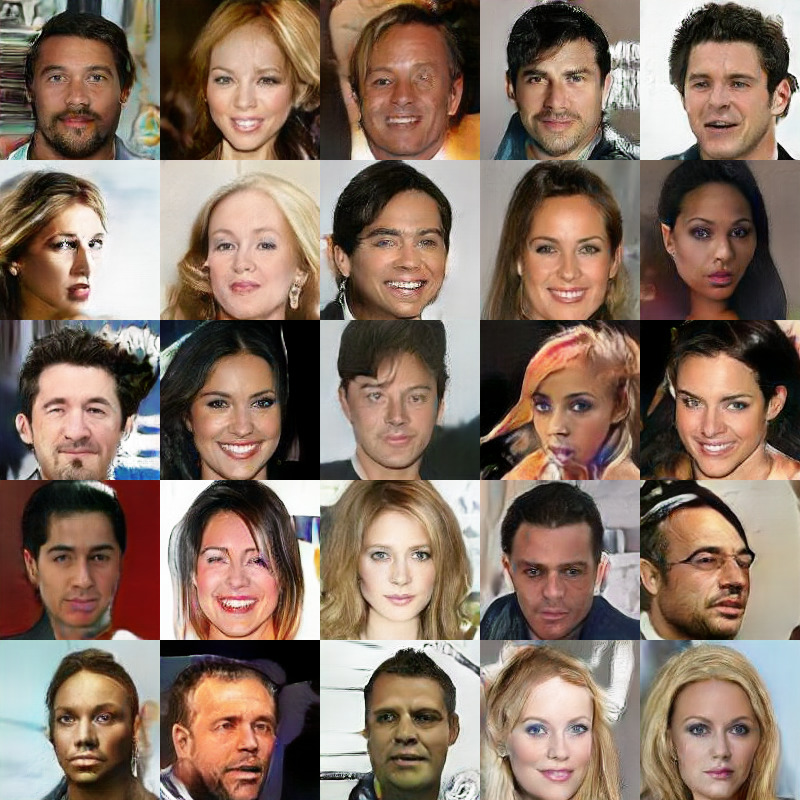}
        \caption{Scaled WGAN with SN} \label{fig:celebA_sn_swgan}
    \end{subfigure}
    \hfill
    \begin{subfigure}[t]{0.30\textwidth}
        \centering
        \includegraphics[width=\linewidth]{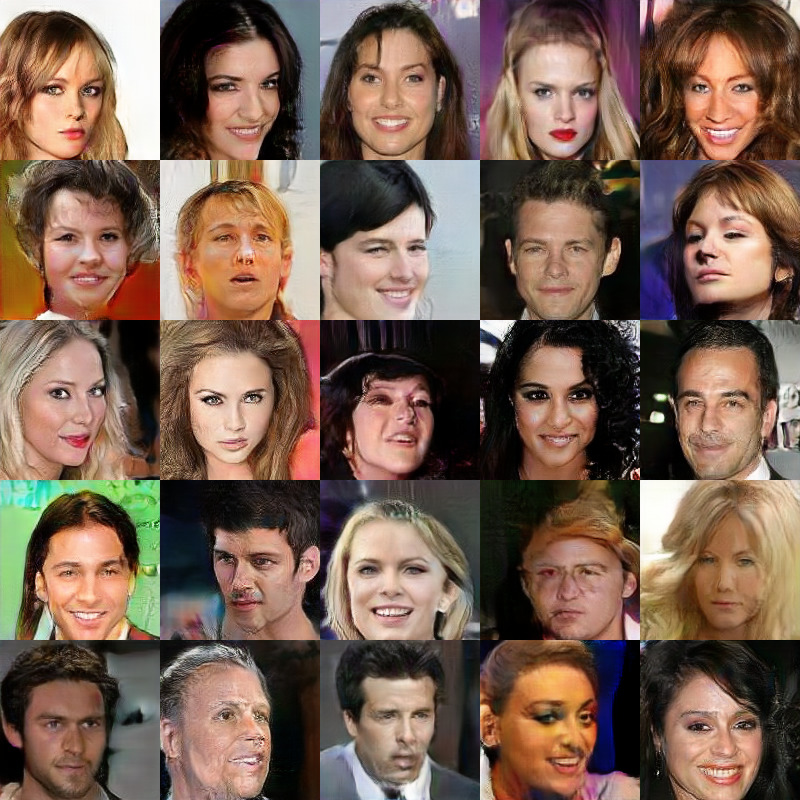}
        \caption{MMD GAN with GP+L2} \label{fig:celebA_mmd_gp}
    \end{subfigure}
    \caption{Samples from various models. Top: $64 \times 64$ ImageNet; bottom: $160 \times 160$ CelebA.}
    \label{fig:samples}
\end{figure}

\textbf{Results}
\cref{tab:celebA_cifar10_scores} presents the scores for models trained on both CIFAR-10 and CelebA datasets. On CIFAR-10,  SN-SWGAN and  SN-SMMDGAN performed comparably to SN-GAN. But on CelebA,
SN-SWGAN and SN-SMMDGAN dramatically outperformed the other methods with the same architecture in all three metrics. It also trained faster, and consistently outperformed other methods over multiple initializations (\cref{fig:celebA_scores_and_singular_values} (a)).
It is worth noting that SN-SWGAN far outperformed WGAN-GP on both datasets.
\cref{tab:imagenet_scores} presents the scores for SMMDGAN and SN-SMMDGAN trained on ImageNet,
and the scores of pre-trained models using BGAN \citep{began} and SN-GAN \citep{Miyato:2018}.\footnote{These models are courtesy of the respective authors and also trained at $64 \times 64$ resolution. SN-GAN used the same architecture as our model, but trained for $250\,000$ generator iterations; BS-GAN used a similar 5-layer ResNet architecture and trained for 74 epochs, comparable to SN-GAN.}
The proposed methods substantially outperformed both methods in FID and KID scores.
\cref{fig:samples} shows samples on ImageNet and CelebA;
\cref{appendix:additional-samples} has more.

\textbf{Spectrally normalized WGANs / MMDGANs}
To control for the contribution of the spectral parametrization to the performance,
we evaluated variants of MMDGANs, WGANs and Sobolev-GAN using spectral normalization
(in \cref{tab:sn_cifar10_scores}, \cref{appendix:smmd_vs_sn}).
WGAN and Sobolev-GAN led to unstable training and didn't converge at all (\cref{fig:score_per_iter_cifar10_sn}) despite many attempts.
MMDGAN converged on CIFAR-10 (\cref{fig:score_per_iter_cifar10_sn})  but was unstable on CelebA (\cref{fig:loss_celebA}).
The gradient control due to SN is thus probably too loose for these methods.
This is reinforced by \cref{fig:celebA_scores_and_singular_values} (c),
which shows that the expected gradient of the critic network
is much better-controlled by SMMD, even when SN is used.
We also considered variants of these models with a learned $\gamma$
while also adding a gradient penalty and an $L_2$ penalty on critic activations \citep[footnote 19]{Binkowski:2018}.
These generally behaved similarly to MMDGAN, and didn't lead to substantial improvements.
We ran the same experiments on CelebA,
but aborted the runs early when it became clear that training was not successful.

\textbf{Rank collapse}
We occasionally observed the failure mode for SMMD where the critic becomes low-rank,
discussed in \cref{par:parametrization},
especially on CelebA;
this failure was obvious even in the training objective.
\Cref{fig:celebA_scores_and_singular_values} (b) is one of these examples.
Spectral parametrization seemed to prevent this behavior.
We also found one could avoid collapse by reverting to an earlier checkpoint
and increasing the RKHS regularization parameter $\lambda$,
but did not do this for any of the experiments here.

\section{Conclusion}
We studied gradient regularization for MMD-based critics in implicit generative models,
clarifying how previous techniques relate to the $\optMMD$ loss.
Based on these insights,
we proposed the Gradient-Constrained MMD and its approximation the Scaled MMD,
a new loss function for IGMs that controls gradient behavior in a principled way
and obtains excellent performance in practice.

One interesting area of future study for these distances is their behavior
when used to diffuse particles distributed as $\QQ$ towards particles distributed as $\PP$.
\Textcite[Appendix A.1]{sobolev-gan} began such a study for the Sobolev GAN loss; %
\cite{sobolev-descent} proved convergence and studied discrete-time approximations.

Another area to explore is the geometry of these losses,
as studied by \textcite{Bottou:2017},
who showed potential advantages of the Wasserstein geometry over the MMD.
Their results, though, do not address any distances based on optimized kernels;
the new distances introduced here
might have interesting geometry of their own.

\printbibliography

\clearpage
\appendix

\section{Proofs} \label{appendix:proofs}

We first review some basic properties of Reproducing Kernel Hilbert
Spaces. We consider here a separable RKHS $\h$ with basis $(e_{i})_{i\in I}$,
where $I$ is either finite if $\h$ is finite-dimensional, or $I=\mathbb{N}$
otherwise. We also assume that the reproducing kernel $k$ is continuously
twice differentiable.

We use a slightly nonstandard notation for derivatives:
$\partial_i f(x)$ denotes the $i$th partial derivative of $f$ evaluated at $x$,
and $\partial_i \partial_{j+d} k(x, y)$
denotes $\frac{\partial^2 k(a, b)}{\partial a_i \partial b_j} \vert_{(a,b) = (x, y)}$.

Then the following reproducing properties hold for
any given function $f$ in $\h$ \citep[Lemma 4.34]{SteChr08}:
\begin{align}
f(x)= & \langle f,k(x,.)\rangle_{\h}\label{eq:reproducing_prop}\\
\partial_{i}f(x)= & \langle f,\partial_{i}k(x,.)\rangle_{\h}\label{eq:reproducing_derivative}
.\end{align}

We say that an operator $A:\h\mapsto\h$ is Hilbert-Schmidt if $\Vert A\Vert_{HS}^{2}=\sum_{i\in I}\Vert Ae_{i}\Vert_{\h}^{2}$
is finite. $\Vert A\Vert_\hs$ is called the Hilbert-Schmidt norm
of $A$. The space of Hilbert-Schmidt operators itself a Hilbert space
with the inner product $\langle A,B\rangle_{HS}=\sum_{i\in I}\langle Ae_{i},Be_{i}\rangle_{\h}$.
Moreover, we say that an operator $A$ is trace-class if its trace
norm is finite, i.e. $\Vert A\Vert_{1}=\sum_{i\in I}\langle e_{i},(A^{*}A)^{\frac{1}{2}}e_{i}\rangle_{\h}<\infty$.
The outer product $f \otimes g$ for $f, g \in \h$
gives an $\h \to \h$ operator such that
$(f \otimes g) v = \langle g, v\rangle_{\h} f$ for all $v$ in $\h$.

Given two vectors $f$ and $g$ in $\h$ and a Hilbert-Schmidt operator
$A$ we have the following properties:
\begin{proplist}
  \item \label{HS_norm_rank_one}
    The outer product $f\otimes g$ is a Hilbert-Schmidt operator with Hilbert-Schmidt norm given by: $\Vert f\otimes g\Vert_\hs = \Vert f\Vert_{\h}\Vert g\Vert_{\h}$.
  \item \label{HS_inner_rank_one}
    The inner product between two rank-one operators $f\otimes g$ and $u\otimes v$ is
    $\langle f\otimes g,u\otimes v\rangle_\hs=\langle f,u\rangle_{\h}\langle g,v\rangle_{\h}$.
  \item \label{HS_inner_prod_rank_one}
    The following identity holds: $\langle f, Ag\rangle_\h = \langle f\otimes g, A\rangle_{\hs}$.
\end{proplist}

Define the following covariance-type operators:
\begin{equation}
     D_x
   = k(x, \cdot) \otimes k(x, \cdot) + \sum_{i=1}^d \partial_i k(x, \cdot) \otimes \partial_i k(x, \cdot)
  \quad
     D_\mu
   = \E_{X \sim \mu} D_X
  \quad
     D_{\mu,\lambda}
   = D_\mu + \lambda I
\label{eq:d-op}
;\end{equation}
these are useful in that, using \eqref{eq:reproducing_prop} and \eqref{eq:reproducing_derivative},
$\langle f, D_x g \rangle = f(x) g(x) + \sum_{i=1}^d \partial_i f(x) \, \partial_i g(x)$.

\subsection{Definitions and estimators of the new distances} \label{sec:proofs:distances}

We will need the following assumptions about the distributions $\PP$ and $\QQ$,
the measure $\mu$,
and the kernel $k$:
\begin{assumplist}
  \item \label{Moments} $\PP$ and $\QQ$ have integrable first moments.
  \item \label{Growth} $\sqrt{k(x, x)}$ grows at most linearly in $x$: for all $x$ in $\x$, $\sqrt{k(x, x)} \leq C (\lVert x \rVert + 1)$ for some constant $C$.
  \item \label{Differentiability} The kernel $k$ is twice continuously differentiable.
  \item \label{Integrability} The functions $x \mapsto k(x, x)$ and $x \mapsto \partial_i \partial_{i+d} k(x, x)$ for $1\leq i\leq d$ are $\mu$-integrable.
\end{assumplist}
When $k = K \circ \phi_\psi$, \cref{Growth} is automatically satisfied by a $K$ such as the Gaussian;
when $K$ is linear, it is true for a quite general class of networks $\phi_\psi$ \citep[Lemma 1]{Binkowski:2018}.

We will first give a form for the Gradient-Constrained MMD \eqref{eq:SobolevMMD}
in terms of the operator \eqref{eq:d-op}:
\begin{prop} \label{prop:dot_prod_expression}
Under \cref{Moments,Growth,Differentiability,Integrability},
the Gradient-Constrained MMD is given by
 \begin{equation}
   \GCMMD_{\mu,k,\lambda}(\PP, \QQ)
   = \sqrt{\langle\eta, D_{\mu,\lambda}^{-1} \eta \rangle_{\h}}
   \label{eq:GCMMD_operator_form}
 .\end{equation}
\end{prop}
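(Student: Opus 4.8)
The plan is to rewrite both the objective and the constraint of \eqref{eq:SobolevMMD} as inner products in $\h$ involving the operator $D_{\mu,\lambda}$ from \eqref{eq:d-op}, and then recognize the resulting optimization as a Cauchy--Schwarz / Riesz-representation problem whose value is $\sqrt{\langle\eta, D_{\mu,\lambda}^{-1}\eta\rangle_\h}$.

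First I would handle the objective. Since $\sqrt{k(x,x)}$ grows at most linearly (\cref{Growth}) and $\PP,\QQ$ have integrable first moments (\cref{Moments}), the map $x\mapsto k(x,\cdot)$ is Bochner $\PP$- and $\QQ$-integrable in $\h$ (its norm is $\sqrt{k(x,x)}$), so $\eta = \E_{X\sim\PP}[k(X,\cdot)] - \E_{Y\sim\QQ}[k(Y,\cdot)]$ is a well-defined element of $\h$; and since bounded linear functionals commute with Bochner integrals, \eqref{eq:reproducing_prop} gives $\E_{X\sim\PP}[f(X)] - \E_{Y\sim\QQ}[f(Y)] = \langle f,\eta\rangle_\h$ for every $f\in\h$.

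Next I would handle the constraint. By \cref{Differentiability} and the reproducing properties \eqref{eq:reproducing_prop}--\eqref{eq:reproducing_derivative}, every $f\in\h$ satisfies $\langle f, D_x f\rangle_\h = f(x)^2 + \sum_{i=1}^d (\partial_i f(x))^2 = |f(x)|^2 + \|\nabla f(x)\|^2$; moreover $|f(x)|^2 \le \|f\|_\h^2\, k(x,x)$ and, using \eqref{eq:reproducing_derivative} and $\|\partial_i k(x,\cdot)\|_\h^2 = \partial_i\partial_{i+d}k(x,x)$, also $\|\nabla f(x)\|^2 \le \|f\|_\h^2 \sum_{i=1}^d \partial_i\partial_{i+d}k(x,x)$. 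By \cref{Integrability} the right-hand sides are $\mu$-integrable, so each $D_x$ is trace-class with $\mu$-integrable trace norm, $D_\mu = \E_{X\sim\mu} D_X$ is a well-defined bounded positive self-adjoint (indeed trace-class) operator on $\h$, and integrating the pointwise identity over $\mu$ gives $\langle f, D_\mu f\rangle_\h = \|f\|_{L^2(\mu)}^2 + \|\nabla f\|_{L^2(\mu)}^2$. Adding $\lambda\|f\|_\h^2 = \langle f, \lambda I f\rangle_\h$ yields $\|f\|_{S(\mu),k,\lambda}^2 = \langle f, D_{\mu,\lambda} f\rangle_\h$ for all $f\in\h$; in particular this norm is finite everywhere on $\h$ and equivalent to $\|\cdot\|_\h$.

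Finally I would solve the optimization. Since $D_\mu\succeq 0$ we have $D_{\mu,\lambda}\succeq\lambda I$, so $D_{\mu,\lambda}$ is invertible with $\|D_{\mu,\lambda}^{-1}\|\le 1/\lambda$; set $\langle u,v\rangle_{D_{\mu,\lambda}} := \langle u, D_{\mu,\lambda} v\rangle_\h$, an inner product on $\h$ equivalent to the original one whose norm is $\|\cdot\|_{S(\mu),k,\lambda}$. Writing $\langle f,\eta\rangle_\h = \langle f, D_{\mu,\lambda}(D_{\mu,\lambda}^{-1}\eta)\rangle_\h = \langle f, D_{\mu,\lambda}^{-1}\eta\rangle_{D_{\mu,\lambda}}$ and applying Cauchy--Schwarz in $\langle\cdot,\cdot\rangle_{D_{\mu,\lambda}}$, the supremum over $\{f\in\h : \|f\|_{S(\mu),k,\lambda}\le 1\}$ equals $\|D_{\mu,\lambda}^{-1}\eta\|_{D_{\mu,\lambda}} = \sqrt{\langle D_{\mu,\lambda}^{-1}\eta, D_{\mu,\lambda} D_{\mu,\lambda}^{-1}\eta\rangle_\h} = \sqrt{\langle\eta, D_{\mu,\lambda}^{-1}\eta\rangle_\h}$ (using self-adjointness of $D_{\mu,\lambda}^{-1}$), attained at $f = D_{\mu,\lambda}^{-1}\eta / \|D_{\mu,\lambda}^{-1}\eta\|_{D_{\mu,\lambda}}\in\h$; this is exactly \eqref{eq:GCMMD_operator_form}. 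The main obstacle is the middle step: making $D_\mu$ rigorous as a bounded positive self-adjoint operator and justifying the interchange of $\mu$-integration with the Hilbert-space pairing $f\mapsto\langle f, D_x f\rangle$, which is where \cref{Differentiability,Integrability} and the RKHS differentiation facts behind \eqref{eq:reproducing_prop}--\eqref{eq:reproducing_derivative} do the work; the rewriting of the objective and the concluding Cauchy--Schwarz argument are then routine.
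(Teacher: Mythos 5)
Your proof is correct and follows essentially the same route as the paper's: rewrite the objective as $\langle f,\eta\rangle_\h$, rewrite the Sobolev-type norm as $\langle f, D_{\mu,\lambda} f\rangle_\h$ via the reproducing properties and a Bochner-integrability argument (the paper packages the latter as \cref{lem:Bochner_interversion}), and then solve the resulting constrained maximization. The only difference is cosmetic and occurs in the final step: the paper substitutes $g = D_{\mu,\lambda}^{1/2} f$ to reduce to a unit-ball problem in the original inner product, whereas you keep $f$ and apply Cauchy--Schwarz in the equivalent inner product $\langle u,v\rangle_{D_{\mu,\lambda}}$; these are two expressions of the same calculation.
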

\begin{proof}[Proof of \cref{prop:dot_prod_expression}]
Let $f$ be a function in $\h$.
We will first express the squared $\lambda$-regularized Sobolev norm of $f$
\eqref{eq:sobolev-norm}
as a quadratic form in $\h$.
Recalling the reproducing properties
of \eqref{eq:reproducing_prop} and \eqref{eq:reproducing_derivative},
we have:
\[
  \Vert f\Vert_{S(\mu),k,\lambda}^{2}
  = \int\langle f,k(x, \cdot)\rangle_{\h}^{2} \, \mudx
  + \sum_{i=1}^{d} \int\langle f, \partial_{i} k(x,\cdot) \rangle_{\h}^{2} \, \mudx
  + \lambda \lVert f \rVert_{\h}^{2}
.\]
Using \cref{HS_inner_rank_one} and the operator \eqref{eq:d-op}, one further gets
\[
  \Vert f\Vert_{S(\mu),k,\lambda}^{2}
  = \int\langle f \otimes f, D_x \rangle_\hs \, \mudx + \lambda \Vert f\Vert_{\h}^{2}
.\]
Under \cref{Integrability}, and using \cref{lem:Bochner_interversion},
one can take the integral inside the inner product, which leads to
$\Vert f\Vert_{S(\mu),k,\lambda}^{2} = \langle f\otimes f,D_\mu\rangle_\hs + \lambda\Vert f\Vert_{\h}^{2}$.
Finally, using \cref{HS_inner_prod_rank_one} it follows that
\[ \Vert f\Vert_{S(\mu),k,\lambda}^{2}=\langle f,D_{\mu,\lambda}f\rangle_{\h} .\]

Under \cref{Moments,Growth}, \cref{lem:Bochner_interversion} applies,
and it follows that $k(x, \cdot)$ is also Bochner integrable under $\PP$ and $\QQ$.
Thus
\[
    \E_\PP\left[ \langle f, k(x, \cdot)\rangle_{\h}\right]
  - \E_\QQ\left[ \langle f, k(x, \cdot)\rangle_{\h}\right]
  = \langle f, \E_\PP\left[ k(x, \cdot) \right] - \E_\PP\left[k(x, \cdot)\right] \rangle_{\h}
  = \langle f, \eta \rangle_{\h}
,\]
where $\eta$ is defined as this difference in mean embeddings.

Since $D_{\mu,\lambda}$ is symmetric positive definite,
its square-root $D_{\mu,\lambda}^{\frac{1}{2}}$ is well-defined and is also invertible.
For any $f \in \h$,
let $g = D_{\mu,\lambda}^{\frac{1}{2}} f$,
so that $\langle f,D_{\mu,\lambda}f\rangle_{\h}=\Vert g\Vert_{\h}^{2}$.
Note that for any $g \in \h$, there is a corresponding $f = D_{\mu,\lambda}^{-\frac12} g$.
Thus we can re-express the maximization problem in \eqref{eq:SobolevMMD} in terms of $g$:
\begin{align*}
\GCMMD_{\mu,k,\lambda}(\PP,\QQ)
 &:= \sup_{\substack{f \in \h \\ \langle f, D_{\mu,\lambda} f \rangle_\h \leq 1}}
     \langle f, \eta \rangle_{\h}
   =\sup_{\substack{g \in \h \\ \lVert g \rVert_\h \le 1}}
     \langle D_{\mu,\lambda}^{-\frac{1}{2}} g, \eta \rangle_{\h}
\\&= \sup_{\substack{g \in \h \\ \Vert g \Vert_\h \le 1}}
     \langle g, D_{\mu,\lambda}^{-\frac{1}{2}} \eta \rangle_{\h}
   = \lVert D_{\mu,\lambda}^{-\frac{1}{2}} \eta \rVert_{\h}
   = \sqrt{\langle\eta, D_{\mu,\lambda}^{-1} \eta \rangle_{\h}}
.\qedhere\end{align*}
\end{proof}

\Cref{prop:dot_prod_expression}, though, involves inverting the infinite-dimensional operator $D_{\mu,\lambda}$
and thus doesn't directly give us a computable estimator.
\Cref{prop:Finite_rank_approx} solves this problem in the case where $\mu$ is a discrete measure:
\begin{repprop}{prop:Finite_rank_approx}
  Let $\hat\mu = \sum_{m=1}^M \delta_{X_m}$ be an empirical measure of $M$ points.
  Let $\eta(X) \in \R^M$ have $m$th entry $\eta(X_m)$,
  and $\nabla \eta(X) \in \R^{M d}$ have $(m, i)$th entry\footnote{%
    We use $(m, i)$ to denote $(m-1) d + i$;
    thus $\nabla \eta(X)$ stacks $\nabla \eta(X_1)$, \dots, $\nabla \eta(X_M)$ into one vector.
  } $\partial_i \eta(X_m)$.
  Then under \cref{Moments,Growth,Differentiability,Integrability},
  the Gradient-Constrained MMD is
  \begin{align*}
    \GCMMD_{\hat\mu,k,\lambda}^2(\PP, \QQ)
    &= \frac1\lambda \left( \MMD^{2}(\PP, \QQ) - \bar{P}(\eta) \right)
  \\
    \bar{P}(\eta)
    &=
    \begin{bmatrix} \eta(X) \\ \nabla\eta(X) \end{bmatrix}\tp
    \left(
      \begin{bmatrix}
        K & G\tp \\
        G & H
      \end{bmatrix}
      + M \lambda I_{M + M d}
    \right)^{-1}
    \begin{bmatrix} \eta(X) \\ \nabla\eta(X) \end{bmatrix}
    \label{eq:Sobolev-penalty}
  ,\end{align*}
  where $K$ is the kernel matrix $K_{m,m'} = k(X_m, X_{m'})$,
  $G$ is the matrix of left derivatives %
  $G_{(m, i), m'} = \partial_i k(X_m, X_{m'})$,
  and $H$ that of derivatives of both arguments $H_{(m, i), (m', j)} = \partial_i \partial_{j+d} k(X_m, X_{m'})$.
\end{repprop}

Before proving \cref{prop:Finite_rank_approx},
we note the following interesting alternate form.
Let $\bar e_i$ be the $i$th standard basis vector for $\R^{M + M d}$,
and define $T : \h \to \R^{M + M d}$ as the linear operator
\[
  T = \sum_{m=1}^M \bar e_m \otimes k(X_m, \cdot)
    + \sum_{m=1}^M \sum_{i=1}^d \bar e_{m + (m, i)} \otimes \partial_i k(X_m, \cdot)
.\]
Then $\begin{bmatrix}\eta(X) \\ \nabla \eta(X) \end{bmatrix} = T \eta$,
and $\begin{bmatrix}K & G\tp \\ G & H \end{bmatrix} = T T^*$.
Thus we can write
\[
  \GCMMD_{\hat\mu,k,\lambda}^2
  = \frac1\lambda \left\langle \eta, \left( I - T^* (T T^* + M \lambda I)^{-1} T \right) \eta \right\rangle_\h
.\]

\begin{proof}[Proof of \cref{prop:Finite_rank_approx}]
\label{proof:Finite_rank_approx}

Let $g \in \h$ be the solution to the regression problem $D_{\mu,\lambda} g = \eta$:
\begin{gather}
  \frac1M \sum_{m=1}^M\left[
    g(X_m) k(X_m, \cdot)
    + \sum_{i=1}^d \partial_i g(X_m) \partial_i k(X_m, \cdot)
  \right]
  + \lambda g
  = \eta
  \nonumber
  \\
  g =
  \frac1\lambda \eta
  - \frac{1}{\lambda M} \sum_{m=1}^M\left[
    g(X_m) k(X_m, \cdot)
    + \sum_{i=1}^d \partial_i g(X_m) \partial_i k(X_m, \cdot)
  \right]
  \label{eq:Functional_Regression}
.\end{gather}
Taking the inner product of both sides of \eqref{eq:Functional_Regression}
with $k(X_{m'}, \cdot)$ for each $1 \le m' \le M$
yields the following $M$ equations:
\begin{equation}
  g(X_{m'})
  = \frac{1}{\lambda}\eta(X_{m'})
  - \frac{1}{\lambda M} \sum_{m=1}^{M} \left[
      g(X_{m}) K_{m,m'} %
      + \sum_{i=1}^{d} \partial_{i} g(X_{m}) \, G_{(m, i), m'} %
    \right]
  \label{eq:system_g}
.\end{equation}
Doing the same with $\partial_j k(X_{m'}, \cdot)$ gives $M d$ equations:
\begin{equation}
  \partial_{j}g(X_{m'})
  = \frac{1}{\lambda} \partial_{j} \eta(X_{m'})
  - \frac{1}{\lambda M} \sum_{m=1}^{M} \left[
      g(X_{m}) G_{(m', j), m}  %
      + \sum_{i=1}^{d} \partial_{i} g(X_{m}) H_{(m,i), (m', j)} %
  \right]
  \label{eq:system_grad_g}
.\end{equation}
From \eqref{eq:Functional_Regression}, it is clear that $g$ is a linear
combination of the form:
\[
  g(x)
  = \frac{1}{\lambda} \eta(x)
  - \frac{1}{\lambda M} \sum_{m=1}^{M}\left[
      \alpha_{m} k(X_{m}, x)
      + \sum_{i=1}^{d} \beta_{m,i} \partial_{i} k(X_{m}, x)
  \right]
,\]
where the coefficients $\alpha:=\left(\alpha_{m}=g(X_{m})\right)_{1\leq m\leq M}$
and $\beta:=\left(\beta_{m,i}=\partial_{i}g(X_{m})\right)_{\substack{1\leq m\leq M\\
1\leq i\leq d
}
}$ satisfy the system of equations \eqref{eq:system_g} and \eqref{eq:system_grad_g}.
We can rewrite this system as
\[
  \begin{bmatrix}
    K+M\lambda I_{M} & G\tp \\
    G & H+M\lambda I_{Md}
  \end{bmatrix}
  \begin{bmatrix}\alpha\\\beta\end{bmatrix}
  = M \begin{bmatrix} \eta(X) \\ \nabla\eta(X) \end{bmatrix}
,\]
where $I_{M}$, $I_{Md}$ are the identity matrices of dimension $M$, $Md$.
Since $K$ and $H$ must be positive semidefinite, an inverse exists.
We conclude by noticing that
\[
  \GCMMD_{\hat\mu,k,\lambda}(\PP, \QQ)^2
  = \langle \eta, g \rangle_{\h}
  = \frac{1}{\lambda} \lVert\eta\rVert_{\h}^{2}
  - \frac{1}{\lambda M} \sum_{m=1}^{M}\left[
    \alpha_{m} \eta(X_{m}) + \sum_{i=1}^{d} \beta_{m,i} \partial_{i}\eta(X_{m})
  \right]
.\qedhere \]
\end{proof}

The following result was key to our definition of the SMMD in \cref{subsec:Scaled-MMD}.
\begin{repprop}{prop:Sobolev_Upperbound}
  Under \cref{Moments,Growth,Differentiability,Integrability},
  we have for all $f \in \h$ that
  \[
    \lVert f \rVert_{S(\mu),k,\lambda}
    \le \sigma_{\mu,k,\lambda}^{-1} \lVert f \rVert_{\h_k}
  ,\]
  where
  $\sigma_{k,\mu,\lambda} := 1 / \sqrt{
    \lambda + \int k(x, x) \mu(\ud x) + \sum_{i=1}^d \int \partial_i \partial_{i+d} k(x, x) \mu(\ud x)
  }$.
\end{repprop}
\begin{proof}[Proof of \cref{prop:Sobolev_Upperbound}]
\label{proof:Sobolev_Upperbound}
The key idea here is to use the Cauchy-Schwarz inequality for the Hilbert-Schmidt inner product.
Letting $f \in \h$,
$\Vert f\Vert_{S(\mu),k,\lambda}^{2}$ is
\begin{align*}
  &  \int f(x)^{2}\mudx
   + \int\Vert\nabla f(x)\Vert^{2}\mudx
   + \lambda \lVert f \rVert_{\h}^{2}
  \\
  &\stackrel{(a)}{=}
    \int\langle f, k(x, \cdot)\otimes k(x, \cdot) f \rangle_{\h} \mudx
  + \sum_{i=1}^{d} \int \langle f, \partial_{i} k(x, \cdot)\otimes\partial_{i}k(x, \cdot) f \rangle_{\h} \mudx
  + \lambda \lVert f \rVert_{\h}^{2}
  \nonumber \\
  &\stackrel{(b)}{=}
    \int\langle f\otimes f,k(x, \cdot)\otimes k(x, \cdot)\rangle_\hs \,\mudx
  + \sum_{i=1}^d \int \langle f\otimes f, \partial_i k(x, \cdot) \otimes \partial_i k(x, \cdot)\rangle_\hs \mudx
  + \lambda \lVert f \rVert_{\h}^{2}
  \nonumber \\
  &\stackrel{(c)}{\leq}
    \lVert f \rVert_{\h}^{2}\left[
      \int k(x,x) \mudx
      + \sum_{i=1}^d \int \partial_i \partial_{i+d} k(x, x) \mudx
      + \lambda
    \right]
  \nonumber
.\end{align*}
$(a)$ follows from the reproducing properties \eqref{eq:reproducing_prop}
and \eqref{eq:reproducing_derivative} and \cref{HS_inner_rank_one}.
$(b)$ is obtained using \cref{HS_inner_prod_rank_one},
while $(c)$ follows from the Cauchy-Schwarz inequality and \cref{HS_norm_rank_one}.
\end{proof}
\begin{lem}
\label{lem:Bochner_interversion}Under \cref{Integrability}, $D_{x}$
is Bochner integrable and its integral $D_\mu$ is a trace-class symmetric
positive semi-definite operator with $D_{\mu,\lambda}=D+\lambda I$ invertible
for any positive $\lambda$. Moreover, for any Hilbert-Schmidt operator
$A$ we have: $\langle A,D_\mu \rangle_{HS}=\int\langle A,D_{x}\rangle_{HS}\mudx$.

Under \cref{Moments,Growth}, $k(x, \cdot)$ is Bochner integrable with
respect to any probability distribution $\PP$ with finite first moment
and the following relation holds:
$\langle f,\E_{\PP}\left[k(x, \cdot)\right]\rangle_{\h} = \E_{\PP}\left[\langle f,k(x, \cdot)\rangle_{\h}\right]$
for all $f$ in $\h$.
\end{lem}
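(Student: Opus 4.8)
The plan is to verify, in each of the two relevant Banach spaces, the standard Bochner integrability criterion — strong measurability together with integrability of the norm — and then read off every asserted identity from the fact that bounded linear functionals commute with the Bochner integral.

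For the first claim I would begin by observing that the feature maps are \emph{continuous}: by the reproducing properties \eqref{eq:reproducing_prop}--\eqref{eq:reproducing_derivative}, $\lVert k(x,\cdot) - k(y,\cdot)\rVert_\h^2 = k(x,x) - 2k(x,y) + k(y,y)$ and $\lVert \partial_i k(x,\cdot) - \partial_i k(y,\cdot)\rVert_\h^2 = \partial_i\partial_{i+d}k(x,x) - 2\partial_i\partial_{i+d}k(x,y) + \partial_i\partial_{i+d}k(y,y)$, both of which vanish as $y\to x$ since $k\in C^2$ (\cref{Differentiability}); and the outer-product map $v\mapsto v\otimes v$ is continuous into the Hilbert–Schmidt operators because $\lVert v\otimes v - w\otimes w\rVert_\hs \le (\lVert v\rVert_\h + \lVert w\rVert_\h)\lVert v-w\rVert_\h$ by \cref{HS_norm_rank_one}. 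Hence $x\mapsto D_x$ is continuous, so strongly measurable. The triangle inequality and \cref{HS_norm_rank_one} give $\lVert D_x\rVert_\hs \le \lVert k(x,\cdot)\rVert_\h^2 + \sum_{i=1}^d \lVert\partial_i k(x,\cdot)\rVert_\h^2 = k(x,x) + \sum_{i=1}^d \partial_i\partial_{i+d}k(x,x)$, which is $\mu$-integrable by \cref{Integrability}; so $D_x$ is Bochner integrable. The same elementary bound simultaneously controls the trace norm, since each summand $v\otimes v$ is positive semidefinite of rank one with $\lVert v\otimes v\rVert_1 = \lVert v\rVert_\h^2$, whence $\lVert D_x\rVert_1 \le k(x,x) + \sum_i \partial_i\partial_{i+d}k(x,x)$ is also $\mu$-integrable; as the trace-class operators are complete under $\lVert\cdot\rVert_1$, this upgrades Bochner integrability in $\hs$ to the conclusion that $D_\mu$ is trace-class. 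It is symmetric positive semidefinite because for every $u\in\h$ the bounded functional $\langle u,(\cdot)u\rangle_\h$ passes through the Bochner integral, giving $\langle u, D_\mu u\rangle_\h = \int \langle u, D_x u\rangle_\h\,\mudx \ge 0$; consequently the spectrum of $D_{\mu,\lambda} = D_\mu + \lambda I$ lies in $[\lambda,\infty)$, so $D_{\mu,\lambda}$ is invertible with $\lVert D_{\mu,\lambda}^{-1}\rVert \le 1/\lambda$. Finally, $\langle A, D_\mu\rangle_\hs = \int\langle A, D_x\rangle_\hs\,\mudx$ is precisely the statement that the bounded functional $\langle A,\cdot\rangle_\hs$ commutes with the Bochner integral defining $D_\mu$.

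For the second claim, $x\mapsto k(x,\cdot)$ is continuous by the computation above, hence strongly measurable, and $\lVert k(x,\cdot)\rVert_\h = \sqrt{k(x,x)} \le C(\lVert x\rVert + 1)$ by \cref{Growth}; since $\PP$ (and likewise $\QQ$) has a finite first moment (\cref{Moments}), $\int\lVert k(x,\cdot)\rVert_\h\,\PP(\dx) \le C(\E_\PP\lVert X\rVert + 1) < \infty$, so $k(x,\cdot)$ is Bochner integrable under $\PP$. The identity $\langle f, \E_\PP[k(x,\cdot)]\rangle_\h = \E_\PP[\langle f, k(x,\cdot)\rangle_\h]$ is then once more just the commutation of the bounded functional $\langle f,\cdot\rangle_\h$ with this integral.

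None of these steps is genuinely difficult; the only points requiring care are establishing strong measurability — obtained for free from continuity of the feature maps, itself a consequence of $k\in C^2$ — and the bookkeeping observation that the single bound $k(x,x) + \sum_i \partial_i\partial_{i+d}k(x,x)$ dominates $\lVert D_x\rVert_\hs$ and $\lVert D_x\rVert_1$ at once, which is what promotes ``$D_x$ Bochner integrable in $\hs$'' to ``$D_\mu$ trace-class''. Everything else reduces to the defining property that bounded linear maps may be pulled inside a Bochner integral.
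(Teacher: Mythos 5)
Your proposal is correct and follows essentially the same route as the paper: bound $\lVert D_x\rVert_1$ by $k(x,x)+\sum_i \partial_i\partial_{i+d}k(x,x)$, invoke \cref{Integrability} for Bochner integrability (the paper cites Retherford for this criterion, where you verify strong measurability directly via continuity of the feature maps), and obtain symmetry, positivity, invertibility of $D_{\mu,\lambda}$, and both interchange identities by passing bounded linear functionals through the Bochner integral; the second claim is argued identically via \cref{Moments,Growth}. The only difference is that you spell out the measurability step the paper leaves implicit, which is a harmless (indeed welcome) addition rather than a different argument.
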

\begin{proof}
The operator $D_{x}$ is positive self-adjoint.
It is also trace-class,
as by the triangle inequality
\begin{align*}
  \Vert D_{x} \Vert_{1}
  &\le \Vert k(x, \cdot) \otimes k(x, \cdot)\Vert_{1}
     + \sum_{i=1}^{d} \Vert \partial_i k(x, \cdot) \otimes \partial_i k(x, \cdot) \Vert_{1}
  \\
  &= \lVert k(x, \cdot) \Vert_{\h}^{2}
   + \sum_{i=1}^{d} \lVert \partial_i k(x, \cdot) \rVert_{\h}^{2}
   <\infty
.\end{align*}
By \cref{Integrability},
we have that $\int \Vert D_{x}\Vert_1\mudx<\infty$
which implies that $D_{x}$ is $\mu$-integrable in the Bochner sense
\parencite[Definition 1 and Theorem 2]{Retherford:1978}. %
Its integral $D_\mu$ is trace-class and satisfies
$\Vert D_\mu\Vert_{1}\le\int\Vert D_{x}\Vert_{1} \mudx$.  This allows to have $\langle A,D_\mu \rangle_{HS} = \int \langle A, D_x \rangle_{HS} \mudx$ for all Hilbert-Schmidt operators $A$. Moreover, the integral preserves the symmetry and positivity. It follows that $\D_{\mu,\lambda}$ is invertible.

The Bochner integrability of $k(x,\cdot)$ under a distribution $\PP$ with finite moment follows directly from \cref{Moments,Growth}, since $ \int \Vert k(x, \cdot)\Vert \operatorname{\PP}(\dx)  \leq C\int (\Vert x\Vert+1) \operatorname{\PP}(\dx) < \infty$. This allows us to write $ \langle f, \E_{\PP}[k(x, \cdot)]\rangle_{\h} = \E_{\PP}[\langle f, k(x, \cdot) \rangle_{\h}] $.
\end{proof}

\subsection{Continuity of the Optimized Scaled MMD in the Wasserstein topology}\label{appendix:continuity_opt_smmd}
To prove \cref{thm:continuity_opt_SMMD},
we we will first need some new notation.

We assume the kernel is $k = K \circ \phi_\psi$,
i.e.\ $k_\psi(x, y) = K(\phi_\psi(x), \phi_\psi(y))$,
where the representation function $\phi_\psi$ is a network $\phi_\psi(X) : \R^d \to \R^{d_L}$
consisting of $L$ fully-connected layers:
\begin{align}
  h_{\psi}^{0}(X) &= X\nonumber
  \\
  h_{\psi}^{l}(X) &= W^l \sigma_{l-1}(h_{\psi}^{l-1}(X)) + b^{l}
  \qquad \text{ for } 1 \le l \leq L
  \label{eq:network}
  \\
  \phi_\psi(X) &= h_\psi^L(X)
  \nonumber
.\end{align}
The intermediate representations $h_\psi^l(X)$ are of dimension $d_l$,
the weights $W^{l}$ are matrices in $\R^{d_{l} \times d_{l-1}}$,
and biases $b^{l}$ are vectors in $\R^{d_{l}}$.
The elementwise activation function $\sigma$ is given by
$\sigma_0(x) = x$,
and for $l > 0$ the activation $\sigma_{l}$ is a leaky ReLU with
leak coefficient $0<\alpha<1$:
\begin{equation}
\sigma_{l}(x)= \sigma(x)=\begin{cases}
x & x>0\\
\alpha x & x\leq0
\end{cases}
\qquad \text{ for } l>0
\label{eq:lReLU}
.\end{equation}

The parameter $\psi$ is the concatenation of all the layer parameters:
\[
  \psi = \left( (W^{L},b^{L}), (W^{L-1},b^{L-1}), \dots, (W^{1}, b^{1}) \right)
.\]
We denote by $\Psi$ the set of all such possible parameters,
i.e.\ $\Psi = \R^{d_L \times d_{L-1}} \times \R^{d_L} \times \cdots \times \R^{d_1 \times d} \times \R^{d_1}$.
Define the following restrictions of $\Psi$:
\begin{align}
  \Psi^{\kappa} :=& \left\{
    \psi \in \Psi \mid \forall 1 \le l \le L, \;
    \cond(W^l)\leq \kappa
  \right\}
  \label{eq:def:psi-kappa}
  \\
  \Psi^{\kappa}_1 :=& \left\{
    \psi \in \Psi^{\kappa} \mid \forall 1 \le l \le L, \;
    \lVert W^l \rVert = 1
  \right\}
  \label{eq:def:psi-kappa-1}
.\end{align}
$\Psi^{\kappa}$ is the set of those parameters such that $W^l$ have a small condition number,
$\cond(W) = \sigma_{\max}(W) / \sigma_{\min}(W)$.
$\Psi_1^{\kappa}$ is the set of per-layer normalized parameters with a condition number bounded by $\kappa$.

Recall the definition of Scaled MMD, \cref{eq:Scaled_MMD},
where $\lambda > 0$ and $\mu$ is a probability measure:
\begin{align*}
  \SMMD_{\mu,k,\lambda}(\PP,\QQ)
  &:= \sigma_{\mu,k,\lambda} \MMD_k(\PP,\QQ)
\\
  \sigma_{k,\mu,\lambda}
  &:= 1/\sqrt{\lambda + \int k(x,x)\mudx + \sum_{i=1}^d \int \partial_i \partial_{i+d} k(x,x) \mudx}
.\end{align*}
The Optimized SMMD over the restricted set $\Psi^{\kappa}$ is given by:
\[
  \optSMMD[\Psi^\kappa](\PP, \QQ) := \sup_{\psi\in \Psi^\kappa} \SMMD_{\mu,k_{\psi},\lambda}
.\]
The constraint to $\psi \in \Psi^\kappa$ is critical to the proof.
In practice, using a spectral parametrization helps enforce this assumption, as shown in \cref{fig:celebA_scores_and_singular_values,fig:singular_values-full}.
Other regularization methods, like orthogonal normalization \cite{Brock:2016}, are also possible.

We will use the following assumptions:
\begin{assumplist2}
  \item \label{full_support} $\mu$ is a probability distribution absolutely continuous with respect to the Lebesgue measure.
  \item \label{decreasing_dimensions} The dimensions of the weights are decreasing per layer: $d_{l+1}\leq d_{l}$ for all $0\leq l\leq L-1$.
  \item \label{leaky_relu} The non-linearity used is Leaky-ReLU, \eqref{eq:lReLU}, with leak coefficient $\alpha \in (0, 1)$.
  \item \label{Lichitz_kernel} The top-level kernel $K$ is globally Lipschitz in the RKHS norm: there exists a positive constant $L_K>0$ such that $\Vert K(a,.)-K(b,.) \Vert\leq L_K \Vert a-b \Vert $ for all $a$ and $b$ in $\R^{d_L}$.
  \item \label{Convexe_Hessian} There is some $\gamma_K > 0$ for which $K$ satisfies
  \begin{align}
    \nabla_b \nabla_c K(b, c) \bigr\vert_{(b,c) = (a,a)} \succeq \gamma^2 I \qquad \text{ for all } a \in \R^{d_L}.
  \end{align}
\end{assumplist2}

\cref{full_support}
ensures that the points where $\phi_\psi(X)$ is not differentiable
are reached with probability $0$ under $\mu$.
This assumption can be easily satisfied
e.g.\ if we define $\mu$ by adding Gaussian noise to $\PP$.

\Cref{decreasing_dimensions} helps ensure that the span of $W^l$ is never contained in the null space of $W^{l+1}$.
Using Leaky-ReLU as a non-linearity, \cref{leaky_relu},
further ensures that
the network $\phi_\psi$ is locally full-rank almost everywhere;
this might not be true with ReLU activations, where it could be always $0$.
\cref{decreasing_dimensions,leaky_relu} can be easily satisfied by design of the network.

\cref{Lichitz_kernel,Convexe_Hessian} only depend on the top-level kernel $K$ and are easy to satisfy in practice.
In particular, they always hold for a smooth translation-invariant kernel, such as the Gaussian,
as well as the linear kernel.

We are now ready to prove \cref{thm:continuity_opt_SMMD}.

\begin{reptheorem}{thm:continuity_opt_SMMD}
Under \cref{full_support,decreasing_dimensions,leaky_relu,Lichitz_kernel,Convexe_Hessian},
\[
  \optSMMD[\Psi^\kappa](\PP,\QQ) \leq \frac{L_K \, \kappa^{L/2}}{\gamma \, \sqrt{d_L} \, \alpha^{L/2}} \W(\PP, \QQ)
,\]
which implies that if $\PP_n \wassconv \PP$,
then $\optSMMD[\Psi^\kappa](\PP_n, \PP) \to 0$.
\end{reptheorem}
\begin{proof}

Define the pseudo-distance corresponding to the kernel $k_\psi$
\[
  d_\psi(x, y)
  = \lVert k_\psi(x, \cdot) - k_\psi(y, \cdot) \rVert_{\h_\psi}
  = \sqrt{k_\psi(x, x) + k_\psi(y, y) - 2 k_\psi(x, y)}
.\]
Denote by $\W_{d_{\psi}}(\PP,\QQ)$ the optimal transport metric
between $\PP$ and $\QQ$ using the cost $d_{\psi}$, given by
\[
  \W_{d_{\psi}}(\PP,\QQ)
  = \inf_{\pi\in\Pi(\PP,\QQ)} \E_{(X, Y) \sim \pi}\left[ d_{\psi}(X,Y) \right].
\]
where $\Pi$ is the set of couplings with marginals $\PP$ and $\QQ$.
By \cref{thm:mmd-w-upperbound},
\[
  \MMD_\psi(\PP, \QQ) \le \W_{d_\psi}(\PP, \QQ)
.\]
Recall that $\phi_\psi$ is Lipschitz,
$\lVert \phi_\psi \rVert_\lip < \infty$,
so along with \cref{Lichitz_kernel} we have that
\[
  d_\psi(x,y)
  \le L_K \norm{\phi_\psi(x) - \phi_\psi(y)}
  \le L_K \norm{\phi_\psi}_\lip \norm{x - y}
.\]
Thus
\[
  \W_{d_{\psi}}(\PP,\QQ)
  \le \inf_{\pi\in\Pi(\PP,\QQ)} \E_{(X, Y) \sim \pi}\left[ L_K\norm{\phi_\psi}_\lip \norm{X - Y} \right]
  = L_K \norm{\phi_\psi}_\lip \W(\PP, \QQ)
,\]
where $\W$ is the standard Wasserstein distance \eqref{eq:wasserstein},
and so
\[
  \MMD_\psi(\PP, \QQ) \le L_k \norm{\phi_\psi}_\lip \W(\PP, \QQ)
.\]

We have that
$\partial_i \partial_{i+d} k(x, y)
= \left[ \partial_i \phi_\psi(x) \right]\tp
  \left[ \nabla_a \nabla_b K(a, b) \bigr\vert_{(a,b)=(\phi_\psi(x), \phi_\psi(y))} \right]
  \left[ \partial_i \phi_\psi(y) \right]
,$
where the middle term is a $d_L \times d_L$ matrix
and the outer terms are vectors of length $d_L$.
Thus \cref{Convexe_Hessian} implies that
$\partial_i \partial_{i+d} k(x, x) \ge \gamma_K^2 \lVert \partial_i \phi_\psi(x) \rVert^2$,
and hence
\[
       \sigma_{\mu,k,\lambda}^{-2}
   \ge \gamma_K^2 \E[\lVert \nabla\phi_{\psi}(X)\rVert_F^2 ]
\]
so that
\[
  \SMMD^2_{\psi}(\PP, \QQ)
  = \sigma_{\mu,k,\lambda}^2 \MMD^2_\psi(\PP, \QQ)
  \le \frac{L_K^2 \norm{\phi_\psi}_\lip^2}{\gamma_K^2 \E\left[ \rVert \nabla \phi_\psi(X) \rVert_F^2 \right]} \W^2(\PP, \QQ)
.\]

Using \cref{appendix:prop:pseudo_homogeneity}, we can write
$\phi_{\psi}(X) = \alpha(\psi) \phi_{\bar{\psi}}(X)$
with $\bar{\psi} \in \Psi^{\kappa}_1$.
Then we have
\[
  \frac{ \norm{\phi_\psi}_\lip^2 }{\E_{\mu}\left[ \norm*{\nabla\phi_\psi(X)}_F^2 \right]}
  = \frac{\alpha(\psi)^2 \norm{\phi_{\bar\psi}}_\lip^2 }{\alpha(\psi)^2 \E_{\mu}\left[ \norm*{\nabla \phi_{\bar\psi}(X)}_F^2 \right]}
  \le \frac{1}{\E_{\mu}\left[ \norm*{\nabla \phi_{\bar\psi}(X)}_F^2 \right]}
,\]
where we used
$\norm{\phi_{\bar\psi}}_\lip \le \prod_{l=1}^L \norm{\bar W^l} = 1$.
But by \cref{lem:grad_essinf},
for Lebesgue-almost all $X$,
$\norm{\nabla \phi_{\bar\psi}(X)}_F^2 \ge d_L (\alpha / \kappa)^L$.
Using \cref{full_support},
this implies that
\[
  \frac{ \norm{\phi_\psi}_\lip^2 }{\E_{\mu}\left[ \norm*{\nabla\phi_\psi(X)}_F^2 \right]}
  \le \frac{1}{\E_\mu\left[ \norm{\nabla \phi_{\bar\psi}(X)}_F^2] \right]}
  \le \frac{\kappa^L}{d_L \alpha^L}
.\]

Thus for any $\psi \in \Psi^\kappa$,
\[
  \SMMD_\psi(\PP, \QQ) \le \frac{L_K \, \kappa^{L/2}}{\gamma_K \, \sqrt{d_L} \, \alpha^{L/2}} \W(\PP, \QQ)
.\]
The desired bound on $\optSMMD[\Psi^\kappa]$ follows immediately.
\end{proof}

\begin{lem} \label{thm:mmd-w-upperbound}
  Let $(x,y)\mapsto k(x,y)$ be the continuous kernel of an RKHS $\h$ defined on a Polish space $\mathcal{X}$, and define the corresponding pseudo-distance $d_k(x,y):= \norm{k(x,\cdot)-k(y,\cdot)}_\h$. Then the following  inequality holds for any distributions $\PP$ and $\QQ$ on $\mathcal{X}$, including when the quantities are infinite:
  \[
  \MMD_k(\PP,\QQ) \leq \W_{d_k}(\PP, \QQ)
  .\]
\end{lem}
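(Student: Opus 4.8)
The plan is to prove the inequality by comparing the two distances through their dual (critic) representations and a single coupling, using the fact that every unit-ball RKHS function is automatically $1$-Lipschitz for the pseudo-distance $d_k$.

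First I would dispose of the trivial case: if $\W_{d_k}(\PP,\QQ) = +\infty$ there is nothing to prove, so assume it is finite. By definition of the infimum defining $\W_{d_k}$, for every $\epsilon > 0$ there is a coupling $\pi \in \Pi(\PP,\QQ)$ with $\E_{(X,Y)\sim\pi}\left[ d_k(X,Y) \right] \le \W_{d_k}(\PP,\QQ) + \epsilon < \infty$. The key pointwise estimate is then: for any $f \in \h$ with $\norm{f}_\h \le 1$, the reproducing property and Cauchy--Schwarz give
\[
  f(x) - f(y)
  = \inner{f,\, k(x,\cdot) - k(y,\cdot)}_\h
  \le \norm{f}_\h \, \norm{k(x,\cdot) - k(y,\cdot)}_\h
  \le d_k(x,y)
  \qquad \text{for all } x, y \in \x
.\]
Since $\pi$ has marginals $\PP$ and $\QQ$, integrating this bound against $\pi$ yields
\[
  \E_{X\sim\PP}[f(X)] - \E_{Y\sim\QQ}[f(Y)]
  = \E_{(X,Y)\sim\pi}\left[ f(X) - f(Y) \right]
  \le \E_{(X,Y)\sim\pi}\left[ d_k(X,Y) \right]
  \le \W_{d_k}(\PP,\QQ) + \epsilon
.\]
Taking the supremum over the unit ball of $\h$ --- which is exactly $\MMD_k(\PP,\QQ)$ by \eqref{eq:mmd} --- and letting $\epsilon \to 0$ gives the claim.

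The step I would be most careful about is integrability: a priori $\E_\PP[f]$ or $\E_\QQ[f]$ could be ill-defined for an unbounded $f \in \h$, so the interchange $\E_\pi[f(X)-f(Y)] = \E_\PP[f] - \E_\QQ[f]$ needs justification. This is exactly where the finite-cost coupling earns its keep: the bound $\abs{f(X)-f(Y)} \le d_k(X,Y)$ shows $f(X) - f(Y)$ is $\pi$-integrable, so its expectation is well-defined and the splitting is legitimate whenever each of $\E_\PP[f], \E_\QQ[f]$ is finite --- the convention under which the supremum in \eqref{eq:mmd} is taken --- while any $f$ failing that does not contribute to the supremum. An equivalent, critic-free route (which I would mention as a remark) is to write $\MMD_k(\PP,\QQ) = \norm{\mu_\PP - \mu_\QQ}_\h$ for the mean embeddings $\mu_\PP = \E_{X\sim\PP}[k(X,\cdot)]$, observe $\mu_\PP - \mu_\QQ = \E_{(X,Y)\sim\pi}[k(X,\cdot) - k(Y,\cdot)]$ as a Bochner integral, and apply the triangle inequality $\norm{\E_\pi[\,\cdot\,]}_\h \le \E_\pi[\norm{\cdot}_\h]$; this is the same inequality in disguise, with Bochner integrability of $k(X,\cdot)-k(Y,\cdot)$ under $\pi$ again following from the finiteness of the $d_k$-cost, and continuity of $k$ on the Polish space $\x$ ensuring the relevant measurability.
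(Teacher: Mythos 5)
Your proof is correct and follows essentially the same route as the paper's: couple $\PP$ and $\QQ$, apply the reproducing property and Cauchy--Schwarz pointwise to get $f(x)-f(y)\le d_k(x,y)$ for $\norm{f}_\h\le 1$, integrate against the coupling, and take the supremum over the unit ball. The only (minor, and arguably cleaner) deviation is that you work with $\epsilon$-optimal couplings and pass to the limit, whereas the paper invokes existence of an optimal coupling via Villani's Theorem 4.1; your added care about integrability and the Bochner-integral remark are sound but not a different argument.
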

\begin{proof}
Let $\PP$ and $\QQ$ be two probability distributions,
and let $\Pi(\PP, \QQ)$ be the set of couplings between them.
Let $\pi^* \in \argmin_{(X, Y) \sim \pi}[c_k(X, Y)]$ be an optimal coupling,
which is guaranteed to exist \citep[Theorem 4.1]{Villani:2009};
by definition $\W_{d_k}(\PP, \QQ) = \E_{(X, Y) \sim \pi^*}[ d_k(X, Y) ]$.
When $\W_{d_k}(\PP, \QQ) =\infty$ the inequality trivially holds, so assume that $\W_{d_k}(\PP, \QQ)<\infty$.

Take a sample $(X,Y) \sim \pi^{\star}$ and a function $f \in \h$ with  $\norm{f}_\h \leq 1$. By the Cauchy-Schwarz inequality,
\[
\norm{f(X) - f(Y)} \leq \norm{f}_\h \norm{k(X,\cdot)-k(Y,\cdot)}_\h \leq  \norm{k(X,\cdot)-k(Y,\cdot)}_\h
.\]

Taking the expectation with respect to $\pi^{\star}$, we obtain
\[
\E_{\pi^{\star}}[\vert f(X) - f(Y) \vert] \leq \E_{\pi^{\star}}[\norm{k(X,\cdot)-k(Y,\cdot)}_\h]
.\]
The right-hand side is just the definition of $\W_{d_k}(\PP,\QQ)$.
By Jensen's inequality, the left-hand side is lower-bounded by
\[
  \abs{ \E_{\pi^*}[ f(X) - f(Y) ]}
  = \abs{ \E_{X \sim \PP}[ f(X)] - \E_{Y \sim \QQ}[f(Y)] }
\] since $\pi^{\star}$ has marginals $\PP$ and $\QQ$.
We have shown so far that for any $f\in \h$ with $\Vert f \Vert_\h \leq 1$,
\[
\abs{ \E_{\PP}[ f(X)] - \E_{\QQ}[f(Y)] } \le \W_{c_k}(\PP,\QQ)
;\]
the result follows by taking the supremum over $f$.
\end{proof}

\begin{lem}
  \label{appendix:prop:pseudo_homogeneity}
  Let $\psi = ((W^L,b^L),(W^{L-1},b^{L-1}), \dots,(W^1,b^1)) \in \Psi^\kappa$.
  There exists a corresponding scalar $\alpha(\psi)$ and
  $\bar{\psi} = ((\bar{W}^L,\bar{b}^L), (\bar{W}^{L-1},\bar{b}^{L-1}), \dots, (\bar{W}^1,\bar{b}^1) ) \in \Psi^\kappa_1$,
  defined by \eqref{eq:def:psi-kappa-1},
  such that for all $X$,
  \[
    \phi_\psi(X) = \alpha(\psi) \operatorname{\phi_{\bar\psi}}(X)
  .\]
\end{lem}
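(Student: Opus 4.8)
The key structural property is that the Leaky-ReLU activation \eqref{eq:lReLU} (as well as the identity $\sigma_0$) is \emph{positively homogeneous}: $\sigma_l(c\,x) = c\,\sigma_l(x)$ for every scalar $c \ge 0$. The plan is to absorb the operator norm of each weight matrix into a single cumulative scalar, pushing the accumulated factors onto the biases so that the remaining weights all have unit norm while preserving their condition numbers.

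First I would record that, for $\psi \in \Psi^\kappa$, every $W^l$ has finite condition number and hence is nonzero (indeed full rank, since by \cref{decreasing_dimensions} it is a wide matrix, so $\cond(W^l) < \infty$ forces all $d_l$ of its singular values to be positive); in particular $c_l := \norm{W^l} > 0$. I then set $\bar W^l := W^l / c_l$, so that $\norm{\bar W^l} = 1$ and $\cond(\bar W^l) = \cond(W^l) \le \kappa$. Define the cumulative scalings $\rho_0 := 1$ and $\rho_l := \prod_{j=1}^l c_j$ for $1 \le l \le L$, put $\alpha(\psi) := \rho_L = \prod_{l=1}^L \norm{W^l} > 0$, and take the rescaled biases $\bar b^l := b^l / \rho_l$ (note that $\Psi^\kappa_1$ imposes no constraint on the biases). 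This defines $\bar\psi = ((\bar W^L, \bar b^L), \dots, (\bar W^1, \bar b^1))$, which lies in $\Psi^\kappa_1$ by construction.

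The heart of the argument is an induction on $l$ showing that the hidden activations satisfy $h_\psi^l(X) = \rho_l \, h_{\bar\psi}^l(X)$ for all $X$. The base case $l = 0$ is immediate since $h_\psi^0(X) = X = h_{\bar\psi}^0(X)$ and $\rho_0 = 1$. For the inductive step I would substitute $W^l = c_l \bar W^l$, $b^l = \rho_l \bar b^l$, and $h_\psi^{l-1}(X) = \rho_{l-1} h_{\bar\psi}^{l-1}(X)$ into the recursion \eqref{eq:network}, and then use positive homogeneity of $\sigma_{l-1}$ together with $\rho_{l-1} > 0$ to pull $\rho_{l-1}$ out of the activation; collecting terms gives $h_\psi^l(X) = c_l \rho_{l-1}\bigl(\bar W^l \sigma_{l-1}(h_{\bar\psi}^{l-1}(X)) + \bar b^l\bigr) = \rho_l \, h_{\bar\psi}^l(X)$. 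Evaluating at $l = L$ then yields $\phi_\psi(X) = h_\psi^L(X) = \rho_L \, h_{\bar\psi}^L(X) = \alpha(\psi)\,\phi_{\bar\psi}(X)$, as desired.

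There is no serious obstacle; the one point requiring care is the strict positivity $\rho_{l-1} > 0$, which is precisely what lets positive homogeneity be invoked and is exactly why we need $\psi \in \Psi^\kappa$ (guaranteeing $W^l \neq 0$) rather than merely $\psi \in \Psi$. I would also note explicitly that $\sigma_0$ being the identity causes no issue, as it too is positively homogeneous — and in any case $\rho_0 = 1$, so no scaling has yet accumulated at the input.
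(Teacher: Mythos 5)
Your proof is correct and follows essentially the same approach as the paper: the same scalings $\bar W^l = W^l/\norm{W^l}$, $\bar b^l = b^l/\prod_{m\le l}\norm{W^m}$, $\alpha(\psi)=\prod_l\norm{W^l}$, and the same use of positive homogeneity of Leaky-ReLU to propagate the scalar through the layers. You simply spell out the inductive step $h^l_\psi(X) = \rho_l\, h^l_{\bar\psi}(X)$ that the paper leaves as an observation, and add the (correct) remark that $\psi\in\Psi^\kappa$ guarantees $\norm{W^l}>0$ so the rescaling is well defined.
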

\begin{proof}
Set $\bar{W}^l = \frac{1}{\norm{W^l}} W^l$, $\bar{b}^l = \frac{1}{\prod_{m=1}^l\norm{W^m}} b^l$,
and $\alpha(\psi) = \prod_{l=1}^L \norm{W^l}$.
Note that the condition number is unchanged,
$\cond(\bar W^l) = \cond(W^l) \le \kappa$,
and $\norm{\bar W^l} = 1$,
so $\bar\psi \in \Phi^\kappa_1$.
It is also easy to see from \eqref{eq:lReLU} that
\[
  h^l_{\bar \psi}(X) = \frac{1}{\prod_{m=1}^l \norm{W^m}} h^l_\psi(X)
\]
so that
\[
  \alpha(\psi) h^L_{\bar\psi}(X)
  = \frac{\prod_{l=1}^L \norm{W^l}}{\prod_{l=1}^L \norm{W^l}} h^L_\psi(X)
  = \phi_\psi(X)
.\qedhere\]
\end{proof}

\begin{lem} \label{lem:grad_essinf}
Make \cref{decreasing_dimensions,leaky_relu}, and let $\psi \in \Psi^{\kappa}_1$.
Then the set of inputs for which any intermediate activation is exactly zero,
\[
  \mathcal N_\psi = \bigcup_{l=1}^L \bigcup_{k=1}^{d_l} \left\{
    X \in \R^d \given \left( h_\psi^l(X) \right)_k = 0
  \right\}
,\]
has zero Lebesgue measure.
Moreover, for any $X \notin \N_\psi$,
$\nabla_X \phi_\psi(X)$ exists and
\[
  \norm{\nabla_X \phi_\psi(X)}_F^2
  \ge \frac{d_L \alpha^L}{\kappa^L}
.\]
\end{lem}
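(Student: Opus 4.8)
The plan is to handle the two assertions separately: first that $\mathcal N_\psi$ is Lebesgue-null, and then the quantitative lower bound on the Jacobian off $\mathcal N_\psi$.

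For the null set I would induct on the depth. Set $\mathcal N_\psi^{\le l} = \bigcup_{l'=1}^{l}\bigcup_{k=1}^{d_{l'}}\{X : (h_\psi^{l'}(X))_k = 0\}$. Since $\psi \in \Psi_1^\kappa$, each weight matrix has $\norm{W^l} = 1$ and $\cond(W^l) \le \kappa$, hence $\sigmamin(W^l) \ge 1/\kappa > 0$; together with the decreasing dimensions (\cref{decreasing_dimensions}) this forces every $W^l$ to have full row rank $d_l$, so in particular no zero row. For $l=1$, $h_\psi^1$ is the affine map $X \mapsto W^1 X + b^1$ and each coordinate is a non-constant affine functional, so $\mathcal N_\psi^{\le 1}$ is a finite union of hyperplanes and is null. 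For the inductive step, suppose $\mathcal N_\psi^{\le l}$ is null; off this set all pre-activations of layers $1,\dots,l$ are nonzero and, by continuity, have a locally constant sign pattern, so on a neighbourhood of any such point $h_\psi^{l+1}$ is affine with linear part $W^{l+1} D^{l} W^{l} D^{l-1}\cdots D^{1} W^{1}$, where each $D^{l'} = \diag(\sigma'(h_\psi^{l'}))$ is a fixed diagonal matrix with entries in $\{\alpha,1\}$, hence invertible. Here \cref{decreasing_dimensions} is used a second time: a short induction shows that a product of invertible diagonal matrices interleaved with wide ($d_l \le d_{l-1}$) full-row-rank matrices is again full row rank, since transposing turns it into a composition of injective maps. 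Thus every coordinate of $h_\psi^{l+1}$ is a non-constant affine functional on that neighbourhood and its zero set is a measure-zero hyperplane slice; covering $\R^d\setminus\mathcal N_\psi^{\le l}$ by countably many such neighbourhoods gives that $\mathcal N_\psi^{\le l+1}$ is null. Taking $l=L$ shows $\mathcal N_\psi = \mathcal N_\psi^{\le L}$ is Lebesgue-null.

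For $X\notin\mathcal N_\psi$ the same local-affineness shows $\phi_\psi$ is affine near $X$, so $\nabla_X\phi_\psi(X)$ exists and equals $J(X) = W^L D^{L-1}(X)\, W^{L-1}\cdots D^1(X)\, W^1$ with $D^l(X) = \diag(\sigma'(h^l_\psi(X)))$ having entries in $\{\alpha,1\}$. I would then bound $\norm{\nabla_X\phi_\psi(X)}_F^2 = \sum_{i=1}^{d_L}\sigma_i(J(X))^2 \ge d_L\,\sigmamin(J(X))^2$, and lower-bound $\sigmamin(J(X))$ by the elementary submultiplicativity $\sigmamin(AC) \ge \sigmamin(A)\,\sigmamin(C)$ — valid whenever both factors are wide, which by \cref{decreasing_dimensions} holds for every factor appearing in $J(X)$ — applied across the $L$ weight matrices, each contributing $\sigmamin(W^l) = 1/\cond(W^l) \ge 1/\kappa$, and the activation-derivative matrices, each contributing $\sigmamin(D^l(X)) \ge \alpha$. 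This yields a bound of the stated form: $\sigmamin(J(X))$ is bounded below by a fixed power of $\alpha/\kappa$ determined by the number of weight and activation layers, hence $\norm{\nabla_X\phi_\psi(X)}_F^2 \ge d_L\,\alpha^{L}/\kappa^{L}$, uniformly in the sign pattern of $X$ and therefore for every $X\notin\mathcal N_\psi$.

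The main obstacle, shared by both halves, is the \emph{full-rank propagation}: one must make sure that composing the wide weight matrices cannot destroy rank, so that each pre-activation coordinate stays a genuine (non-degenerate) affine functional and $\sigmamin(J)$ stays bounded away from zero. This is exactly where \cref{decreasing_dimensions} is indispensable — it prevents the range of one layer from lying in the null space of the next — and where the condition-number control $\cond(W^l)\le\kappa$ supplies the quantitative estimate; the Leaky-ReLU hypothesis (\cref{leaky_relu}) is what keeps the diagonal activation-derivative matrices invertible with entries $\ge\alpha$, a property that can fail for plain ReLU.
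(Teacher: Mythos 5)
Your proof is correct and follows essentially the same route as the paper's: both rest on the observation that $\phi_\psi$ is piecewise affine with Jacobian $W^L D^{L-1}\cdots D^1 W^1$ on each region of constant sign pattern, and both derive the quantitative bound from the singular-value submultiplicativity $\sigmamin(AB) \ge \sigmamin(A)\sigmamin(B)$ (the paper's \cref{lem:min-sv}) applied to the transposed factors, with \cref{decreasing_dimensions} guaranteeing the dimensions stack correctly. Your nullity argument via induction on depth and a countable cover of the complement is a cosmetic variant of the paper's, which directly encloses $\mathcal N_\psi$ in a finite union of measure-zero affine slices indexed by the (finitely many) possible activation patterns.
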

\begin{proof}
First, note that the network representation at layer $l$ is piecewise affine.
Specifically,
define $M_X^l \in \R^{d_l}$ by, using \cref{leaky_relu},
\[
  (M_X^l)_k = \sigma_l'(h^l_k(X))
  = \begin{cases}
    1 & h^l_k(X) > 0 \\
    \alpha & h^l_k(X) < 0 \\
  \end{cases}
;\]
it is undefined when any $h^l_k(X) = 0$,
i.e.\ when $X \in \N_\psi$.
Let $V^l_X := W^l \diag\left( M_X^{l-1} \right) $.
Then
\[
  h_\psi^l(X)
  = W^l \sigma_{l-1}(h_\psi^{l-1}(X)) + b^l
  = V^l_X X + b^l
,\]
and thus
\begin{equation}
  h_\psi^l(X)
  = \underbar{W}^l_X X + \underbar{b}^l_X
\label{eq:hl-affine}
,\end{equation}
where
$\underbar{b}^0_X = 0$, $\underbar{b}^l_X = V^l_X \underbar{b}^{l-1} + b^l$,
and $\underbar W^l_X = V^l_X V^{l-1}_X \cdots V^1_X$,
so long as $X \notin \N_\psi$.

Because $\psi \in \Psi_1^\kappa$,
we have $\norm{W^l} = 1$ and $\sigmamin(W^l) \ge 1/\kappa$;
also, $\norm{M_X^l} \le 1$, $\sigmamin(M_X^l) \ge \alpha$.
Thus $\norm{\underbar W^l_X} \le 1$,
and using \cref{decreasing_dimensions} with \cref{lem:min-sv}
gives $\sigmamin(\underbar W^l_X) \ge (\alpha / \kappa)^l$.
In particular, each $\underbar W^l_X$ is full-rank.

Next, note that $\underbar{b}^l_X$ and $\underbar{W}^l_X$
each only depend on $X$ through the activation patterns $M_X^l$.
Letting $H^l_X = (M_X^l, M_X^{l-1}, \dots, M_X^1)$ denote the full activation patterns up to level $l$,
we can thus write
\[
  h_\psi^l(X) = \underbar{W}^{H^l_X} X + \underbar{b}^{H^l_X}
.\]
There are only finitely many possible values for $H^l_X$;
we denote the set of such values as $\mathcal H^l$.
Then we have that
\[
             \mathcal N_\psi
   \subseteq \bigcup_{l=0}^L \bigcup_{k=1}^{d_L} \bigcup_{H^l \in \mathcal H^l} 
             \left\{
               X \in \R^d \given
               \underbar{W}_k^{H^l} X + \underbar{b}_k^{H^l} = 0
             \right\}
.\]
Because each $\underbar W_k^{H^l}$ is of rank $d_l$,
each set in the union is either empty
or an affine subspace of dimension $d - d_l$.
As each $d_l > 0$,
each set in the finite union has zero Lebesgue measure,
and $\mathcal N_\psi$ also has zero Lebesgue measure.

We will now show that the activation patterns are piecewise constant,
so that $\nabla_X h_\psi^l(X) = \underbar W^{H_X^l}$ for all $X \notin \N_\psi$.
Because $\psi \in \Psi^\kappa_1$,
we have $\norm{h_\psi^l}_\lip \le 1$,
and in particular
\[
  \abs*{
    \left( h_\psi^l(X) \right)_k
  - \left( h_\psi^l(X') \right)_k
  } \le \norm{X - X'}
.\]
Thus, take some $X \notin \N_\psi$,
and find the smallest absolute value of its activations,
$\epsilon = \min_{l = 1, \dots, L} \min_{k = 1, \dots, d_l} \abs*{\left(h^l_\psi(X)\right)_k}$;
clearly $\epsilon > 0$.
For any $X'$ with $\norm{X - X'} < \epsilon$,
we know that for all $l$ and $k$,
\[
  \operatorname{sign}\left( \left(h^l_\psi(X)\right)_k \right)
  = \operatorname{sign}\left( \left(h^l_\psi(X')\right)_k \right)
,\]
implying that
$H_X^l = H_{X'}^l$ as well as $X' \notin \N_\psi$.
Thus for any point $X \notin \N_\psi$,
$\nabla \phi_\psi(X) = \underbar{W}^{H_X^L}$.
Finally, we obtain
\[
  \norm{\nabla \phi_\psi(X)}_F^2
  = \norm{\underbar{W}^{H_X^L}}_F^2
  \ge d_L \sigmamin\left(\underbar{W}^{H_X^L}\right)^2
  \ge \frac{d_L \alpha^L}{\kappa^L}
.\qedhere\]
\end{proof}

\begin{lem} \label{lem:min-sv}
  Let $A \in \R^{m \times n}$, $B \in \R^{n \times p}$, with $m \ge n \ge p$. Then
  $
    \sigmamin(A B) \ge \sigmamin(A) \sigmamin(B)
  $.
\end{lem}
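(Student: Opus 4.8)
The plan is to reduce everything to the variational characterization of the smallest singular value for matrices that are at least as tall as they are wide. First I would recall that for any $M \in \R^{a\times b}$ with $a \ge b$ one has $\sigmamin(M) = \min_{x \in \R^b,\, \norm{x}=1}\norm{Mx}$, equivalently $\norm{Mx} \ge \sigmamin(M)\,\norm{x}$ for every $x \in \R^b$. This follows by writing $M = U\Sigma V\tp$ in reduced singular value decomposition, so that $\norm{Mx}^2 = \sum_i \sigma_i^2\, \inner{v_i,x}^2 \ge \sigmamin(M)^2\,\norm{x}^2$, using that the $v_i$ form an orthonormal basis of $\R^b$.

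Next I would observe that the hypothesis $m \ge n \ge p$ is exactly what makes this characterization available for each of the three matrices in play: $A$ is $m \times n$ with $m \ge n$, $B$ is $n \times p$ with $n \ge p$, and $AB$ is $m \times p$ with $m \ge p$. With those in hand the estimate is immediate: for any $x \in \R^p$ with $\norm{x} = 1$,
\[
  \norm{AB x} \;\ge\; \sigmamin(A)\,\norm{Bx} \;\ge\; \sigmamin(A)\,\sigmamin(B)\,\norm{x} \;=\; \sigmamin(A)\,\sigmamin(B),
\]
and taking the minimum over unit vectors $x$ gives $\sigmamin(AB) \ge \sigmamin(A)\,\sigmamin(B)$.

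The only point requiring care — and the sole "obstacle", though a minor one — is that the identity $\sigmamin(M) = \min_{\norm{x}=1}\norm{Mx}$ breaks down when $M$ has strictly more columns than rows: then the right-hand side is $0$, while $\sigmamin(M)$, being the smallest of the $\min(a,b)=a$ singular values, may be positive. The dimension chain $m \ge n \ge p$ is precisely what rules this out at every step, so no rank or degeneracy assumptions are needed (if either $\sigmamin(A)$ or $\sigmamin(B)$ vanishes the bound is trivially true, and the argument above already covers this uniformly).
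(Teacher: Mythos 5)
Your proof is correct and uses the same core tool as the paper: the variational characterization $\sigmamin(M) = \min_{\norm{x}=1}\norm{Mx}$, valid for tall-or-square $M$, applied to $A$, $B$, and $AB$. The paper's version splits off the case where $B$ is singular and then uses a multiply-and-divide manipulation of the infimum, $\inf \frac{\norm{ABx}}{\norm{x}} = \inf \frac{\norm{ABx}}{\norm{Bx}}\cdot\frac{\norm{Bx}}{\norm{x}}$, which forces the assumption $Bx \neq 0$. Your formulation sidesteps that case split entirely: applying the inequality $\norm{Mx} \ge \sigmamin(M)\norm{x}$ twice is valid whether or not $\sigmamin(B)$ vanishes, which you correctly note. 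This is a mild streamlining of the same argument rather than a genuinely different route, and your explicit accounting of where the hypothesis $m \ge n \ge p$ is used at each of the three matrices is a useful clarification.
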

\begin{proof}
A more general version of this result can be found in \cite[Theorem 2]{Gungor:2007}; we provide a proof here for completeness.

  If $B$ has a nontrivial null space, $\sigmamin(B) = 0$ and the inequality holds.
  Otherwise, let $\R^n_*$ denote $\R^n \setminus \{0\}$.
  Recall that for $C \in \R^{m \times n}$ with $m \ge n$,
  \[
    \sigmamin(C)
    = \sqrt{\lambda_{\min}(C\tp C)}
    = \sqrt{\inf_{x \in \R^n_*} \frac{x\tp C\tp C x}{x\tp x}}
    = \inf_{x \in \R^n_*} \frac{\norm{C x}}{\norm{x}}
  .\]
  Thus, as $B x \ne 0$ for $x \ne 0$,
  \begin{align*}
         \sigmamin(A B)
    &  = \inf_{x \in \R^p_*} \frac{\norm{A B x}}{\norm x}
       = \inf_{x \in \R^p_*} \frac{\norm{A B x} \norm{B x}}{\norm{B x} \norm{x}}
  \\&\ge \left( \inf_{x \in \R^p_*} \frac{\norm{A B x}}{\norm{B x}} \right)
         \left( \inf_{x \in \R^p_*} \frac{\norm{B x}}{\norm{x}} \right)
  \\&\ge \left( \inf_{y \in \R^n_*} \frac{\norm{A y}}{\norm{y}} \right)
         \left( \inf_{x \in \R^p_*} \frac{\norm{B x}}{\norm{x}} \right)
       = \sigmamin(A) \sigmamin(B)
  . \qedhere\end{align*}
\end{proof}

\subsubsection{When some of the assumptions don't hold} \label{sec:counterexamples}
Here we analyze through simple examples what happens when the condition number can be unbounded, and when \cref{decreasing_dimensions}, about decreasing widths of the network, is violated.
\paragraph{Condition Number:}\label{example_condition_number}
We start by a first example where the condition number can be arbitrarily high. We consider a two-layer network on $\R^2$, defined by
\begin{align}
  \phi_{\alpha}(X)= \begin{bmatrix}1 & -1\end{bmatrix} \sigma(W_\alpha X) \qquad W_{\alpha} = \begin{bmatrix}
    1 & 1\\
    1 & 1+\alpha
  \end{bmatrix}
\label{eq:example_1}
\end{align}
where $\alpha >0$. As $\alpha$ approaches $0$ the matrix $W_\alpha$ becomes singular which means that its condition number blows up. We are interested in analyzing the behavior of the Lipschitz constant of $\phi$ and the expected squared norm of its gradient under $\mu$ as $\alpha$ approaches $0$.

One can easily compute the squared norm of the gradient of $\phi$ which is given by
\begin{align}
  \Vert \nabla \phi_{\alpha}(X)\Vert^2 = \begin{cases}
    \alpha^2 & X\in A_1\\
    \gamma^2 \alpha^2 & X\in A_2\\
    (1-\gamma)^2 + (1+\alpha -\gamma)^2 & X\in A_3\\
    (1-\gamma)^2 + (\gamma \alpha +\gamma -1)^2 & X\in A_4
  \end{cases}
\end{align}
Here $A_1$, $A_2$, $A_3$ and $A_4$ are defined by \cref{eq:sets_examples_1} and are represented in \cref{fig:example_1_domains}:
\begin{align}\label{eq:sets_examples_1}
\begin{split}
A_1 &:= \{X\in\R^2 | X_1 + X_2\geq 0 \quad X_1 + (1+\alpha)X_2\geq 0 \}\\
A_2 &:= \{X\in\R^2 | X_1 + X_2< 0 \quad X_1 + (1+\alpha)X_2< 0 \}\\
A_3 &:= \{X\in\R^2 | X_1 + X_2< 0 \quad X_1 + (1+\alpha)X_2\geq 0 \}\\
A_4 &:= \{X\in\R^2 | X_1 + X_2\geq 0 \quad X_1 + (1+\alpha)X_2< 0 \}
\end{split}
\end{align}

\begin{figure}[ht]
\centering
        \includegraphics[width=0.7\linewidth]{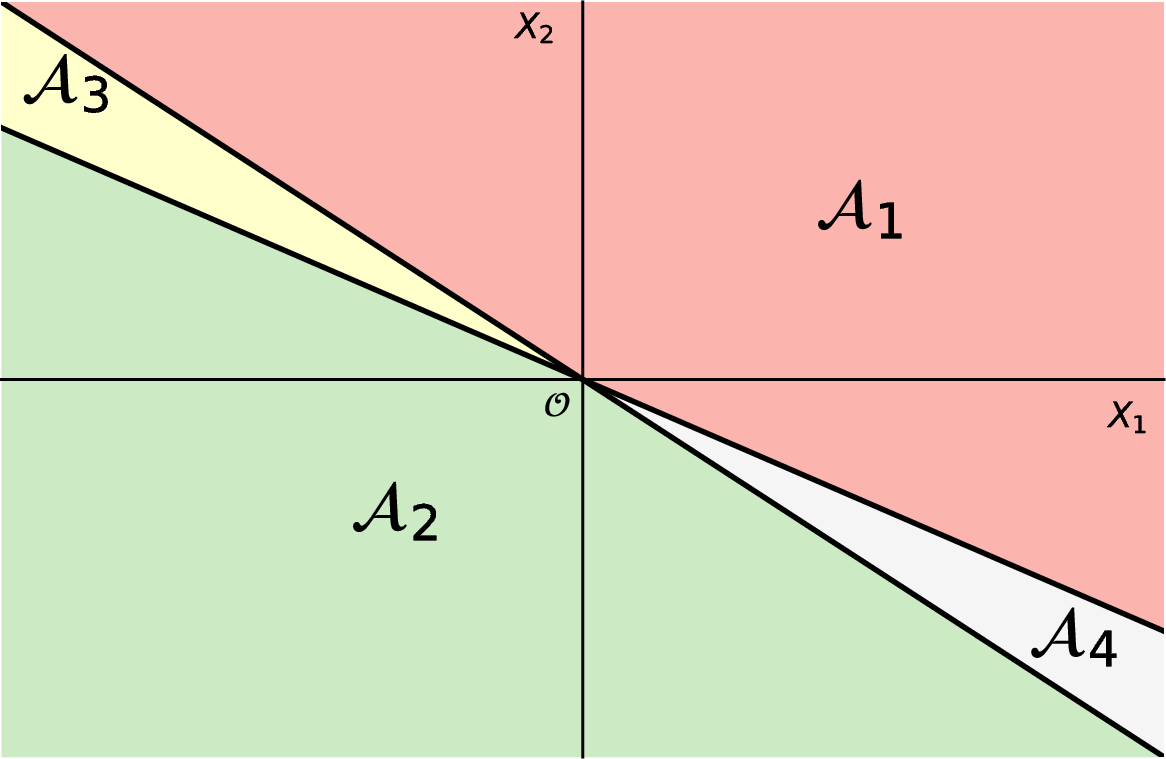}
            \caption{Decomposition of $\R^2$ into 4 regions $A_1$, $A_2$, $A_3$ and $A_4$ as defined in \cref{eq:sets_examples_1}. As $\alpha$ approaches $0$, the area of sets $A_3$ and $A_4$ becomes negligible.}
    \label{fig:example_1_domains}
\end{figure}

It is easy to see that whenever $\mu$ has a density, the probability of the sets $A_3$ and $A_4$ goes to $0$ are $\alpha \rightarrow 0$. Hence one can deduce that $\E_{\mu}[\Vert \nabla \phi_{\alpha}(X)\Vert^2] \rightarrow  0$ when $\alpha \rightarrow 0$. On the other hand, the squared Lipschitz constant of $\phi$ is given by $(1-\gamma)^2 + (1+\alpha -\gamma)^2 $ which converges to $2(1-\gamma)^2$. This shows that controlling the expectation of the gradient doesn't allow to effectively control the Lipschitz constant of $\phi$.

\paragraph{Monotonicity of the dimensions:}\label{example_monotonicity_dimensions} We would like to consider a second example where \cref{decreasing_dimensions} doesn't hold. Consider the following two layer network defined by:
\begin{align}
  \phi(X) = \begin{bmatrix}-1 & 0 & 1\end{bmatrix} \sigma(W_{\beta}X)
  \qquad
  W_{\beta} := \begin{bmatrix}
    1 & 0\\
    0 & 1\\
    1 & \beta
  \end{bmatrix}
\end{align}
for $\beta >0$. Note that $W_{\beta}$ is a full rank matrix, but \cref{decreasing_dimensions} doesn't hold. Depending on the sign of the components of $W_{\beta}X$ one has the following expression for $\Vert \nabla \phi_{\alpha}(X)\Vert^2 $:
\begin{align}
  \Vert \nabla \phi_{\alpha}(X)\Vert^2 = \begin{cases}
    \beta^2 & X\in B_1\\
    \gamma^2 \beta^2 & X\in B_2\\
    \beta^2 & X\in B_3\\
    (1-\gamma)^2 + \gamma^2\beta^2 & X\in B_4\\
    (1-\gamma)^2 + \beta^2 & X\in B_5\\
    \gamma^2 \beta^2 & X\in B_6\\
  \end{cases}
\end{align}
where $(B_i)_{1\leq i\leq 6}$ are defined by \cref{eq:sets_examples_2}
\begin{align}\label{eq:sets_examples_2}
\begin{split}
B_1 &:= \{X\in\R^2 | X_1 \geq 0 \quad X_2 \geq 0 \}\\
B_2 &:= \{X\in\R^2 | X_1 < 0 \quad X_2 < 0 \}\\
B_3 &:= \{X\in\R^2 | X_1 \geq \quad X_2 < 0 \quad X_1 +\beta X_2 \geq 0  \}\\
B_4 &:= \{X\in\R^2 | X_1 \geq \quad X_2 < 0 \quad X_1 +\beta X_2 < 0 \}\\
B_5 &:= \{X\in\R^2 | X_1 > 0 \quad X_2 \geq 0 \quad X_1 +\beta X_2 \geq 0  \}\\
B_6 &:= \{X\in\R^2 | X_1 > 0 \quad X_2 \geq 0 \quad X_1 +\beta X_2 < 0  \}
\end{split}
\end{align}

The squared Lipschitz constant is given by $\Vert \phi \Vert_{L}^2 (1-\gamma)^2+\beta^2$ while the expected squared norm of the gradient of $\phi$ is given by:
\begin{align}
  \E_{\mu}[\Vert \phi(X) \Vert^2] = 3\beta^2(p(B_1\cup B_3\cup B_5)+ \gamma^2 p(B_2\cup B_4\cup B_6)) + (1-\gamma)^2p(B_4\cup B_5).
\end{align}
Again the set $B_4\cup B_5$ becomes negligible as $\beta$ approaches $0$ which implies that  $\E_{\mu}[\Vert \phi(X) \Vert^2] \rightarrow 0$. On the other hand $\Vert \phi \Vert_{L}^2 $ converges to $(1-\gamma)^2$.  Note that unlike in the first example in \cref{eq:example_1}, the matrix $W_{\beta}$ has a bounded condition number. In this example, the columns of $W_{0}$  are all in the null space of $\begin{bmatrix}-1 & 0 & 1\end{bmatrix}$, which implies $\nabla\phi_0(X)=0$ for all $X\in \R^2$, even though all matrices have full rank.

\section{DiracGAN vector fields for more losses} \label{appendix:diracgan-full}
\begin{figure}[ht]
\centering
   \includegraphics[width=\linewidth]{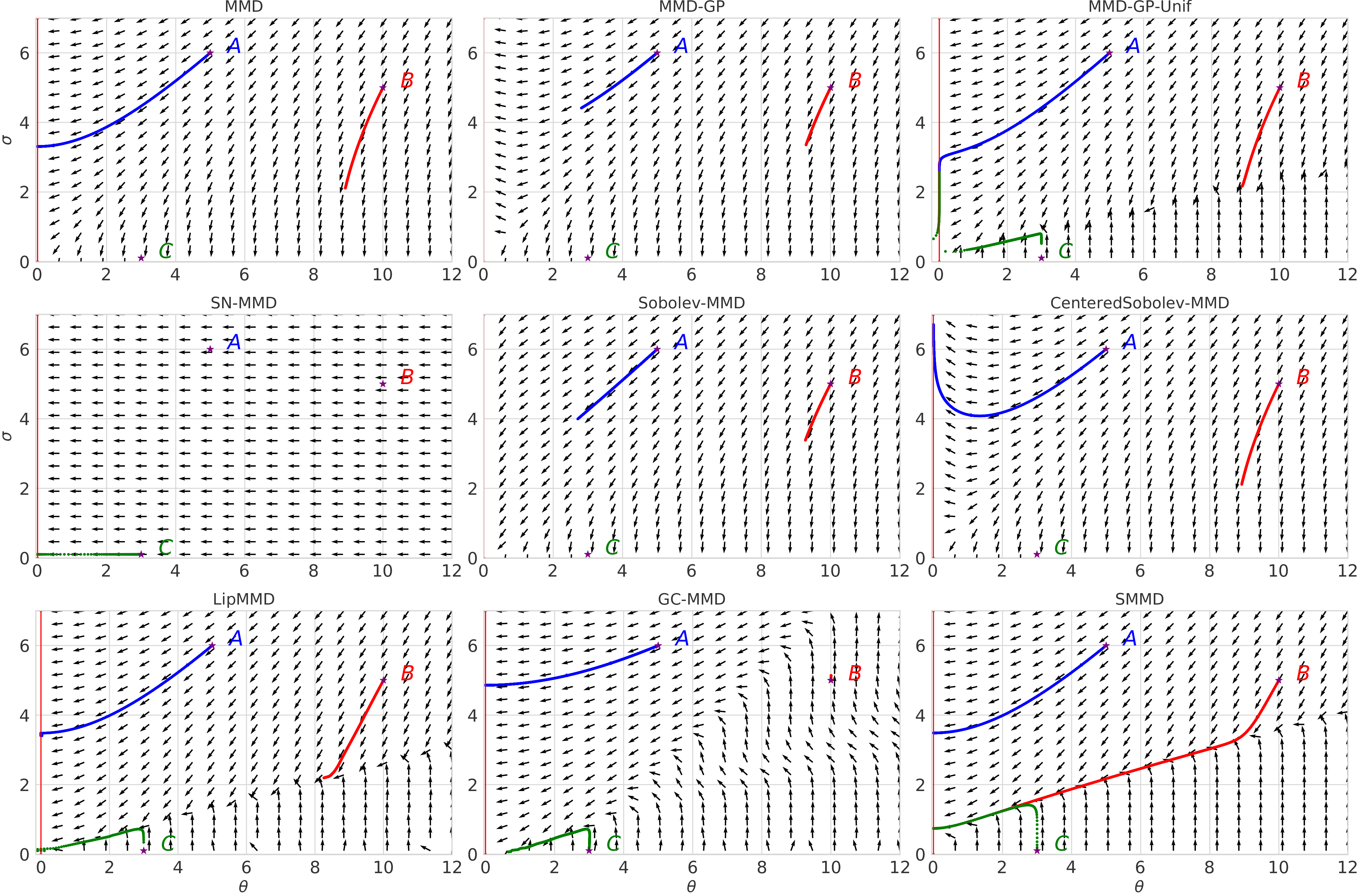}
    \caption{Vector fields for different losses with respect to the generator parameter $\theta$ and the feature representation parameter $\psi$;
    the losses use a Gaussian kernel, and are shown in \eqref{eq:losses_dirac_gan}.
    Following \cite{Mescheder:2018}, $\PP = \delta_0$, $\QQ=\delta_{\theta}$ and $\phi_{\psi}(x)= \psi x$.
    The curves show the result of taking simultaneous gradient steps in $(\theta, \psi)$
    beginning from three initial parameter values.}
    \label{fig:vector_fields_all}
\end{figure}
\Cref{fig:vector_fields_all} shows parameter vector fields, like those in \cref{fig:critic_vector_fields}, for \cref{example:diracgan}
for a variety of different losses:
\begin{align}\label{eq:losses_dirac_gan}
\begin{split}
  \text{MMD: }& -\MMD_\psi^2\\
  \text{MMD-GP: }&  -\MMD_\psi^2 +\lambda \E_{\PP}[(\Vert \nabla f(X) \Vert -1)^2]\\
  \text{MMD-GP-Unif: }&  -\MMD_\psi^2 +\lambda \E_{\widetilde{X}\simeq\mu^{*}}[(\Vert \nabla f(\widetilde{X}) \Vert -1)^2]\\
  \text{SN-MMD: }&  -2 \MMD_1(\PP, \QQ)^2 \\
  \text{Sobolev-MMD: }&  -\MMD_\psi^2 + \lambda (\E_{(\PP+\QQ)/2}[\Vert \nabla f(X) \Vert^2]-1)^2\\
  \text{CenteredSobolev-MMD: }&  -\MMD_\psi^2 + \lambda (\E_{(\PP+\QQ)/2}[\Vert \nabla f(X) \Vert^2])^2\\
  \text{LipMMD: }&  -\LipMMD_{k_\psi, \lambda}^2 \\
  \text{GC-MMD: }&  -\GCMMD_{\mathcal{N}(0, 10^2), k_\psi, \lambda}^2 \\
  \text{SMMD: }& -\SMMD_{k_\psi, \PP, \lambda}^2
  \end{split}
\end{align}
The squared MMD between $\delta_0$ and $\delta_{\theta}$ under a Gaussian kernel of bandwidth $1/\psi$ and is given by $2(1-e^{-\frac{\psi^2\theta^2}{2}})$.
MMD-GP-unif uses a gradient penalty as in \cite{Binkowski:2018} where each samples from $\mu^{*}$ is obtained by first sampling $X$ and $Y$ from $\PP$ and $\QQ$ and then sampling uniformly between $X$ and $Y$.
MMD-GP uses the same gradient penalty, but the expectation is taken under $\PP$ rather than $\mu^{*}$.
SN-MMD refers to MMD with spectral normalization; here this means that $\psi=1$.
Sobolev-MMD refers to the loss used in \cite{sobolev-gan} with the quadratic penalty only.
$\GCMMD_{\mu,k,\lambda}$ is defined by \cref{eq:SobolevMMD}, with $\mu = \N(0, 10^2)$.

\section{Vector fields of Gradient-Constrained MMD and Sobolev GAN critics} \label{appendix:critic-vector-fields}
\citet{sobolev-gan} argue that \emph{the gradient of the critic (...) defines a transportation plan
for moving the distribution mass} (from generated to reference distribution) and present
the solution of Sobolev PDE for 2-dimensional Gaussians. We observed that in this simple example
the gradient of the Sobolev critic can be very high outside of the areas of high density,
which is not the case with the Gradient-Constrained MMD.
\Cref{fig:critic_vector_fields}
presents critic gradients in both cases,
using $\mu = (\PP + \QQ) / 2$ for both.
\begin{figure}[ht]
\centering
    \begin{subfigure}[t]{.48\linewidth}
        \centering
        \includegraphics[width=\linewidth]{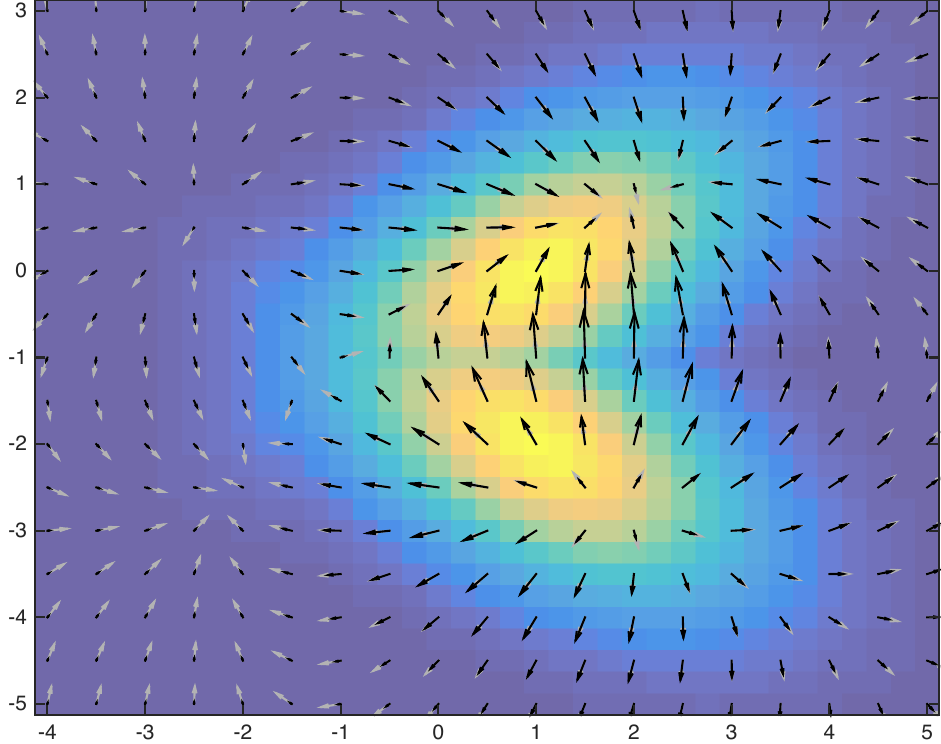}
        \caption{Gradient-Constrained MMD critic gradient.}
    \end{subfigure}
    ~
    \begin{subfigure}[t]{.48\linewidth}
        \centering
        \includegraphics[width=\linewidth]{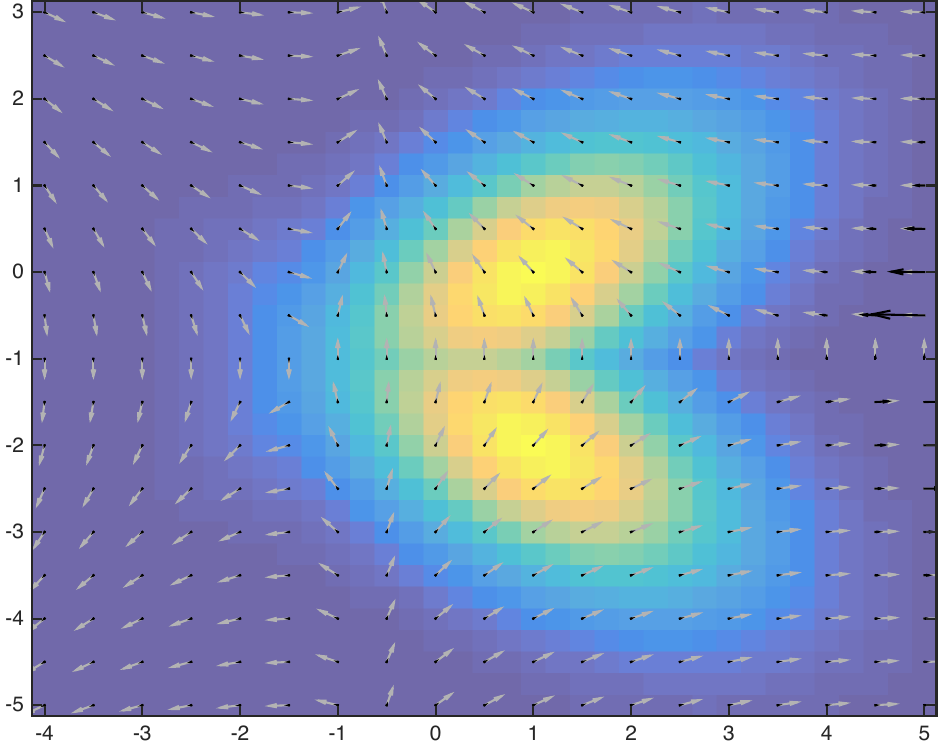}
        \caption{Sobolev IPM critic gradient.}
    \end{subfigure}
    \caption{Vector fields of critic gradients between two Gaussians. The grey arrows show
normalized gradients, i.e. gradient directions, while the black ones are the actual gradients.
Note that for the Sobolev critic, gradients norms are orders of magnitudes higher on the right hand side
of the plot than in the areas of high density of the given distributions.}
    \label{fig:critic_vector_fields}
\end{figure}

This unintuitive behavior is most likely
related to the vanishing boundary condition, assummed by Sobolev GAN. Solving the actual Sobolev PDE,
we found that the Sobolev critic has very high gradients close to the boundary in order to match
the condition; moreover, these gradients point in opposite directions to the target distribution.

\section{An estimator for Lipschitz MMD} \label{sec:est-lipmmd}
We now describe briefly how to estimate the Lipschitz MMD in low dimensions.
Recall that
\[
  \LipMMD_{k,\lambda}(\PP, \QQ)
  = \sup_{f \in \h_k \,:\, \norm{f}_\lip^2 + \lambda \norm{f}_{\h_k}^2 \le 1}
    \E_{X \sim \PP}[f(X)] - \E_{X \sim \QQ}[f(Y)]
.\]
For $f \in \h_k$,
it is the case that
\[
  \norm{f}_\lip^2
  = \sup_{x \in \R^d} \norm{\nabla f(x)}^2
  = \sup_{x \in \R^d} \sum_{i=1}^d \langle \partial_i k(x, \cdot), f \rangle_{\h_k}^2
  = \sup_{x \in \R^d} \left\langle f, \sum_{i=1}^d \left[ \partial_i k(x, \cdot) \otimes \partial_i k(x, \cdot) \right] f \right\rangle_{\h_k}
.\]
Thus we can approximate the constraint
$\norm{f}_\lip^2 + \lambda \norm{f}_{\h_k}^2 \le 1$
by enforcing the constraint on a set of $m$ points $\{ Z_i \}$
reasonably densely covering the region around the supports of $\PP$ and $\QQ$,
rather than enforcing it at every point in $\mathcal X$.
An estimator of the Lipschitz MMD
based on $X \sim \PP^{n_X}$ and $Y \sim \QQ^{n_Y}$ is
\begin{equation}
  \begin{split}
  \widehat\LipMMD_{k,\lambda}\left(
    X, Y, Z
  \right)
  \approx
  \sup_{f \in \h_k}&\,
    \frac{1}{n_X} \sum_{j=1}^{n_X} f(X_j)
  - \frac{1}{n_Y} \sum_{j=1}^{n_Y} f(Y_j)
  \\
  \text{s.t. }&
  \forall j, \;
  \norm{\nabla f(Z_j)}^2 + \lambda \norm{f}_{\h_k}^2 \le 1
  .\end{split}
  \label{eq:lipmmd-approx-problem}
\end{equation}
By the generalized representer theorem,
the optimal $f$ for \eqref{eq:lipmmd-approx-problem} will be of the form
\[
  f(\cdot)
  = \sum_{j=1}^{n_X} \alpha_j k(X_j, \cdot)
  + \sum_{j=1}^{n_Y} \beta_j k(Y_j, \cdot)
  + \sum_{i=1}^d \sum_{j=1}^{m} \gamma_{(i,j)} \partial_i k(Z_j, \cdot)
.\]
Writing $\delta = \left(\alpha, \beta, \gamma\right)$,
the objective function is linear in $\delta$,
\[\begin{bmatrix}
  \frac{1}{n_X} & \cdots & \frac{1}{n_X} &
  -\frac{1}{n_Y} & \cdots & -\frac{1}{n_Y} &
  0 & \cdots & 0
\end{bmatrix} \delta
.\]
The constraints are quadratic, built from the following matrices,
where the $X$ and $Y$ samples are concatenated together,
as are the derivatives with each dimension of the $Z$ samples:
\begin{align*}
  K &:= \begin{bmatrix}
    k(X_1, X_1) & \cdots & k(X_1, Y_{n_Y}) \\
    \vdots & \ddots & \vdots \\
    k(Y_{n_X}, X_1) & \cdots & k(Y_{n_Y}, Y_{n_Y}) \\
  \end{bmatrix}
  \\
  B &:= \begin{bmatrix}
    \partial_1 k(Z_1, X_1) & \cdots & \partial_1 k(Z_1, Y_{n_Y}) \\
    \vdots & \ddots & \vdots \\
    \partial_d k(Z_m, X_1) & \cdots & \partial_d k(Z_m, Y_{n_Y}) \\
  \end{bmatrix}
  \\
  H &:= \begin{bmatrix}
    \partial_1 \partial_{1+d} k(Z_1, Z_1) & \cdots & \partial_1 \partial_{d+d} k(Z_1, Z_m) \\
    \vdots & \ddots & \vdots \\
    \partial_d \partial_{1+d} k(Z_m, Z_1) & \cdots & \partial_d \partial_{d+d} k(Z_m, Z_m) \\
  \end{bmatrix}
.\end{align*}
Given these matrices, and letting
$O_j = \sum_{i=1}^d e_{(i,j)} e_{(i,j)}\tp$
where $e_{(i,j)}$ is the $(i,j)$th standard basis vector in $\R^{m d}$,
we have that
\[
  \norm{f}_{\h_k}^2
  = \delta\tp
  \begin{bmatrix}
    K & B\tp \\ B & H
  \end{bmatrix}
  \delta
  \qquad
  \norm{\nabla f(Z_j)}^2
  = \sum_{i=1}^d \left( \partial_i f(Z_j) \right)^2
  = \delta\tp
  \begin{bmatrix}
      B\tp O_j B
    & B\tp O_j H
   \\  H   O_j B
    &  H   O_j H
  \end{bmatrix}
  \delta
.\]
Thus the optimization problem \eqref{eq:lipmmd-approx-problem}
is a linear problem with convex quadratic constraints,
which can be solved by standard convex optimization software.
The approximation is reasonable only if we can effectively cover the region of interest with densely spaced $\{Z_i\}$;
it requires a nontrivial amount of computation even for the very simple 1-dimensional toy problem of \cref{example:diracgan}.

One advantage of this estimator, though,
is that finding its derivative with respect to the input points or the kernel parameterization
is almost free once we have computed the estimate,
as long as our solver has computed the dual variables $\mu$ corresponding to the constraints in \eqref{eq:lipmmd-approx-problem}.
We just need to exploit the envelope theorem and then differentiate the KKT conditions,
as done for instance in \cite{Amos2017}.
The differential of \eqref{eq:lipmmd-approx-problem} ends up being,
assuming the optimum of \eqref{eq:lipmmd-approx-problem} is at $\hat\delta \in \R^{n_X + n_Y + m d}$ and $\hat\mu \in \R^m$,
\begin{gather*}
     \ud\widehat\LipMMD_{k,\lambda}(X, Y, Z)
   = \hat\delta\tp
     \begin{bmatrix} \ud K \\ \ud B \end{bmatrix}
     \begin{bmatrix} \frac{1}{n_X} & \cdots & \frac{1}{n_X} & -\frac{1}{n_Y} & \cdots & -\frac{1}{n_Y} \end{bmatrix}\tp
  - \sum_{j=1}^m \hat\mu_j \hat\delta\tp (\ud P_j) \hat\delta
\\   \ud P_j
   := \begin{bmatrix}
        (\ud B)\tp O_j B + B\tp O_j (\ud H)
      & (\ud B)\tp O_j H + B\tp O_j (\ud H)
     \\ (\ud H) O_j B + H O_j (\ud B)
      & (\ud H) O_j H + H O_j (\ud H)
    \end{bmatrix}
    + \lambda \begin{bmatrix}
      \ud K & \ud B\tp \\
      \ud B & \ud H
     \end{bmatrix}
.\end{gather*}

\section{Near-equivalence of WGAN and linear-kernel MMD GANs} \label{appendix:wgan-linear-kernel}
For an MMD GAN-GP with kernel $k(x, y) = \phi(x) \phi(y)$,
we have that
\[
  \MMD_k(\PP, \QQ)
  = \lvert \E_\PP \phi(x) - \E_\QQ \phi(Y) \rvert
\]
and the corresponding critic function is
\[
  \frac{\eta(t)}{\lVert \eta \rVert_\h}
  = \frac{\E_{X \sim \PP} \phi(X) \phi(t) - \E_{Y \sim \QQ} \phi(Y) \phi(t)}{\lvert \E_\PP \phi(X) - \E_\QQ \phi(Y) \rvert}
  = \operatorname{sign}\left( \E_{X \sim \PP} \phi(X) - \E_{Y \sim \QQ} \phi(Y) \right)
    \phi(t)
.\]
Thus if we assume $\E_{X \sim \PP} \phi(X) > \E_{Y \sim \QQ} \phi(Y)$,
as that is the goal of our critic training,
we see that the MMD becomes identical to the WGAN loss,
and the gradient penalty is applied to the same function.

(MMD GANs, however, would typically train on the unbiased estimator of $\MMD^2$,
giving a very slightly different loss function.
\cite{Binkowski:2018} also applied the gradient penalty to $\eta$ rather than the true critic $\eta / \lVert \eta \rVert$.)

The SMMD with a linear kernel is thus analogous to applying the scaling operator to a WGAN;
hence the name SWGAN.

\section{Additional experiments}

\subsection{Comparison of Gradient-Constrained MMD to Scaled MMD} \label{appendix:gcmmd-smmd}

\Cref{fig:isolines} shows the behavior of the MMD, the Gradient-Constrained SMMD, and the Scaled MMD when comparing Gaussian distributions.
We can see that $\MMD \propto \SMMD$ and the Gradient-Constrained MMD behave similarly in this case,
and that optimizing the $\SMMD$ and the Gradient-Constrained MMD is also similar.
Optimizing the MMD would yield an essentially constant distance.

\begin{figure}[p]
  \centering
  \includegraphics[width=\linewidth]{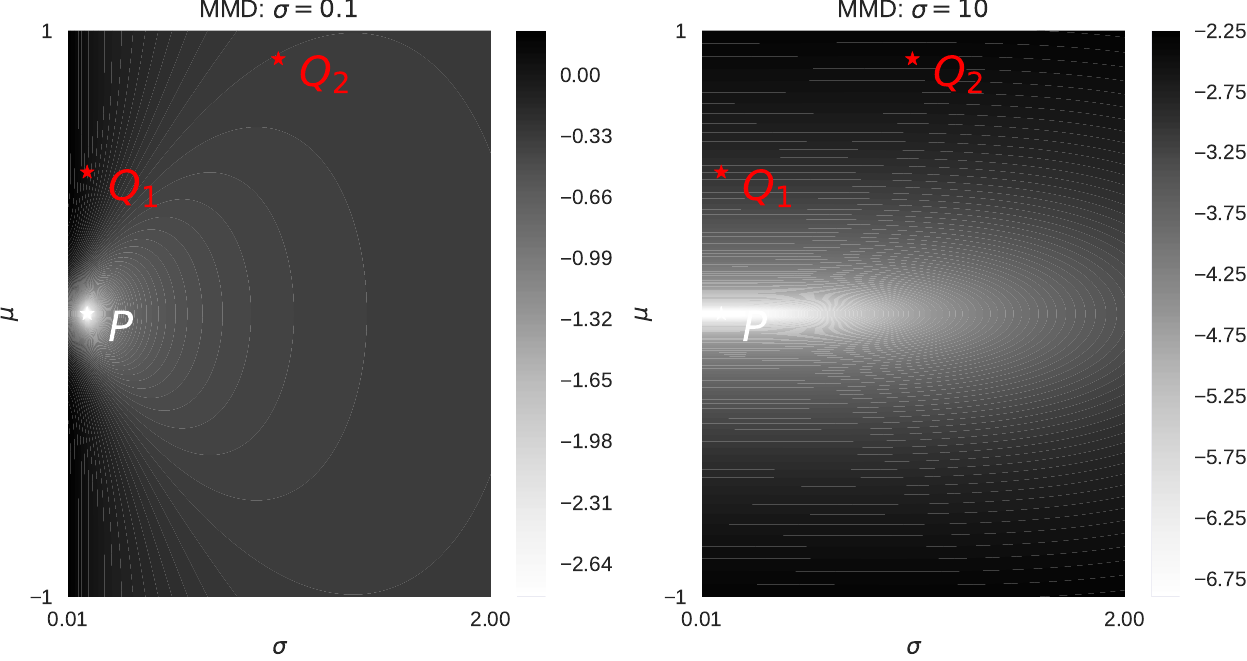}
  \includegraphics[width=\linewidth]{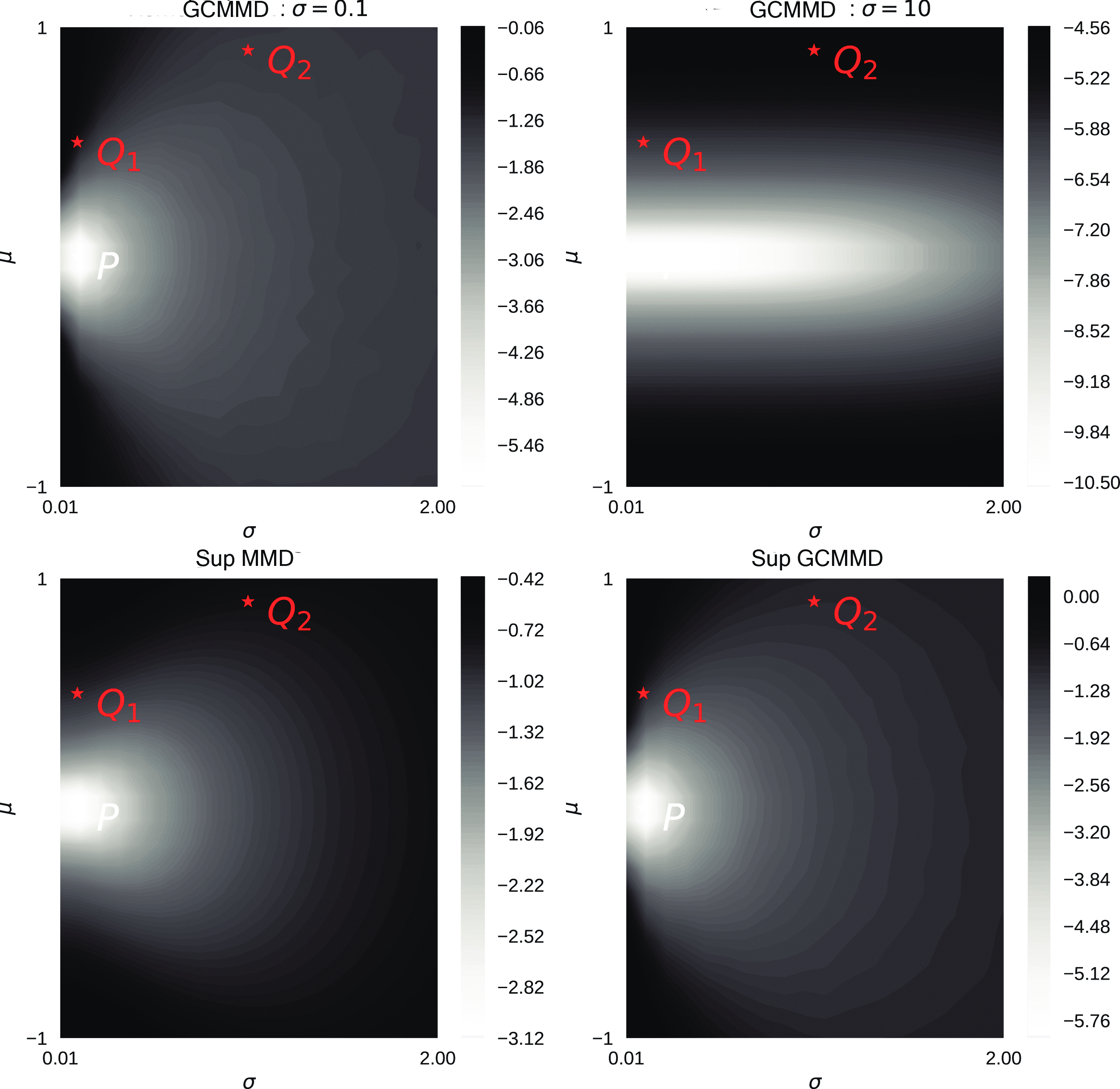}
  \caption{%
    Plots of various distances between one dimensional Gaussians,
    where $P = \N(0, 0.1^2)$,
    and the colors show $\log \D(P, \N(\mu, \sigma^2))$.
    All distances use $\lambda = 1$.
    Top left: MMD with a Gaussian kernel of bandwidth $\psi = 0.1$.
    Top right: MMD with bandwidth $\psi = 10$.
    Middle left: Gradient-Constrained MMD with bandwidth $\psi = 0.1$.
    Middle right: Gradient-Constrained MMD with bandwidth $\psi = 10$.
    Bottom left: Optimized SSMD, allowing any $\psi \in \R$.
    Bottom right: Optimized Gradient-Constrained MMD.
  }
  \label{fig:isolines}
\end{figure}

\subsection{IGMs with Optimized Gradient-Constrained MMD loss} \label{appendix:sobolev-expt}

We implemented the estimator of \cref{prop:Finite_rank_approx}
using the empirical mean estimator of $\eta$,
and sharing samples for $\mu = \PP$.
To handle the large but approximately low-rank matrix system,
we used an incomplete Cholesky decomposition \parencite[Algorithm 5.12]{shawe-taylor-christianini}
to obtain $R \in \R^{\ell \times M (1 + d)}$
such that $\begin{bmatrix} K & G\tp \\ G & H \end{bmatrix} \approx R\tp R$.
Then the Woodbury matrix identity allows an efficient evaluation:
\[
  \left( R\tp R + M \lambda I \right)^{-1}
  = \frac{1}{M \lambda} \left( I - R (R R\tp + M \lambda I)^{-1} R \right)
.\]
Even though only a small $\ell$ is required for a good approximation,
and the full matrices $K$, $G$, and $H$ need never be constructed,
backpropagation through this procedure is slow
and not especially GPU-friendly; training on CPU was faster.
Thus we were only able to run the estimator on MNIST,
and even that took days to conduct the optimization on powerful workstations.

The learned models, however, were reasonable.
Using a DCGAN architecture,
batches of size 64,
and a procedure that otherwise agreed with the setup of \cref{sec:experiments},
samples with and without spectral normalization
are shown in \cref{fig:mnist:sobolev:nosn,fig:mnist:sobolev:sn}.
After the points in training shown, however,
the same rank collapse as discussed in \cref{sec:experiments} occurred.
Here it seems that spectral normalization may have delayed the collapse,
but not prevented it.
\Cref{fig:mnist:loss} shows generator loss estimates through training,
including the obvious peak at collapse;
\cref{fig:mnist:kid} shows KID scores based on the MNIST-trained convnet representation \citep{Binkowski:2018},
including comparable SMMD models for context.
The fact that SMMD models converged somewhat faster than Gradient-Constrained MMD models here
may be more related to properties of
the estimator of \cref{prop:Finite_rank_approx}
rather than the distances;
more work would be needed to fully compare the behavior of the two distances.

\begin{figure}
  \begin{subfigure}{.2\linewidth}
    \includegraphics[width=\linewidth]{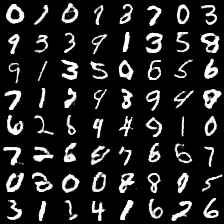}
    \caption{Without spectral normalization; $32\,000$ generator iterations.}
    \label{fig:mnist:sobolev:nosn}
  \end{subfigure}
  \begin{subfigure}{.2\linewidth}
    \includegraphics[width=\linewidth]{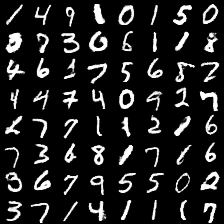}
    \caption{With spectral normalization; $41\,000$ generator iterations.}
    \label{fig:mnist:sobolev:sn}
  \end{subfigure}
  \begin{subfigure}{.28\linewidth}
    \includegraphics[width=\linewidth]{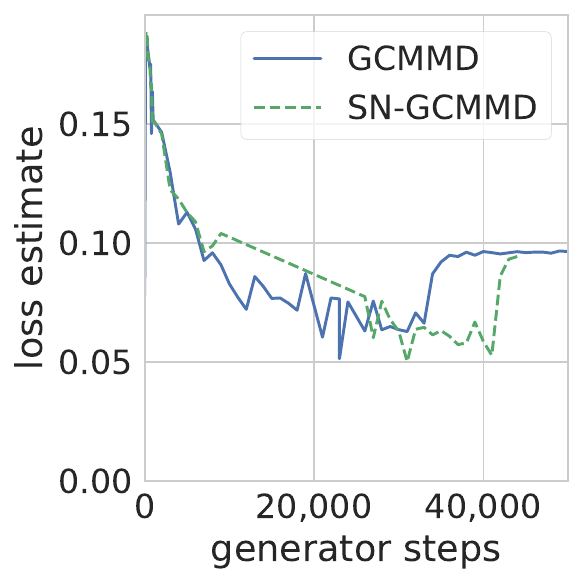}
    \caption{Generator losses.}
    \label{fig:mnist:loss}
  \end{subfigure}
  \begin{subfigure}{.28\linewidth}
    \includegraphics[width=\linewidth]{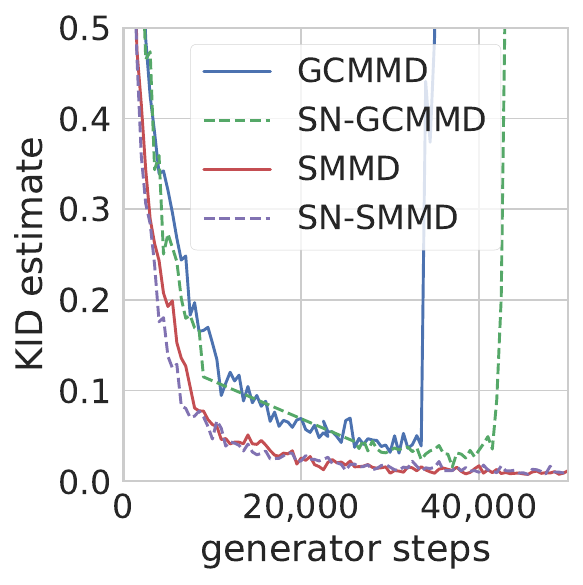}
    \caption{KID scores.}
    \label{fig:mnist:kid}
  \end{subfigure}
  \caption{The MNIST models with Optimized Gradient-Constrained MMD loss.}
\end{figure}

\subsection{Spectral normalization and Scaled MMD} \label{appendix:smmd_vs_sn}
\Cref{fig:singular_values-full} shows the distribution of critic weight singular values, like \cref{fig:celebA_scores_and_singular_values}, at more layers.
\Cref{fig:score_per_iter_cifar10_sn,tab:sn_cifar10_scores}
show results for the spectral normalization variants considered in the experiments.
MMDGAN, with neither spectral normalization nor a gradient penalty, did surprisingly well in this case, though it fails badly in other situations.

\cref{fig:singular_values-full} compares the decay of singular values for layer of the critic's network at both early and later stages of training in two cases: with or without the spectral parametrization. The model was trained on CelebA using SMMD.
\cref{fig:score_per_iter_cifar10_sn} shows the evolution per iteration of Inception score, FID and KID for Sobolev-GAN, MMDGAN and variants of MMDGAN and WGAN using spectral normalization. It is often the case that this parametrization alone is not enough to achieve good results.

\begin{figure}[ht]

        \centering
        \includegraphics[width=\linewidth]{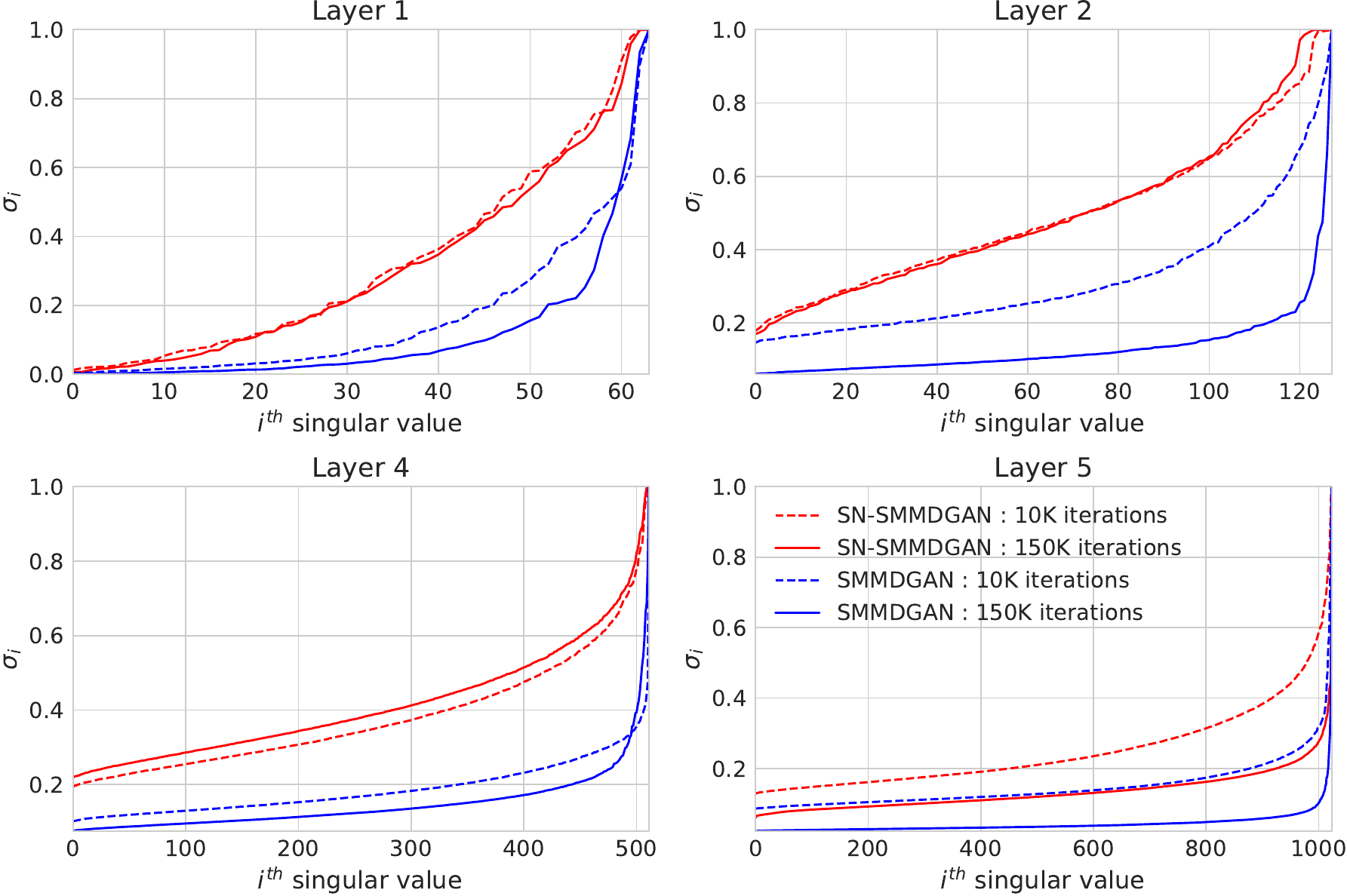}
        \caption{Singular values at different layers, for the same setup as \cref{fig:celebA_scores_and_singular_values}.}
       \label{fig:singular_values-full}
\end{figure}

\begin{figure}[ht]
	        \centering
        \includegraphics[width=\linewidth]{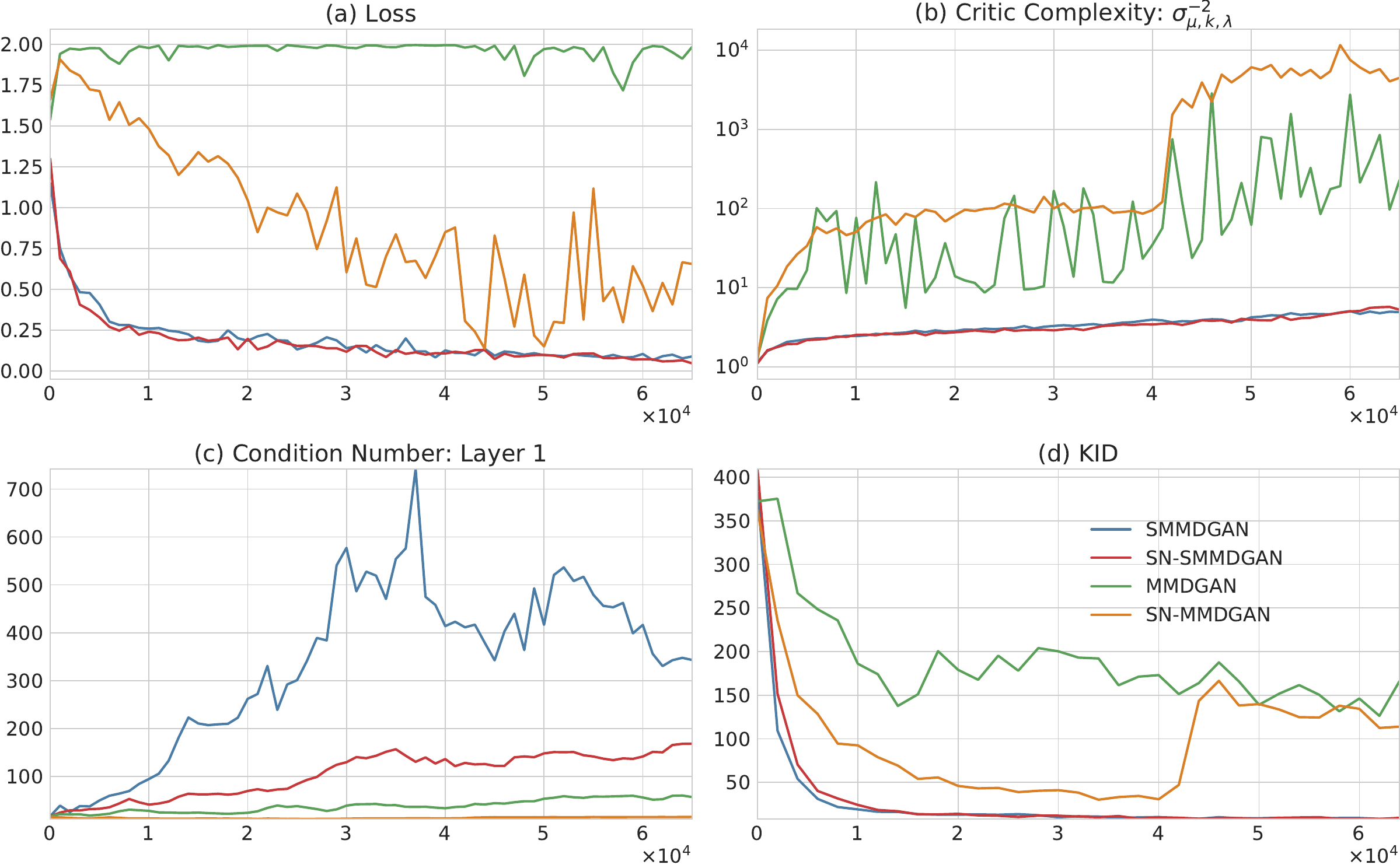}
        \caption{Evolution of various quantities per generator iteration on CelebA during training. 4 models are considered: (SMMDGAN, SN-SMMDGAN, MMDGAN, SN-MMDGAN). (a) Loss: $\SMMD^2=\sigma_{\mu, k, \lambda}^2 \MMD^2_k$ for SMMDGAN and SN-SMMDGAN, and $\MMD_k^2$ for MMDGAN and SN-MMDGAN. The loss saturates for MMDGAN (green); spectral normalization allows some improvement in loss, but training is still unstable (orange). SMMDGAN and SN-SMMDGAN both lead to stable, fast training (blue and red). (b) SMMD controls the critic complexity well, as expected (blue and red); SN has little effect on the complexity (orange). (c) Ratio of the highest singular value to the smallest for the first layer of the critic network: $\sigma_{\max} / \sigma_{\min}$. SMMD tends to increase the condition number of the weights during training (blue), while SN helps controlling it (red). (d) KID score during training: Only variants using SMMD lead to stable training in this case.}
       \label{fig:loss_celebA}
\end{figure}

\begin{figure}[ht]
        \centering
        \includegraphics[width=\linewidth]{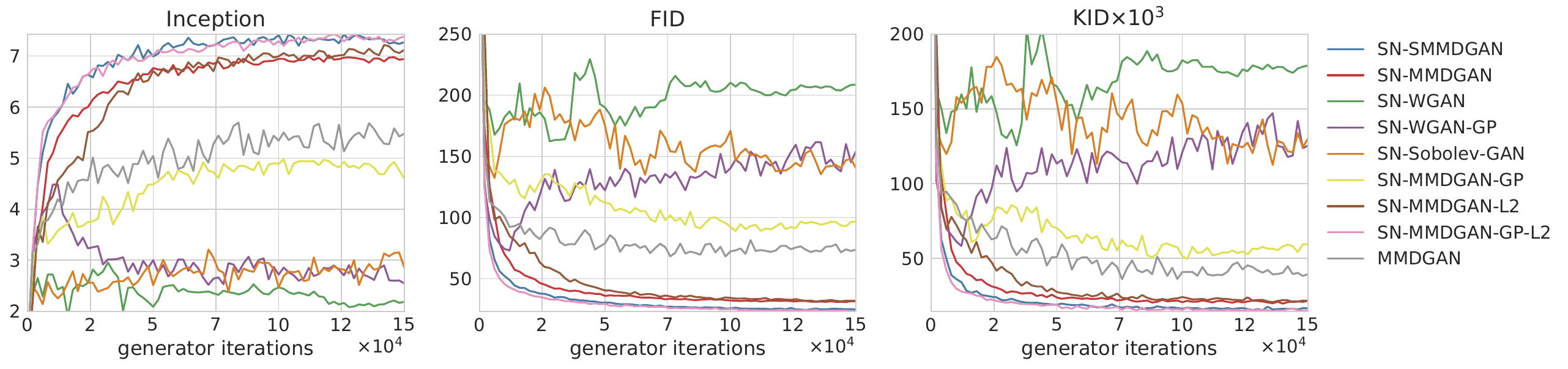}

        \caption{Evolution per iteration of different scores for variants of methods, mostly using spectral normalization, on CIFAR-10.}
       \label{fig:score_per_iter_cifar10_sn}
\end{figure}

\begin{table}[ht]
    \centering
 \caption{Mean (standard deviation) of score evaluations on CIFAR-10 for different methods using Spectral Normalization.}
    \label{tab:sn_cifar10_scores}
  \begin{tabular}{llll}
\toprule
{} &                                    IS &                                    FID &                       KID$\times 10^3$ \\
Method          &                                       &                                        &                                        \\
\midrule
MMDGAN          &             5.5$\mathsmaller{\pm}$0.0 &             73.9$\mathsmaller{\pm}$0.1 &             39.4$\mathsmaller{\pm}$1.5 \\
SN-WGAN         &             2.2$\mathsmaller{\pm}$0.0 &            208.5$\mathsmaller{\pm}$0.2 &            178.9$\mathsmaller{\pm}$1.5 \\
SN-WGAN-GP      &             2.5$\mathsmaller{\pm}$0.0 &            154.3$\mathsmaller{\pm}$0.2 &            125.3$\mathsmaller{\pm}$0.9 \\
SN-Sobolev-GAN  &             2.9$\mathsmaller{\pm}$0.0 &            140.2$\mathsmaller{\pm}$0.2 &            130.0$\mathsmaller{\pm}$1.9 \\
SN-MMDGAN-GP    &             4.6$\mathsmaller{\pm}$0.1 &             96.8$\mathsmaller{\pm}$0.4 &             59.5$\mathsmaller{\pm}$1.4 \\
SN-MMDGAN-L2    &             7.1$\mathsmaller{\pm}$0.1 &             31.9$\mathsmaller{\pm}$0.2 &             21.7$\mathsmaller{\pm}$0.9 \\
SN-MMDGAN       &             6.9$\mathsmaller{\pm}$0.1 &             31.5$\mathsmaller{\pm}$0.2 &             21.7$\mathsmaller{\pm}$1.0 \\
SN-MMDGAN-GP-L2 &             6.9$\mathsmaller{\pm}$0.2 &             32.3$\mathsmaller{\pm}$0.3 &             20.9$\mathsmaller{\pm}$1.1 \\
SN-SMMDGAN      &  $\textbf{7.3$\mathsmaller{\pm}$0.1}$ &  $\textbf{25.0$\mathsmaller{\pm}$0.3}$ &  $\textbf{16.6$\mathsmaller{\pm}$2.0}$ \\
\bottomrule
\end{tabular}

\end{table}

\subsection{Additional samples} \label{appendix:additional-samples}
\cref{fig:imagenet_additional,fig:celebA_samples_additional} give extra samples from the models.

\begin{figure}
        \centering
        \includegraphics[width=\linewidth]{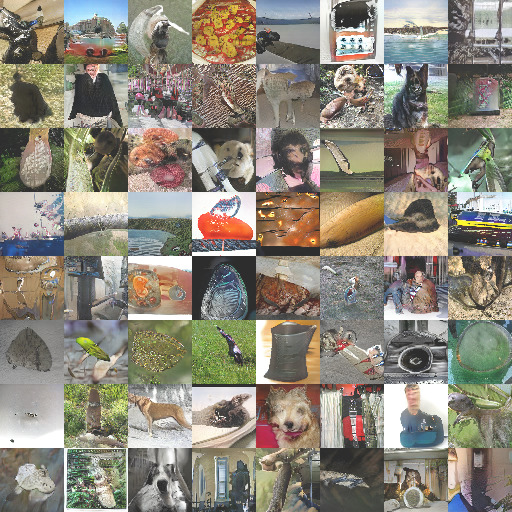}
        \caption{Samples from a generator trained on ImageNet dataset using Scaled MMD with Spectral Normalization: SN-SMMDGAN.  }
       \label{fig:imagenet_additional}
\end{figure}

\begin{figure}[ht!]
    \centering
    \begin{subfigure}[t]{0.48\textwidth}
        \centering
        \includegraphics[width=\linewidth]{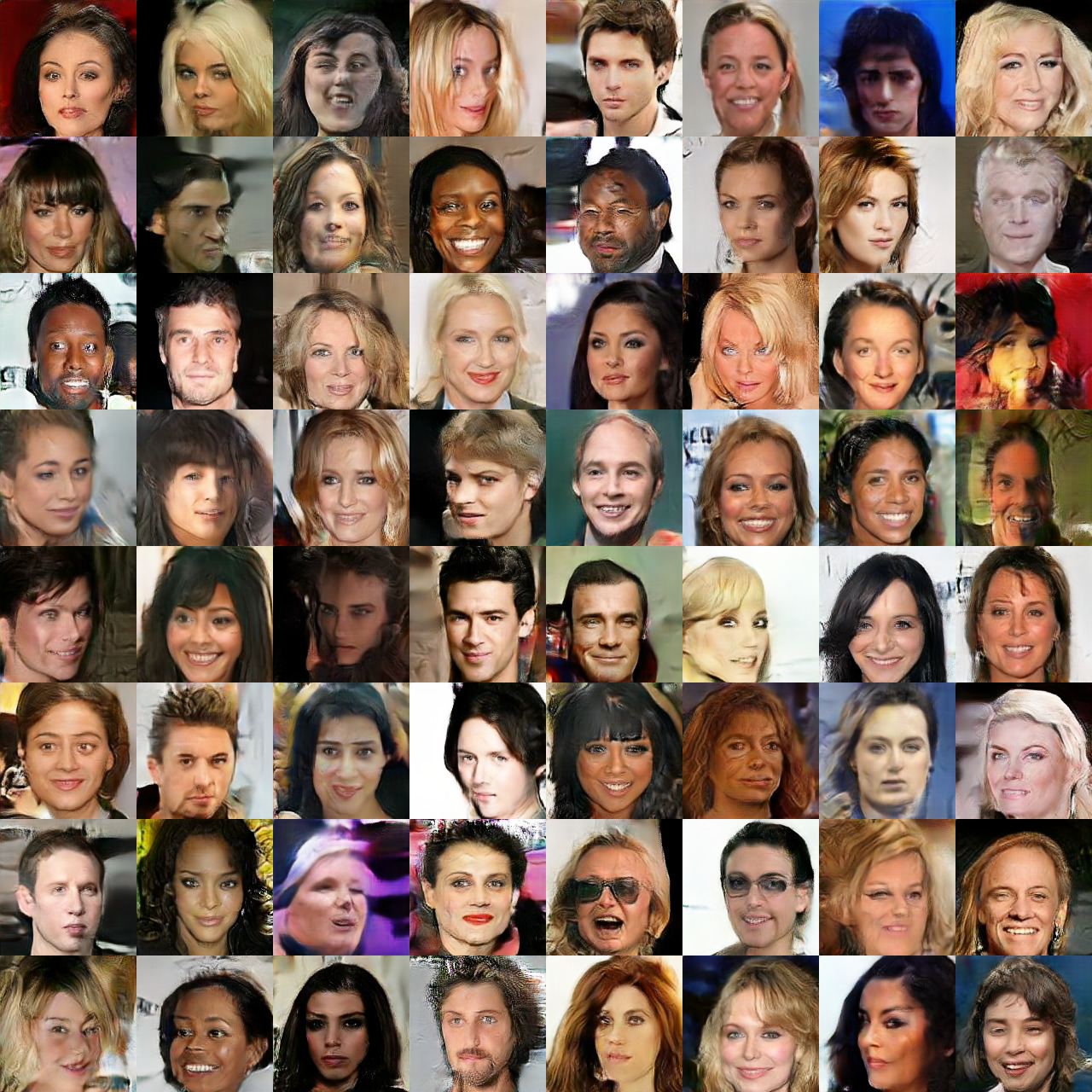}
        \caption{SNGAN} \label{fig:celebA_sngan:samples}
    \end{subfigure}
    ~
    \begin{subfigure}[t]{0.48\textwidth}
        \centering
        \includegraphics[width=\linewidth]{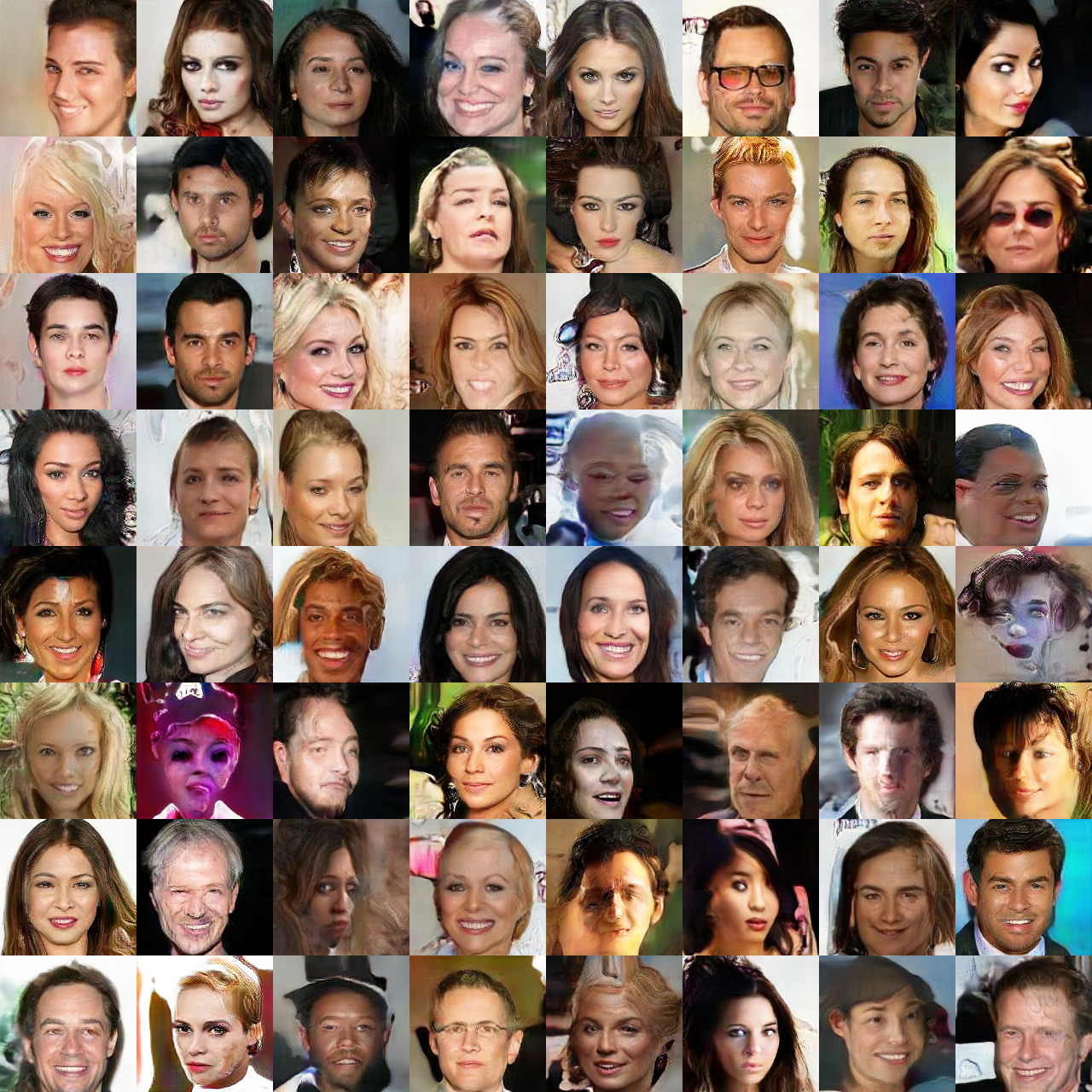}
        \caption{SobolevGAN} \label{fig:celebA_sobolev_gan:samples}
    \end{subfigure}
   \vspace{0cm}
    \begin{subfigure}[t]{0.48\textwidth}
        \centering
        \includegraphics[width=\linewidth]{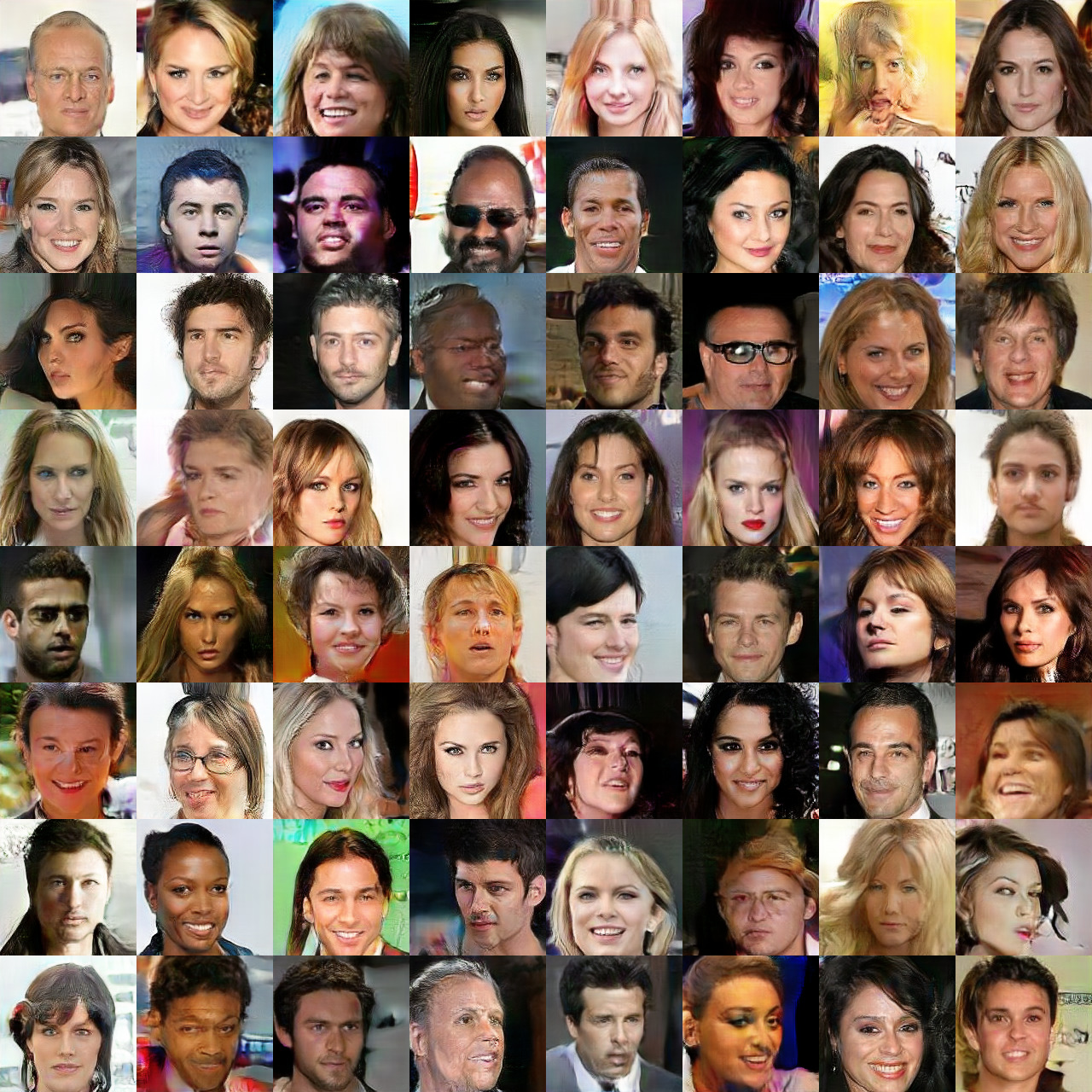}
        \caption{MMDGAN-GP-L2} \label{fig:celebA_mmd_gp_l:samples}
    \end{subfigure}
    ~
    \begin{subfigure}[t]{0.48\textwidth}
        \centering
        \includegraphics[width=\linewidth]{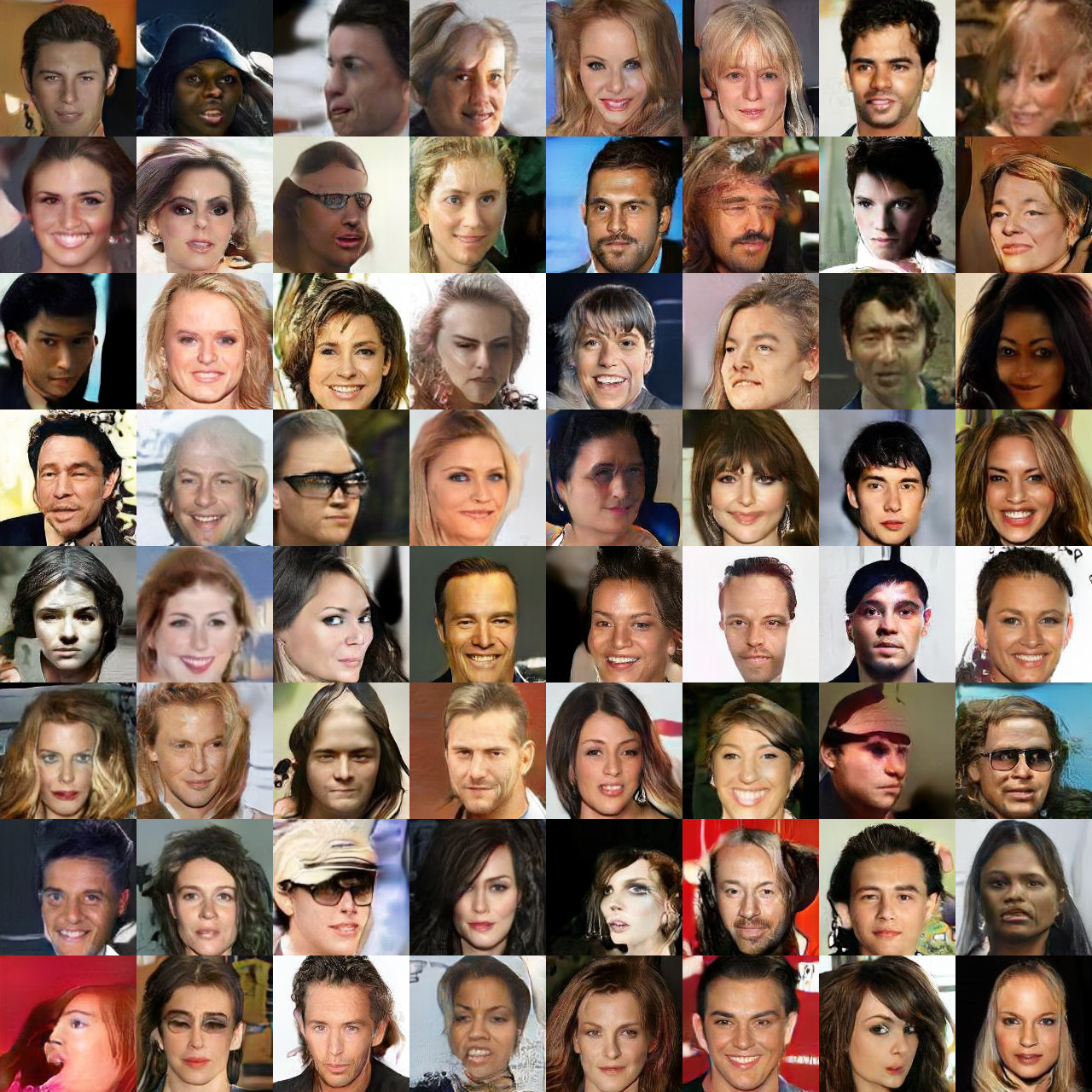}
        \caption{SN-SMMD GAN} \label{fig:celebA_sn_smmd:samples}
    \end{subfigure}
     \vspace{0cm}
    \begin{subfigure}[t]{0.48\textwidth}
        \centering
        \includegraphics[width=\linewidth]{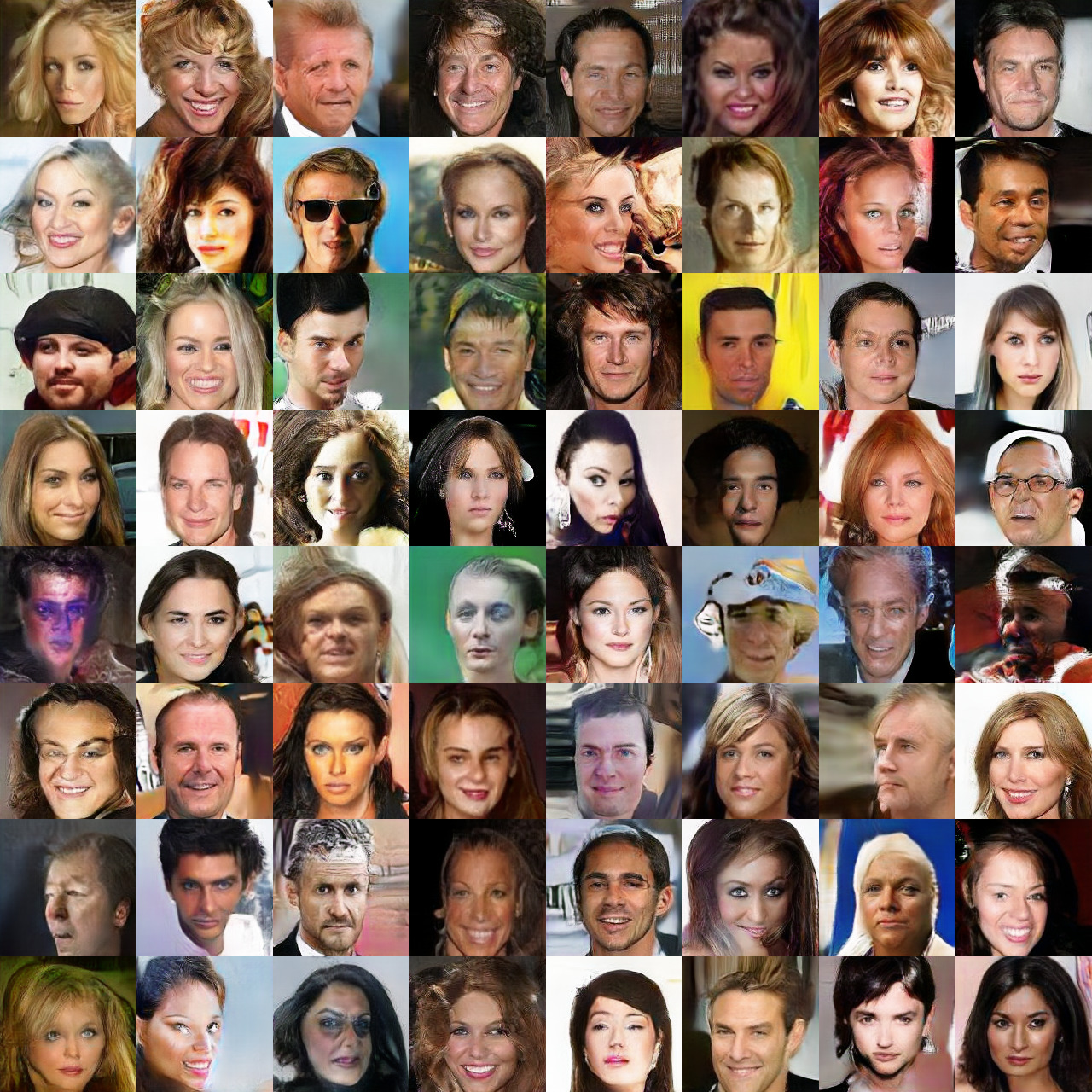}
        \caption{SN SWGAN} \label{fig:celebA_sn_swgan:samples}
    \end{subfigure}
    ~
    \begin{subfigure}[t]{0.48\textwidth}
        \centering
        \includegraphics[width=\linewidth]{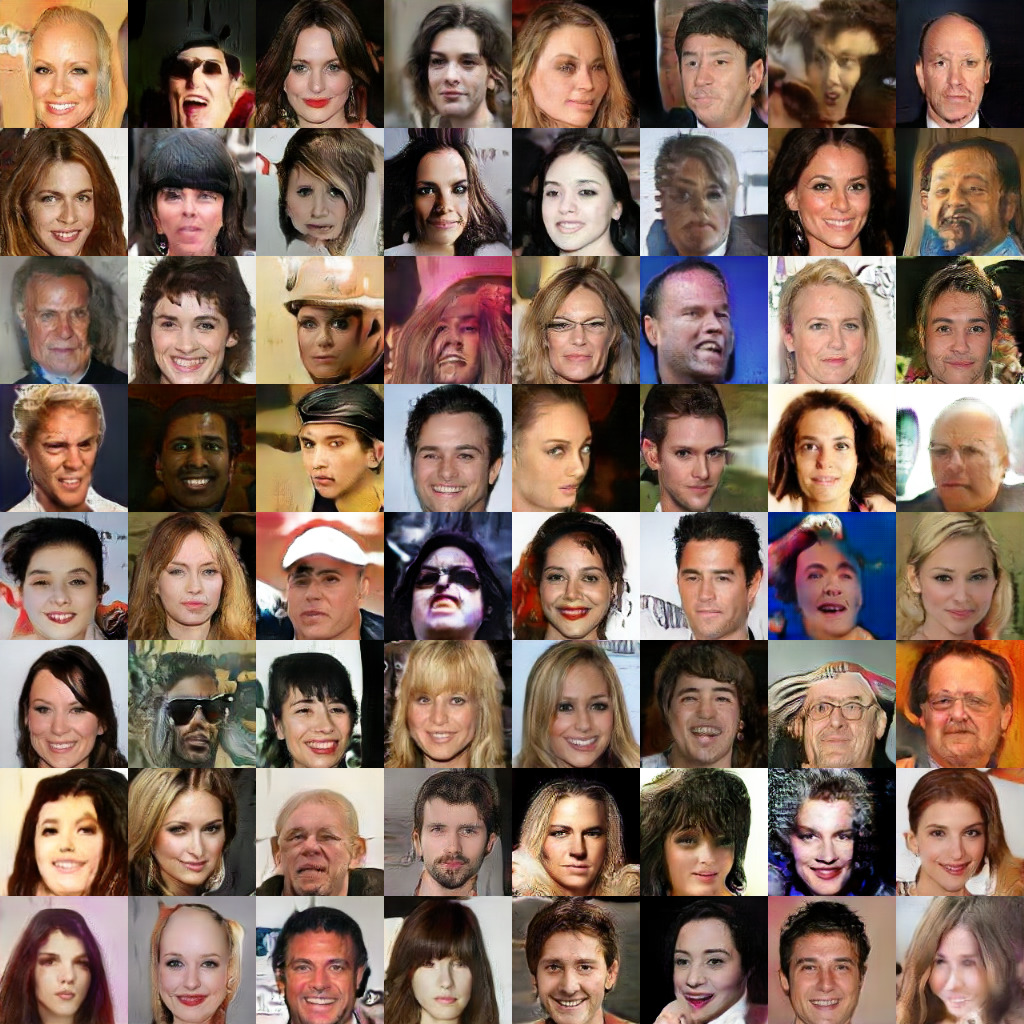}
        \caption{SMMD GAN} \label{fig:celebA_smmd:samples}
    \end{subfigure}
    \caption{Comparison of samples from different models trained on CelebA with $160\times 160$ resolution.}
    \label{fig:celebA_samples_additional}
\end{figure}

\end{document}